\definecolor{commentgray}{rgb}{0.5, 0.5, 0.5}
\tiny\color{black},
\newcolumntype{L}[1]{>{\raggedright\let\newline\\\arraybackslash\hspace{0pt}}m{#1}}
\newcolumntype{C}[1]{>{\centering\let\newline\\\arraybackslash\hspace{0pt}}m{#1}}
\newcolumntype{R}[1]{>{\raggedleft\let\newline\\\arraybackslash\hspace{0pt}}m{#1}}
\newcommand{\ignorethis}[1]{}
\DeclareMathOperator*{\argmin}{arg\,min}
\DeclareRobustCommand\onedot{\futurelet\@let@token\@onedot}
\def\@onedot{\ifx\@let@token.\else.\null\fi\xspace}
\def\adl@drawiv#1#2#3{%
        \hskip.5\tabcolsep
        \xleaders#3{#2.5\@tempdimb #1{1}#2.5\@tempdimb}%
                #2\z@ plus1fil minus1fil\relax
        \hskip.5\tabcolsep}
\newcommand{\cdashlinelr}[1]{%
  \noalign{\vskip\aboverulesep
           \global\let\@dashdrawstore\adl@draw
           \global\let\adl@draw\adl@drawiv}
  \cdashline{#1}
  \noalign{\global\let\adl@draw\@dashdrawstore
           \vskip\belowrulesep}}
\definecolor{citecolor}{HTML}{0071bc}
\definecolor{mydarkblue}{rgb}{0,0.08,1}
\definecolor{mydarkgreen}{rgb}{0.02,0.6,0.02}
\definecolor{mydarkred}{rgb}{0.8,0.02,0.02}
\definecolor{mydarkorange}{rgb}{0.40,0.2,0.02}
\definecolor{mypurple}{RGB}{111,0,255}
\definecolor{myred}{rgb}{1.0,0.0,0.0}
\definecolor{mygold}{rgb}{0.75,0.6,0.12}
\definecolor{mydarkgray}{rgb}{0.66, 0.66, 0.66}
\definecolor{darkblue}{rgb}{0,0.08,1}
\definecolor{darkgreen}{rgb}{0.02,0.6,0.02}
\definecolor{darkred}{rgb}{0.8,0.02,0.02}
\definecolor{darkorange}{rgb}{0.40,0.2,0.02}
\definecolor{darkpurple}{RGB}{111,0,255}
\def\model{S$^2$FT\xspace}
\def\oW{\overline{W}}
\def\uW{\underline{W}}
\def\oG{\overline{G}}
\def\pre{\textnormal{pre}}
\def\lora{\textnormal{LoRA}}
\def\sft{\textnormal{\model}}
\def\op{\textnormal{op}}
\def\F{\textnormal{F}}
\def\id{\textnormal{i}}
\def\ood{\textnormal{o}}
\def\spn{\textnormal{span}}
\def\column{\textnormal{colspan}}
\def\full{\textnormal{full}}
\def\bias{\textnormal{bias}}
\def\variance{\textnormal{variance}}
\def\R{\mathbb{R}}
\def\P {\mathbb{P}}
\def\E {\mathbb{E}}
\def\N{\mathbb{N}^+}
\newcommand{\SVD}{\textrm{SVD}}
\newcommand{\diag}{\operatorname{diag}}
\theoremstyle{plain}
\newtheorem{theorem}{Theorem}[section]
\newtheorem{lemma}[theorem]{Lemma}
\newtheorem{corollary}[theorem]{Corollary}
\theoremstyle{definition}
\newtheorem{assumption}[theorem]{Assumption}
\theoremstyle{remark}
\newtheorem{remark}[theorem]{Remark}
\title{S$^{2}$FT: Efficient, Scalable and Generalizable LLM Fine-tuning by Structured Sparsity}
\author{
    \vspace{-2.5em}\\
  \textbf{Xinyu Yang$^{1}$, Jixuan Leng$^{1}$, Geyang Guo$^{2}$,  Jiawei Zhao$^{3}$, Ryumei Nakada$^{4}$,} \\
  \textbf{Linjun Zhang$^{4}$, Huaxiu Yao$^{5}$, Beidi Chen$^{1}$}\\
  $^{1}$CMU, $^{2}$Georgia Tech, $^{3}$Caltech, $^{4}$Rutgers, $^{5}$UNC-Chapel Hill\\
  \texttt{xinyuya2, beidic@andrew.cmu.edu}\\\\
\vspace{-1.5em}\\
\url{https://infini-ai-lab.github.io/S2FT-Page}
}
\begin{document}

\maketitle

\vspace{-1.5em}
\begin{abstract}
\vspace{-1em}
Current PEFT methods for LLMs can achieve high quality, efficient training, or scalable serving, but not all three simultaneously.  
To address this limitation, we investigate sparse fine-tuning and observe a remarkable improvement in generalization ability. 
Utilizing this key insight, we propose a family of \underline{S}tructured \underline{S}parse \underline{F}ine-\underline{T}uning (\textbf{\model}) methods for LLMs, which \textit{concurrently achieve state-of-the-art fine-tuning performance, training efficiency, and inference scalability}. \model accomplishes this by ``selecting sparsely and computing densely". Based on the coupled structures in LLMs, \model selects a few attention heads and channels in the MHA and FFN modules for each Transformer block, respectively. Next, it co-permutes the weight matrices on both sides of all coupled structures to connect the selected subsets in each layer into a dense submatrix. Finally, \model performs in-place gradient updates on all selected submatrices.
Through theoretical analyses and empirical results, our method prevents forgetting while simplifying optimization, delivers SOTA performance on both commonsense and arithmetic reasoning with 4.6$\%$ and 1.3$\%$ average improvements compared to LoRA, and surpasses full FT by 11.5$\%$ when generalizing to various domains after instruction tuning. 
Using our partial back-propagation algorithm, \model saves training memory up to 3$\times$ and improves latency by 1.5-2.7$\times$ compared to full FT, while achieving an average 10\% improvement over LoRA on both metrics. We further demonstrate that the weight updates in \model can be decoupled into adapters, enabling effective fusion, fast switch, and efficient parallelism when serving multiple fine-tuned models.

\end{abstract}
\vspace{-1em}

\section{Introduction}
\label{sec:introduction}
\vspace{-0.8em}

Recently, Large Language Models (LLMs) have achieved significant success~\cite{llama3, achiam2023gpt, team2023gemini}. With these models being applied in diverse domains, full fine-tuning (FT) is commonly employed to enhance their downstream capabilities~\cite{roziere2023code, azerbayev2023llemma, yunxiang2023chatdoctor}.
However, \mbox{retraining all parameters comes with three} drawbacks: (i) Full FT suffers from catastrophic forgetting, where a model \mbox{forgets pre-trained knowledge while} acquiring new information~\cite{luo2023empirical, biderman2024lora}. (ii) As the model and dataset sizes grow at scale, full FT becomes increasingly computation-demanding and memory-intensive~\cite{xu2023parameter}. (iii) It is impractical to store and serve thousands of fine-tuned LLMs on modern GPUs if each requires full parameter storage~\cite{zhao2024lora, slora}.

Parameter-efficient fine-tuning (PEFT) methods propose to address these bottlenecks by updating a small fraction of parameters~\cite{han2024parameter}.  Rather than merely reducing the number of learnable parameters, an ideal PEFT method should possess three key properties to be practically effective and efficient:
\vspace{-0.2em}

\textbf{High Quality}: It should exhibit both memorization and generalization capabilities, balancing the acquisition of new information from fine-tuning tasks with the retention of pre-trained knowledge.
\vspace{-0.3em}

\textbf{Efficient Training}: It should minimize the memory footprint for model gradient and optimization states, and further translate such memory efficiency into less computation and fine-tuning speedup.
\vspace{-0.3em}

\textbf{Scalable Serving}: It should avoid adding inference overhead when serving a single PEFT model. For multiple models, new parameters should be partially stored as adapters to save memory, and allows for effective fusion~\cite{zhang2023composing}, fast switch~\cite{kong2024lora}, and efficient parallelism~\cite{slora} among thousands of adapters.

However, achieving all the aforementioned goals simultaneously is challenging. Common PEFT approaches, such as LoRA~\cite{lora}, DoRA~\cite{dora}, and Galore~\cite{zhao2024galore}, project the model's weights or gradients onto a low-rank subspace. While this significantly reduces memory footprint, their performance lags behind full fine-tuning in most large-scale scenarios. Recent state-of-the-art PEFT methods have aimed to improve performance but at the cost of serving efficiency. ReFT operates on a frozen base model and learns task-specific interventions on hidden representations that cannot be merged into the original model, leading to a $2.2 \times$ increase in inference latency. LISA~\cite{pan2024lisa} employs a coarse-grained selective method by randomly freezing most Transformer blocks during optimization, which requires significantly more trainable parameters. Consequently, in scaled serving settings like S-LoRA~\cite{slora}, LISA can only serve at most $\frac{1}{10}$ as many fine-tuned models as LoRA under the same memory budget.

\begin{figure}[t]
    \centering
    \vspace{-1.8em}
    \includegraphics[width=\linewidth]{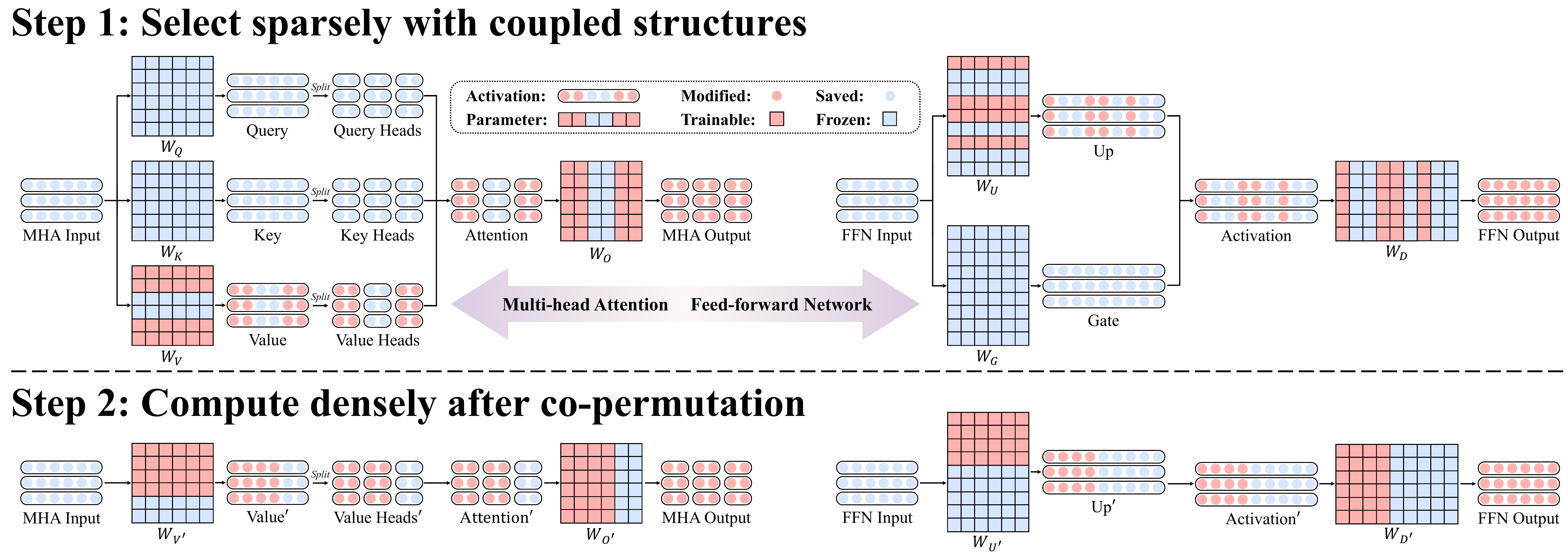}
    \caption{\textbf{An Overview of the \model Family for LLMs}: First, we perform sparse selection of specific attention heads and channels within the coupled structures of the MHA and FFN modules. Next, we apply co-permutation to the weight matrices on both sides of these structures, enabling dense gradient computation only for the selected components. While we demonstrate \model by selecting the same heads/channels on both sides for clarity, our approach also  supports asymmetric selection strategies.}
    \label{fig:intro}
    \vspace{-3em}
\end{figure}

Prior to the era of LLMs, PEFT methods based on unstructured sparse fine-tuning (SpFT) have shown a strong trade-off between low number of parameters and high model performance without sacrificing serving efficiency~\cite{fishmask, lt-sft, xu2021raise}. We hypothesize that SpFT, which selectively updates a small subset of model parameters, can outperform LoRA and its variants in generalization capabilities. In Figure~\ref{fig:observation}, our findings across various generalization tasks support this hypothesis. However, the unstructured nature of SpFT  necessitates sparse operations in computation, hindering its efficient training and scalable serving on modern hardware. This makes SpFT less practical for adapting LLMs at scale. 

In this work, we propose a family of \underline{S}tructured \underline{S}parse \underline{F}ine-\underline{T}uning (\textbf{\model}) methods to ``select sparsely and compute densely" (See Figure~\ref{fig:intro}), thereby closing the efficiency gap in SpFT. Inspired by structured weight pruning techniques~\cite{ma2023llm, liu2023deja}, we first identify several coupled structures inherent in LLMs that are connected by intermediate activations. For example, in the multi-head attention (MHA) module, each attention head in the query, key, and value projections is linked to only a few rows in the output projection. Similarly, in the feed-forward network (FFN) module, each column in the up and gate projections corresponds to a single row in the down projection. By co-permuting the matrices on both sides of these coupled structures, we can preserve the original output of these structures, with only the order of the intermediate activations changed. Exploiting this property, our \model strategically selects a subset of attention heads for the MHA module and a subset of channels for the FFN module. We then permute the coupled structures to connect the selected components within each linear layer into a dense submatrix. Finally, through our partial back-propagation algorithm with only two-line code modification, \model performs in-place gradient updates exclusively \mbox{for all selected submatrices,} boosting training efficiency by eliminating redundant forward activations and backward calculation.

Through our theoretical analysis, \model mitigates forgetting under distribution shifts while simplifying optimization. 
Empirically, \model outperforms other PEFT methods on LLaMA and Mistral family models, improving 1.2-4.1\% on commonsense reasoning tasks and 0.6-1.9\% on arithmetic reasoning ones. It also surpasses full FT by 11.5\% when generalize to various domains after instruction
tuning.

Finally, we conduct a comprehensive analysis to verify the training efficiency and serving scalability of \model. Compared to existing PEFT methods, \model not only saves 1.4-3.0$\times$ memory, but also increases latency by 1.5 to 2.7$\times$, making LLM fine-tuning more accessible. Additionally, \model's parameter updates can be decomposed into adapters, enabling adapter fusion with smaller performance drop than LoRA. Our method also results in more scalable and efficient adapter switch and parallelism through \mbox{reduced matrix multiplications, showcasing strong potential for large-scale LLM serving scenarios.}

\vspace{-1.2em}
\section{Memorization or Generalization?}
\label{sec:observation}
\vspace{-0.8em}
In this section, we evaluate the memorization and generalization capabilities of various fine-tuning methods, including full FT, LoRA, and SpFT. We hypothesize that SpFT can generalize better to downstream tasks. To support this hypothesis, we present detailed observations and analyses. Further theoretical analysis about the generalization capabilities of the \model family can be found in Section~\ref{sec:theory}.

\vspace{-0.2em}
\textbf{Hypothesis.} We hypothesize that SpFT offers superior generalization than both full FT and LoRA, while maintaining comparable memorization to LoRA with the same number of trainable parameters.

\vspace{-0.2em}
\textbf{Experimental Setup.} We fine-tune the \texttt{Llama3-8B}
on the Math10K data~\cite{llm-adapters} using SpFT, LoRA, and full FT. In addition to training losses, accuracies are measured on downstream tasks in LLM-Adapters, including near out-of-distribution (OOD) generalization on both easy (i.e, MultiArith, AddSub, SingleEq, MAWPS) and hard (i.e, GSM8K, AQuA, SVAMP) arithmetic reasoning tasks, and far OOD generalization on commonsense reasoning ones. For PEFT methods, we set three ratios of \mbox{trainable parameters  ($p=10\%, 1\%, 0.1\%$) and search for the optimal hyperparameters on the valid} set. In SpFT, trainable parameters are selected randomly with given ratios. See details in Appendix~\ref{app:observation}.
\vspace{-1.7em}
\begin{figure}[!ht]
    \centering
\includegraphics[width=\linewidth]{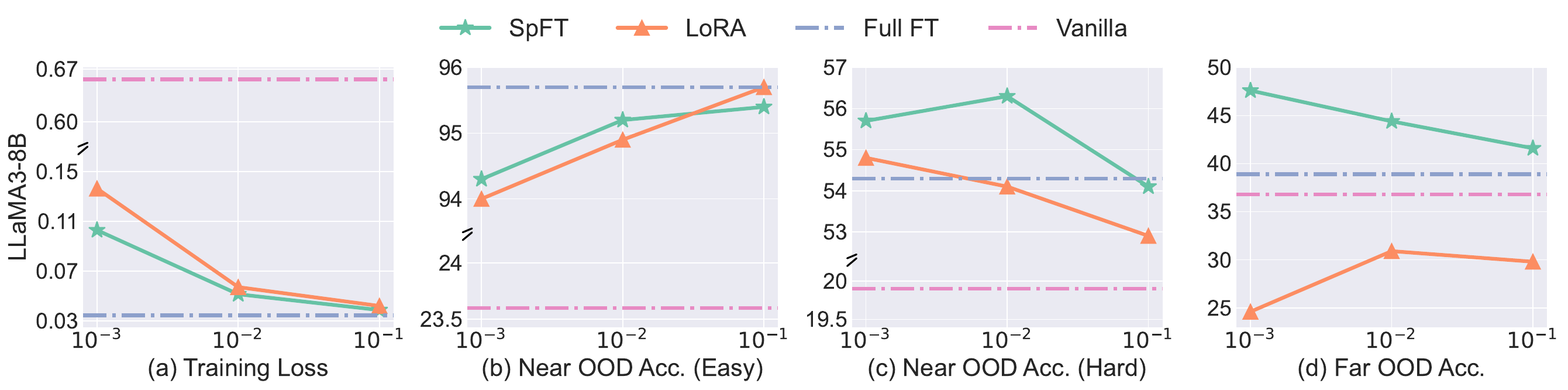}
    \caption{Accuracy comparison of SpFT, LoRA and Full FT at varying ratios of trainable parameters in various settings. SpFT exhibits strong generalization ability while full FT excels in memorization.}
    \label{fig:observation}
\end{figure}
\vspace{-1.5em}

\textbf{Observations.} Figure~\ref{fig:observation} indicates several key findings. First, SpFT achieves lower training losses than LoRA when using the same ratio of trainable parameters, especially at very small ratios. This gap arises from the more complex optimization process in LoRA, which requires the simultaneous updating of two matrices~\cite{hayou2024lora+}. Second, we observe both elevated training loss and reduced average accuracy on easier math tasks as the ratio decreases, suggesting a positive correlation between memorization abilities and trainable parameters. Notably, with only 10\% of the parameters updated, PEFT \mbox{methods learn comparable memorization abilities to full FT when trained on a 10k-sample dataset.}

\vspace{-0.2em}
When generalizing to complex mathematical problems or commonsense reasoning tasks, the performance ranking emerges as: SpFT $>$ Full FT $>$ LoRA. SpFT effectively transfers reasoning abilities to commonsense domains, while LoRA exhibits significant performance drops in far OOD generalization. This indicates (i) freezing a larger fraction of the parameters can retain more pre-trained abilities, and (ii) approximating high-dimensional gradients with low-rank decomposition may overfit fine-tuned data and hinder the model from generalization.
Since LLMs are pre-trained on high-quality data, SpFT emerges as the preferred choice for fine-tuning on task-specific data of varying quality.
\vspace{-0.5em}

\vspace{-0.4em}
\section{The S$^{2}$FT family of methods}
\label{sec:method}
\vspace{-0.8em}

While SpFT demonstrates strong generalization ability and good overall performance in Section~\ref{sec:observation}, its unstructured nature poses challenges for efficient training and scalable serving on modern hardware (e.g., GPU). This is because of the need for sparse operations when storing and computing weights, gradients, and optimization states, which are significantly slower than their dense variants on GPU. This motivates our investigation into structured sparsity approaches that utilize only dense operations:

\vspace{-0.4em}
\emph{Can structured sparsity improve hardware efficiency while preserving performance by selecting sparsely but computing densely? If so, how far can the flexibility of selection be pushed in \mbox{this context?}}
\vspace{-1.4em}

To answer this question, we design a family of \underline{S}tructured \underline{S}parse \underline{F}ine-\underline{T}uning (\textbf{\model}) methods with dense-only computations, making PEFT effective, efficient and scalable. We begin by discovering the coupled structure in LLMs in Section~\ref{sec:3.1}. Leveraging this property, Section~\ref{sec:3.2} introduce the selection and permutation strategies of \model, with overall pipeline illustrated in Figure~\ref{fig:intro}\textcolor{red}{b}. In Section~\ref{sec:3.3}, we present our partial back-propagation algorithm that enables end-to-end training latency reduction.

\vspace{-0.8em}
\subsection{Discover Coupled Structures in LLMs}
\label{sec:3.1}
\vspace{-0.5em}

We initiate our pursuit of flexible structured sparsity by examining the coupled structures in LLMs.
\vspace{-0.2em}

\begin{figure}[h]
    \vspace{-1.5em}
    \centering
    \begin{subfigure}{0.42\textwidth}
        \centering
        \includegraphics[width=\textwidth]{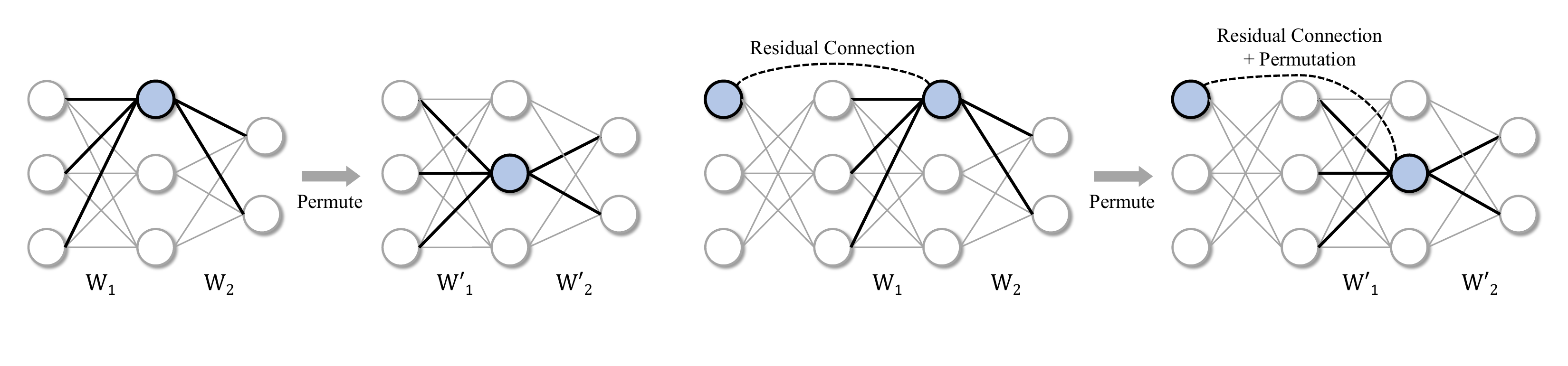}
        \caption{Basic Structure}
        \label{fig:structure1}
    \end{subfigure}
    \hfill
    \begin{subfigure}{0.54\textwidth}
        \centering
        \includegraphics[width=\textwidth]{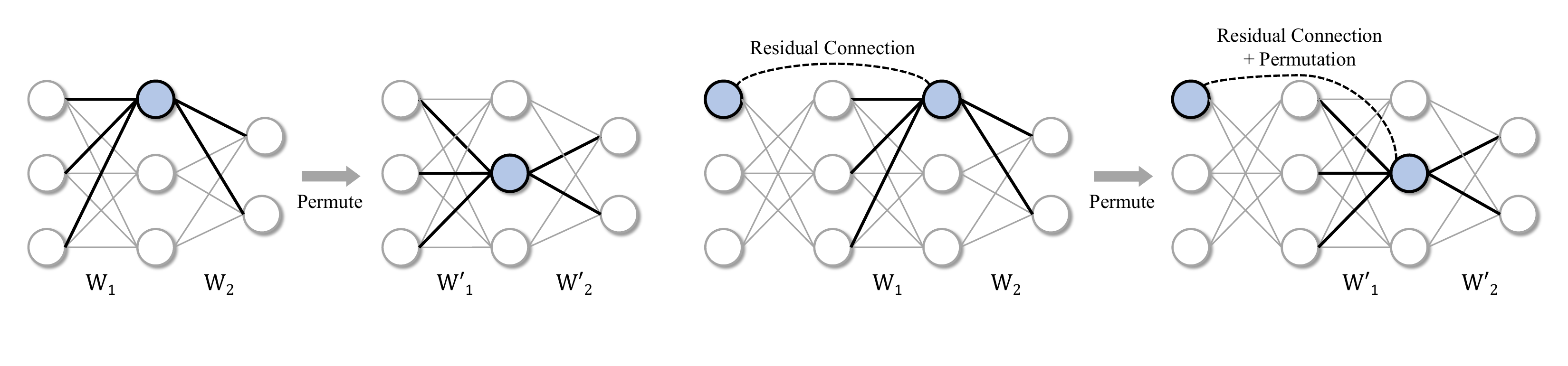}
        \caption{Residual Structure}        \label{fig:structure2}
    \end{subfigure}
    \vspace{-0.2em}
    \caption{Grouped model weights with basic structure and residual structure. All highlighted weights must be permuted simultaneously. Residual structures require additional permutation during runtime.}
    \vspace{-1.4em}
    \label{fig:structure}
\end{figure}
\vspace{-0.2em}
\textbf{Structure Dependency in LLMs.} Inspired by prior work on structured pruning~\cite{ma2023llm, fang2023depgraph},  our study start by building the dependencies between activations and weights for LLMs. Let $A$ denote an activation and $W$ denote a weight in the model. We define $\operatorname{In}(A)$ as the set of parameters that directly contribute to the computation of $A$, and $\operatorname{Out}(A)$ as the set of parameters that depend on $A$ in the computation of \mbox{subsequent activations. The dependency between structures can be defined as follows:}
\begin{align}
W_1 \in \operatorname{In}(A) \wedge \operatorname{Deg}^+(W_1) = 1 &\Rightarrow A \text{ is dependent on } W_1 \\
W_2 \in \operatorname{Out}(A) \wedge \operatorname{Deg}^-(W_2) = 1 &\Rightarrow W_2 \text{ is dependent on } A
\end{align}
where $\operatorname{Deg}^+(W_1)$ represents the out-degree of weight $W_1$, and $\operatorname{Deg}^-(W_2)$ represents the in-degree of weight $W_2$. Each equation represents a unqiue directional dependency between activations and weights. When both equations hold simultaneously, a coupled structure exists between $W_1$ and $W_2$. In Figure~\ref{fig:structure}, we employ deep linear networks to illustrate two types of coupled structures in LLMs: 

\vspace{-0.2em}
\textit{Basic Structures}: In Figure~\ref{fig:structure1}, these structures exist in both the multi-head attention (MHA) and feed-forward network (FFN) modules. Taking LLaMA as an example, in the MHA module, we consider the Query ($\bm{Q}$), Key ($\bm{K}$), and Value ($\bm{V}$) projections as $W_1$, and the Output ($\bm{O}$) projection \mbox{as $W_2$, while $\texttt{Softmax}(\bm{Q}\bm{K}^\top)\bm{V}(x)$ acting as the activation between weight matrices. Similarly, in} the FFN module, the Up ($\bm{U}$) and Gate ($\bm{G}$) projections function as $W_1$, with the Down ($\bm{D}$) projection corresponding to $W_2$. Here, $\bm{U}(x) \cdot \texttt{SwiGLU}(\bm{G}(x))$ serves as the activations connecting $W_1$ and $W_2$.

\vspace{-0.2em}
\textit{Residual Structures}: In Figure~\ref{fig:structure2}, this type of coupled structures exists between the MHA and FFN modules. We further consider how residual connections influence the activations in these structures.

\vspace{-0.2em}
\textbf{Permutation Invariance of Coupled Structures.} Figure~\ref{fig:structure} demonstrates that $W_1$ and $W_2$ can be co-permuted using the same order, which only affects the order of activations between them while preserving the original output from the coupled structure. Since residual dependencies require an additional run-time step to permute the residuals, we will focus on basic dependencies in our method.

\vspace{-0.6em}
\subsection{Sparse Selection and Permutation}
\label{sec:3.2}
\vspace{-0.6em}

At this point, all coupled structures within the model have been identified. The subsequent sparse selection and permutation processes are straightforward, with overall pipeline illustrated in Figure~\ref{fig:intro}\textcolor{red}{b}.

\vspace{-0.2em}
\textbf{MHA Module}: There are four linear layers in a MHA module: $Q, K, V, O \in \R^{d \times d}$. For a model with $h$ attention heads, each head $i \in [h]$ has its own projections denoted as $Q_i \in \mathbb{R}^{d \times d_h}$, $K_i \in \mathbb{R}^{d \times d_h}$, $V_i \in \mathbb{R}^{d \times d_h}$, and $O_i \in \mathbb{R}^{d_h \times d}$, where $d_h = d/h$ is the dimension per head. Let $S_{\text{MHA}} \subseteq [h]$ denote a small subset of attention heads. By permuting $S_{\text{MHA}}$ to the beginning of each weight matrix, we are able to update these selected heads using dense-only operations, while keeping the other ones frozen.

\vspace{-0.2em}
\textbf{FFN Module}: There are three linear layers in an FFN module: $U, G \in \R^{k \times d}$ and $D \in \R^{d \times k}$. In \model, only a few channels require gradient updates. Let $S_{\text{FFN}} \subseteq [d]$ denote the selected channels. We can permute $S_{\text{FFN}}$ to the beginning of each weight matrix and only fine-tune this compact subset.

\vspace{-0.2em}

Next, we provide several strategies for identifying and selecting important subsets in each module.
\begin{enumerate}    
[itemsep=0.0pt,topsep=0pt,leftmargin=*]
    \item \textbf{\model-R (\model)}: In this strategy, a subset of channels is randomly selected and set to be trainable.
    \item \textbf{\model-W}: This variant selects subsets based on the magnitude of the weights for linear layers.
    \item \textbf{\model-A}: This variant selects subsets based on the magnitude of activations on a calibration set.
    \item \textbf{\model-S}: Top-K subsets are ranked and selected by the product of weight and activation magnitudes.
    \item \textbf{\model-G}: This variant selects subsets based on the magnitude of gradients on a calibration set.
\end{enumerate}

\vspace{-0.2em}
Here, 1 and 2 can be applied directly without pre-processing. 3 and 4 only require a forward pass on a small calibration dataset. While 5 necessitates a\mbox{ backward pass on this dataset, it does not} store optimization states and can mitigate memory footprints for activations through gradient checkpointing~\cite{feng2021optimal}. By default, we use \model-R for a fair comparison and discuss other variants in Section~\ref{sec:5.4}.

\vspace{-0.5em}
\subsection{Partial Back-propagation Algorithm}
\label{sec:3.3}
\vspace{-0.7em}

Finally, we introduce our partial back-propagation algorithm with only two line modifications in PyTorch. our algorithm stores trainable channels based on their start and end positions, thereby improving training efficiency by eliminating redundant forward activations and backward calculations.

\begin{lstlisting}[language=Python, numbers=none, basewidth={0.5em,0.5em}]
def setup_context(ctx, inputs, output):
   activation, weight, bias, start, end = inputs
   # only save partial input tensors for gradient calculation in forward
   ctx.save_for_backward(activation[:, start:end], weight, bias, start, end)

def gradient_update(parameter, gradient, start, end):
    # only modify the assigned positions of weight matrices during optimization 
    parameter[:, start:end].add_(gradient)
\end{lstlisting}
\vspace{-1em}

\section{Theoretical Analysis}
\vspace{-0.6em}
\label{sec:theory}
\vspace{-0.2em}

In this section, we theoretically explain why \model demonstrates stronger generalization capabilities compared to LoRA. Following previous work~\citep{hayou2024lora+, zhang2023lora, ponkshe2024initialization, po2024sbora}, we further show that \model is simple and efficient in optimization by maintaining stability in both the magnitude and direction of updates.

\vspace{-0.6em}
\subsection{Stronger Generalization Capability}
\vspace{-0.6em}

First, we theoretically explore why \model demonstrates stronger generalization capabilities compared to LoRA. We consider a pre-trained $L$-layer deep linear network, which has been widely used to facilitate the theoretical analysis of complex DNNs~\citep{saxe2013exact,kawaguchi2016deep,lu2017depth,hardt2016identity,laurent2018deep,arora2019implicit}. Let $f^\pre(x) := W_L^\pre W_{L-1}^\pre \dots W_1^\pre x$ be the pre-trained deep linear network, where $W_\ell^\pre \in \R^{d_\ell \times d_{\ell-1}}$, with $d_0 = p$ and $d_L = q$. 
We fine-tune the $\ell$-th layer with low-rankness level $r \leq \min\{d_\ell, d_{\ell-1}\}$ or sparsity level $s = \lfloor r \cdot \frac{d_\ell + d_{\ell-1}}{d_{\ell-1}} \rfloor$ . Denote a class of adaptation with parameters $U \in \R^{d_\ell\times d}$ and $V \in \R^{d_{\ell-1} \times d}$ as
\vspace{-0.5em}
\begin{align}
    f_{\ell,U,V}(x) := \oW^\pre_{\ell+1} (W^\pre_\ell + U V^\top) \uW_{\ell-1}^\pre x,\label{eq: rank s adaptation}
\end{align}
where $\oW_{\ell}^\pre := W_L^\pre W_{L-1}^\pre \dots W_{\ell}^\pre \in \R^{d_L \times d_{\ell-1}}$ and $\uW_\ell^\pre := W_\ell^\pre W_{\ell-1}^\pre \dots W_1^\pre \in \R^{d_\ell \times d_0}$ with $\uW_0^\mathrm{pre} = I_p$ and $\oW_L^\mathrm{pre} = I_q$. In a transformer-based LLM, each row of $W_{\ell}$ can represent the parameters in a single attention head for the MHA module or in a single channel for the FFN module.


Given $n$ observations $(x_i^{(\id)}, y_i^{(\id)}) \subset \R^p \times \R^q$, we fine-tune $f^\pre$ by minimizing the empirical risk $\mathcal{R}_n^{(\id)}(f_{\ell,U,V}) := (1/n) \sum_{i \in [n]} \|y_i^{(\id)} - f_{\ell,U,V}(x_i^{(\id)})\|^2$ via gradient descent. 
For LoRA, we train both low-rank matrices $(U, V) $ in Equation \eqref{eq: rank s adaptation} with $d \gets r$.
For \model, we train only $V$ in Equation \eqref{eq: rank s adaptation} with $d \gets s$ and fixed $U \gets U_S^\sft := [e_{a_1}; e_{a_2}; \dots; e_{a_s}]$, where $S = \{a_1, \dots, a_s\} \subset [d_\ell]$ and $e_a$ is the $a$-th standard basis. Similar conclusions hold when we fine-tune only $U$.
Motivated by the implicit regularization in gradient descent~\citep{zhang2021understanding,gunasekar2017implicit,arora2019implicit}, we directly consider minimum norm solutions.

We consider a multiple linear regression setting. 
\mbox{Assume that the in-distribution training data $(x^{(\id)},$} $y^{(\id)}) \in \R^{p+q}$ and out-of-distribution test data $(x^{(\ood)}, y^{(\ood)}) \in \R^{p+q}$ are generated i.i.d. according to 
\begin{align*}
    y^{(k)} &= B^{(k)} x^{(k)} + \epsilon^{(k)}, \ \ k \in \{\id, \ood\},
\end{align*}
\vspace{-0.3em}where $B^{(k)} \in \R^{q \times p}$ is the coefficient matrix, $x^{(k)}$ and $\epsilon^{(k)}$ are mean zero sub-Gaussian signal and noise with covariance matrices $\Sigma_x^{(k)}$ and $\Sigma_\epsilon^{(k)}$, respectively.
The generalization capacity is measured by the fine-tuned model's excess risk $\mathcal{E}(f) := \E[\|y^{(o)} - f(x^{(o)})\|^2] - \inf_{f'} \E[\|y^{(o)} - f'(x^{(o)})\|^2]$.

For these OOD data, LoRA suffers from forgetting, while \model can maintain pre-training knowledge.

\begin{assumption}[Distribution Shift]\label{asm: distribution shift}
    Assume that $\Sigma_x^{(\id)} = \Sigma_x^{(\ood)} = \Sigma_x$ for some $\Sigma_x \in \R^{p \times p}$, and $\|(\oW^\pre_{\ell+1} U_S^\sft) (\oW^\pre_{\ell+1} U_S^\sft)^\dag (B^{(\ood)} - B^{(\id)}) \Sigma_x^{1/2}\|_\F^2 \leq \varepsilon^2 \mathcal{E}^{(\ood)}(f^\pre)$ for some $\varepsilon > 0$. 
\end{assumption}
Assumption~\ref{asm: distribution shift} states that while the covariate distribution remains unchanged, the label distribution conditioned on covariates may shift, but not exceeding a factor of $\epsilon^2$ of the OOD risk of $f^\pre$. This holds for fine-tuning with proper channel selection, where primarily the output distribution is changed.



\begin{theorem}[Out-of-distribution Excess Risk, Informal]\label{thm: sft and lora ood informal}
    Suppose Assumption~\ref{asm: distribution shift} holds.
    Consider $n \to \infty$. 
    If $B^{(\id)} = \oW^\pre_{\ell+1} \tilde B^{(\id)} \uW^\pre_{\ell-1}$ holds for some $\tilde B^{(\id)} \in \R^{d_\ell \times d_{\ell-1}}$, and $s \leq \rank(\Sigma_f^{(\id)})$, then,
    \begin{align*}
        \mathcal{E}^{(\ood)}(f_{\ell,U_S^\sft,V^\sft}) &\leq (1 + 3\varepsilon^2) \mathcal{E}^{(\ood)}(f^\pre),\ \ 
        \mathcal{E}^{(\ood)}(f_{\ell,U^\lora,V^\lora}) \geq \|(B^{(\ood)} - B^{(\id)}) \Sigma_x^{1/2}\|_\F^2.
    \end{align*}
\end{theorem}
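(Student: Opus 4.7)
My approach is to first reduce the analysis to the $n \to \infty$ population limit, where the empirical risk minimizers for both methods converge to minimizers of the population risk over their respective adapter classes. Since the model is linear and the noise is mean zero, for any $f(x) = Wx$ the excess risk simplifies to $\mathcal{E}^{(k)}(f) = \|(W - B^{(k)})\Sigma_x^{1/2}\|_\F^2$. Using the decomposition $f_{\ell,U,V}(x) = (W^\pre + \oW^\pre_{\ell+1} UV^\top \uW^\pre_{\ell-1})x$, the problem thus reduces to a weighted low-rank-approximation-style comparison between the limiting adapter weights and $B^{(\ood)}$.

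For the \sft\ upper bound, the realizability condition $B^{(\id)} = \oW^\pre_{\ell+1} \tilde B^{(\id)} \uW^\pre_{\ell-1}$ factorizes the in-distribution residual as $B^{(\id)} - W^\pre = \oW^\pre_{\ell+1}(\tilde B^{(\id)} - W^\pre_\ell)\uW^\pre_{\ell-1}$. The \sft\ adapter $\oW^\pre_{\ell+1} U_S^\sft V^\top \uW^\pre_{\ell-1}$ has columns in the span of $C_S := \oW^\pre_{\ell+1} U_S^\sft$, so the optimal $V^\sft$ in the population limit yields $W_\sft - W^\pre = P_S(B^{(\id)} - W^\pre)$ with $P_S := C_S C_S^\dag$ the orthogonal projection onto $\column(C_S)$; the rank condition $s \leq \rank(\Sigma_f^{(\id)})$ guarantees that this projection is well-defined and captures a nontrivial portion of the residual. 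I would then write
\begin{align*}
W_\sft - B^{(\ood)} = P_S(B^{(\id)} - B^{(\ood)}) + (I - P_S)(W^\pre - B^{(\ood)}),
\end{align*}
apply Pythagoras in the Frobenius norm using $P_S(I - P_S) = 0$, bound the first piece directly by Assumption~\ref{asm: distribution shift} to get $\varepsilon^2 \mathcal{E}^{(\ood)}(f^\pre)$, and control the second piece trivially by $\mathcal{E}^{(\ood)}(f^\pre)$. The constant $3$ in $(1 + 3\varepsilon^2)$ should then emerge from absorbing the extra cross terms that appear when $\uW^\pre_{\ell-1}$ or $\Sigma_x$ is rank-deficient (so the projection-based fit is only approximately exact) via a Cauchy--Schwarz step together with the same Assumption~\ref{asm: distribution shift}.

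For the \lora\ lower bound, the plan is to exploit the in-distribution stationarity of $(U^\lora, V^\lora)$ at the minimum-norm population optimum: this forces the residual $(W_\lora - B^{(\id)})\Sigma_x^{1/2}$ to be orthogonal (in the Frobenius sense) to the effective LoRA tangent direction. Combined with the covariate match $\Sigma_x^{(\id)} = \Sigma_x^{(\ood)}$ and the realizability/rank assumptions, which place the shift $(B^{(\ood)} - B^{(\id)})\Sigma_x^{1/2}$ in that same tangent direction, a Pythagorean identity then gives
\begin{align*}
\|(W_\lora - B^{(\ood)})\Sigma_x^{1/2}\|_\F^2 = \|(W_\lora - B^{(\id)})\Sigma_x^{1/2}\|_\F^2 + \|(B^{(\ood)} - B^{(\id)})\Sigma_x^{1/2}\|_\F^2 \geq \|(B^{(\ood)} - B^{(\id)})\Sigma_x^{1/2}\|_\F^2.
\end{align*}
The main obstacle is that, unlike the \sft\ case where the adapter sits in a fixed linear subspace, LoRA's rank-$r$ set is a non-linear variety, so turning the optimality condition into a clean orthogonality statement requires a first-order stationarity analysis at the minimum-norm solution and an SVD-based characterization of $W_\lora$ as the best rank-$r$ approximation to $(B^{(\id)} - W^\pre)\Sigma_x^{1/2}$ restricted by the column- and row-space constraints from $\oW^\pre_{\ell+1}$ and $\uW^\pre_{\ell-1}$. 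The condition $s \leq \rank(\Sigma_f^{(\id)})$ is essential here for sharply identifying the top-$r$ singular subspace and ensuring the relevant cross terms vanish. This \lora\ half, rather than the \sft\ half, is where I expect the bulk of the technical effort.
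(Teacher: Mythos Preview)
Your \sft\ half is essentially the paper's argument: the paper splits $\mathcal{E}^{(\ood)}(f_{\ell,U_S^\sft,V^\sft})$ into $\|(I-P_S)(B^{(\ood)}-W^\pre)\Sigma_x^{1/2}\|_\F^2 + \|T\|_\F^2$ with $P_S=\Phi_S''\Phi_S''^{\top}$, bounds the first piece by $\mathcal{E}^{(\ood)}(f^\pre)$, and bounds $\|T\|_\F^2$ by $3\varepsilon^2\mathcal{E}^{(\ood)}(f^\pre)$ via Assumption~\ref{asm: distribution shift}. Two corrections: the identity $W_\sft-W^\pre=P_S(B^{(\id)}-W^\pre)$ holds only after right-multiplying by $\Sigma_x^{1/2}$, since the population $V^\sft$ carries a factor $(A^2)^\dag$ with $A^2=\uW^\pre_{\ell-1}\Sigma_x\uW^{\pre\top}_{\ell-1}$, and realizability is precisely what cancels this pseudoinverse; and the constant $3$ comes from the crude bound $(a+b+c)^2\le 3(a^2+b^2+c^2)$ applied to the three pieces of $T$ \emph{before} two of them are zeroed by the assumptions, not from rank-deficient cross terms.

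Your \lora\ half has a genuine gap. The claim that $(B^{(\ood)}-B^{(\id)})\Sigma_x^{1/2}$ lies in the LoRA tangent space at the optimum is unjustified: realizability constrains $B^{(\id)}$, not $B^{(\ood)}$, and the tangent at $UV^\top$ consists of matrices of the form $\oW^\pre_{\ell+1}(U\dot V^\top+\dot U V^\top)\uW^\pre_{\ell-1}$, which an arbitrary label shift has no reason to match. The paper never invokes stationarity or tangent-space orthogonality. Instead it computes the minimum-norm population LoRA solution in closed form via Eckart--Young (Lemma~\ref{lem: excess risk lora}), obtaining $\oW^\pre_{\ell+1}U^\lora V^{\lora\top}\uW^\pre_{\ell-1}=\SVD_r(M)\,A^\dag\uW^\pre_{\ell-1}$ with $M=\Phi'\Phi'^\top D\Sigma_x\uW^{\pre\top}_{\ell-1}A^\dag$, and then writes $\mathcal{E}^{(\ood)}=\|(B^{(\ood)}-B^{(\id)})\Sigma_x^{1/2}+T_1+T_2+T_3\|_\F^2$. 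Realizability and $\Sigma_x^{(\id)}=\Sigma_x^{(\ood)}$ kill $T_1,T_2$; the rank condition makes $\SVD_r(M)=M$, hence $T_3=0$ (the $\le$ in the informal statement appears to be a typo for $\ge$, cf.\ the ``Intuition of the proof'' paragraph). With $T_1+T_2+T_3=0$ the lower bound is immediate---equivalently, LoRA fits $B^{(\id)}$ \emph{exactly} in the $\Sigma_x^{1/2}$-weighted sense, so $\mathcal{E}^{(\ood)}(f_{\ell,U^\lora,V^\lora})=\|(B^{(\ood)}-B^{(\id)})\Sigma_x^{1/2}\|_\F^2$ with equality, and no Pythagorean step is needed at all.
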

\vspace{-0.5em}
Theorem~\ref{thm: sft and lora ood informal} indicates that the OOD risk of \model \mbox{is bounded above by that of $f^\pre$, while that of} LoRA is bounded below by the label shift magnitude. If $f^\pre$ already has a low risk for OOD tasks, and the label shift is significant, \model is expected \mbox{to outperform LoRA. Essentially, when the OOD task} deviates significantly from the FT distribution, LoRA may forget pre-trained knowledge and overfit to the FT data, compromising its generalization capabilities. See formal statements in Theorem~\ref{thm: sft and lora ood}.


\vspace{-1.0em}
\subsection{Simple and Efficient Optimization}
\vspace{-0.6em}

Next, we explain why \model is a simple and efficient optimization method. In Equation \eqref{eq: rank s adaptation}, \model can be viewed as a LoRA variant that fixes $U_S^\sft$ as a combination of multiple orthogonal standard basis vectors while optimizing $V^\sft$ with zero initialization. The gradient is given by $\frac{\partial \mathcal{L}}{\partial V^\sft} =  (\uW_{\ell-1}^\pre x)^\top \frac{\partial \mathcal{L}}{\partial  \oW^\pre_{\ell+1}} U_S^\sft$. Ignore $\uW_{\ell-1}^\pre$, $\oW_{\ell-1}^\pre$ and denote  $\frac{\partial \mathcal{L}}{\partial  \oW^\pre_{\ell+1}}$ as $\oG$, at step $t$ with learning rate $\eta$, 
\vspace{-0.5em}
$$\Delta f_{\ell, t}(x) := f_{\ell, t}(x) - f_{\ell, t-1}(x) = U_S^\sft(V_t^\sft - V_{t-1}^\sft)^\top x
= -\eta U_S^\sft U_S^{\sft^\top} \oG^\top  ||x||^2 .$$
\vspace{-1.7em}

Since $U_S^\sft$ is an orthogonal matrix, the update simplifies to $\Delta f_{\ell, t}(x) =-\eta \oG^\top||x||^2$. Following LoRA+~\cite{hayou2024lora+},  assuming that $x = \Theta_n(1)$, where $n$ is the width of the layers in LLMs, we expect $\Delta f_{\ell, t}(x) = \Theta(1)$ to ensure stability and feature learning in the infinite-width limit~\citep{yang2021tuning}. \model can achieve this when $\eta = \Theta(n^{-1})$ while LoRA requires $\eta_U = \Theta(1)$ and $\eta_V = \Theta(n^{-1})$ for optimal performance. These rates become impractical for modern LLMs with very large $n$. Therefore, \model aligns with LoRA variants that fix one matrix~\citep{po2024sbora, zhang2023lora}, offering more stable and efficient optimization.

Furthermore, under a given sparsity level as regularization, our model simplifies optimization when approximating the full fine-tuning gradients at non-zero positions. Similar to LoRA-SB~\citep{ponkshe2024initialization}, let $G_V$ denote the gradient of $V^\sft$. The equivalent gradient $\tilde{G}$, which describes the virtual gradient of the pretrained weight matrices, can be expressed as $U_S^\sft G_V^\top$. Then, the gradient with respect to $V^\sft$ can be expressed in terms of the gradient of the pretrained weight $W^\pre$ as: $G_V^O = U_S^{\sft^\top}G$. Using this relationship, our objective is to minimize the distance between the equivalent gradient
and the full gradient as $\min_{G_V} \|\tilde{G} - G\|^2_F$, where the optimal solution is given by $G_V = (U_S^{\sft^\top} U_S^\sft)^{-1}G_V^O$. Since $U_S^\sft$ is orthogonal, we have $G_V = G_V^O$. This shows that \model can keep the optimal update directions throughout the training process, establishing it as an efficient sparse optimization method.

\vspace{-1.0em}
\section{Experiments}
\label{sec:exp}
\vspace{-0.8em}
In this section, we conduct a series of experiments across three diverse benchmarks covering more than 20 datasets. Our goal is to provide a rich picture of how \model performs in different scenarios. Here,
we compare our method with different fine-tuning strategies and categories including: (i) Full fine-tuning (FT), (ii) \textit{reparameterized fine-tuning}: LoRA~\cite{lora}, DoRA~\cite{dora}, and Galore~\cite{zhao2024galore}, (iii) \textit{adapter-based fine-tuning}: Series Adapter~\cite{series-adapter}, Parallel Adapter~\cite{parallel-adapter}, and LoReFT~\cite{wu2024reft}, (iv) \textit{prompt-based fine-tuning}: Prefix-Tuning~\cite{Prefix-Tuning}, (v) \textit{sparse fine-tuning}: LISA~\cite{pan2024lisa}. For a fair comparison, we keep a comparable number of trainable parameters in \model to that of LoRA. The design choices for trainable parameter allocations in \model will be detailed in Section~\ref{sec:5.4}. All other hyperparameters are selected via cross-validation. Detailed setups and dataset descriptions are provided in Appendix~\ref{app:exp}.
\vspace{-0.7em}
\subsection{Commonsense Reasoning}
\label{sec:5.1}
\vspace{-0.7em}

The results of eight common sense reasoning tasks in Table~\ref{tab: commonsense} show that \model consistently outperforms existing PEFT methods in the \texttt{LLaMA-7B / 13B}, \texttt{LLaMA2-7B} and \texttt{LLaMA3-8B} models. Compared to LoRA and DoRA, it achieves average performance gains of 4.6\% and 2.8\%, respectively. Furthermore, \model also shows superior performance against recent approaches, including Galore, LoReFT, and LISA, with improvements of at least 1.0\%. Remarkably, despite using less than 1\% of trainable parameters, our method surpasses full FT by 0.5\%. The 3.0\% improvement on the \texttt{LLaMA3-8B} suggests that keeping most pre-trained parameters frozen enables better generalization to test distributions.

\begin{table}[htbp] 
\small\centering
\footnotesize
\vspace{-5em}
\caption{Comparison among various fine-tuning methods for the \texttt{LLaMA-7B/13B}, \texttt{LLaMA2-7B}, and \texttt{LLaMA3-8B} models on eight commonsense reasoning tasks. Non-PEFT methods are marked in \textcolor{darkgray!50}{gray}. (\textsuperscript{1}: from DoRA paper, \textsuperscript{2}: from ReFT paper, \textsuperscript{3}: reproduced by us, \textsuperscript{\dag}: projected trainable parameters)} 
\setlength\tabcolsep{1.1pt} 
\resizebox{\linewidth}{!}{
\begin{tabular}{llcccccccccc}
\toprule
\textbf{Model} & \textbf{Method } & \textbf{\# Param(\%) } & \textbf{BoolQ } & \textbf{PIQA } & \textbf{SIQA } & \textbf{HellaSwag } & \textbf{Wino } & \textbf{ARC-e } & \textbf{ARC-c } & \textbf{OBQA } & \textbf{Avg. }$\uparrow$ \\
\midrule
\textcolor{darkgray!50}{ChatGPT\textsuperscript{1}}     & \textcolor{darkgray!50}{-}     & \textcolor{darkgray!50}{-}     & \textcolor{darkgray!50}{73.1}       & \textcolor{darkgray!50}{85.4}       & \textcolor{darkgray!50}{68.5}       & \textcolor{darkgray!50}{78.5}      & \textcolor{darkgray!50}{66.1}       & \textcolor{darkgray!50}{89.8}      & \textcolor{darkgray!50}{79.9}       & \textcolor{darkgray!50}{74.8}       & \textcolor{darkgray!50}{77.0}\\
\midrule
\multirow{11}{*}{LLaMA-7B } & \textcolor{darkgray!50}{Full FT\textsuperscript{3}} & \textcolor{darkgray!50}{100}   & \textcolor{darkgray!50}{70.3}  & \textcolor{darkgray!50}{84.2}  & \textcolor{darkgray!50}{80.1}  & \textcolor{darkgray!50}{92.3}  & \textcolor{darkgray!50}{85.4}  & \textcolor{darkgray!50}{86.6}  & \textcolor{darkgray!50}{72.8}  & \textcolor{darkgray!50}{83.4}  & \textcolor{darkgray!50}{81.9} \\
\cdashlinelr{2-12}
                          & Prefix~\cite{Prefix-Tuning}\textsuperscript{1}         & 0.11  & 64.3  & 76.8  & 73.9  & 42.1  & 72.1  & 72.9  & 54.0  & 60.6  & 64.6 \\
                          & Series~\cite{series-adapter}\textsuperscript{1}         & 0.99  & 63.0  & 79.2  & 76.3  & 67.9  & 75.7  & 74.5  & 57.1  & 72.4  & 70.8 \\
                          & Parallel~\cite{parallel-adapter}\textsuperscript{1}        & 3.54  & 67.9  & 76.4  & 78.8  & 69.8  & 78.9  & 73.7  & 57.3  & 75.2  & 72.2 \\
                          & LoRA~\cite{lora}\textsuperscript{3} & 0.83  & 69.2  & 81.7  & 78.4  & 83.4  & 80.8  & 79.0  & 62.4  & 78.4  & 76.7 \\
                          & DoRA~\cite{dora}\textsuperscript{1}         & 0.84  & 68.5  & 82.9  & 79.6  & 84.8  & 80.8  & 81.4  & 65.8  & 81.0  & 78.1 \\
                          & Galore~\cite{zhao2024galore}\textsuperscript{3}\      & \,\,0.83\textsuperscript{\dag} & 68.6  & 79.0  & 78.5  & 84.7  & 80.1  & 80.3  & 62.1  & 77.3  & 76.3 \\
                          & LoReFT~\cite{wu2024reft}\textsuperscript{2} & 0.03  & 69.3  & 84.4  & 80.3  & 93.1  & 84.2  & 83.2  & 68.2  & 78.9 & 80.2 \\ 
                          & LISA~\cite{pan2024lisa}\textsuperscript{3} & 9.91  & 70.4  & 82.1  & 78.7  & 92.4  & 82.9  & 84.9  & 70.2  & 78.4 & 80.0 \\ 
                          & \multicolumn{1}{>{\columncolor{cyan!10}}l}{\textbf{\model (Ours)}} & \multicolumn{1}{>{\columncolor{cyan!10}}c}{0.81} & \multicolumn{1}{>{\columncolor{cyan!10}}c}{\textbf{72.7}} & \multicolumn{1}{>{\columncolor{cyan!10}}c}{\textbf{83.7}} & \multicolumn{1}{>{\columncolor{cyan!10}}c}{\textbf{79.6}} & \multicolumn{1}{>{\columncolor{cyan!10}}c}{\textbf{93.4}} & \multicolumn{1}{>{\columncolor{cyan!10}}c}{\textbf{83.5}} & \multicolumn{1}{>{\columncolor{cyan!10}}c}{\textbf{86.1}} & \multicolumn{1}{>{\columncolor{cyan!10}}c}{\textbf{72.2}} & \multicolumn{1}{>{\columncolor{cyan!10}}c}{\textbf{83.4}} & \multicolumn{1}{>{\columncolor{cyan!10}}c}{\textbf{81.8}} \\
\midrule
\multirow{9}{*}{LLaMA-13B } & \textcolor{darkgray!50}{Full FT\textsuperscript{3}} & \textcolor{darkgray!50}{100}   & \textcolor{darkgray!50}{74.5} & \textcolor{darkgray!50}{86.3} & \textcolor{darkgray!50}{81.3} & \textcolor{darkgray!50}{94.4} & \textcolor{darkgray!50}{86.9} & \textcolor{darkgray!50}{89.7}	& \textcolor{darkgray!50}{77.9} & \textcolor{darkgray!50}{88.8} & \textcolor{darkgray!50}{85.0} \\
\cdashlinelr{2-12}
                          & Prefix~\cite{Prefix-Tuning}\textsuperscript{1} & 0.03 & 65.3 & 75.4 & 72.1 & 55.2 & 68.6 & 79.5 & 62.9 & 68.0 & 68.4 \\
                          & Series~\cite{series-adapter}\textsuperscript{1}      & 0.80  & 71.8 & 83.0	& 79.2 & 88.1 & 82.4 & 82.5 & 67.3	& 81.8 & 79.5 \\
                          & Parallel~\cite{parallel-adapter}\textsuperscript{1}      & 2.89  & 72.5	& 84.9 & 79.8 & 92.1 & 84.7 & 84.2	& 71.2	& 82.4	& 81.4 \\
                          & LoRA~\cite{lora}\textsuperscript{1}       & 0.67 & 72.1	& 83.5 & 80.5 & 90.5 & 83.7 & 82.8 & 68.3	& 82.4 & 80.5 \\
                          & DoRA~\cite{dora}\textsuperscript{1}         & 0.68 & 72.4	& 84.9	& 81.5 & 92.4 & 84.2 & 84.2 & 69.6	& 82.8 & 81.5 \\
                          & LoReFT~\cite{wu2024reft}\textsuperscript{2}      & 0.03 & 72.1 & \textbf{86.3} & \textbf{81.8}	& \textbf{95.1} & \textbf{87.2} & 86.2 & 73.7	& 84.2 & 83.3 \\ 
                          & \multicolumn{1}{>{\columncolor{cyan!10}}l}{\textbf{\model (Ours)}} & \multicolumn{1}{>{\columncolor{cyan!10}}c}{0.65} & \multicolumn{1}{>{\columncolor{cyan!10}}c}{\textbf{74.2}} & \multicolumn{1}{>{\columncolor{cyan!10}}c}{85.7} & \multicolumn{1}{>{\columncolor{cyan!10}}c}{80.7} & \multicolumn{1}{>{\columncolor{cyan!10}}c}{94.9} & \multicolumn{1}{>{\columncolor{cyan!10}}c}{86.4} & \multicolumn{1}{>{\columncolor{cyan!10}}c}{\textbf{88.4}} & \multicolumn{1}{>{\columncolor{cyan!10}}c}{\textbf{76.3}} & \multicolumn{1}{>{\columncolor{cyan!10}}c}{\textbf{87.8}} & \multicolumn{1}{>{\columncolor{cyan!10}}c}{\textbf{84.3}} \\
\midrule
\multirow{5}{*}{LLaMA2-7B } & \textcolor{darkgray!50}{Full FT\textsuperscript{3}} & \textcolor{darkgray!50}{100}   & \textcolor{darkgray!50}{74.7}  & \textcolor{darkgray!50}{84.9}  & \textcolor{darkgray!50}{78.7}  & \textcolor{darkgray!50}{93.7}  & \textcolor{darkgray!50}{84.1}  & \textcolor{darkgray!50}{87.5}  & \textcolor{darkgray!50}{75.2}  & \textcolor{darkgray!50}{85.0}  & \textcolor{darkgray!50}{83.0} \\
\cdashlinelr{2-12} &  LoRA~\cite{lora}\textsuperscript{1}     & 0.83 & 69.8	& 79.9 & 79.5 & 83.6 & 82.6 & 79.8 & 64.7 & 81.0 & 77.6 \\
                          & DoRA~\cite{dora}\textsuperscript{1}     & 0.84 & 71.8	& 83.7 & 76.0 & 89.1 & 82.6 & 83.7 & 68.2 & 82.4 & 79.7 \\
                          & \multicolumn{1}{>{\columncolor{cyan!10}}l}{\textbf{\model (Ours)}} & \multicolumn{1}{>{\columncolor{cyan!10}}c}{0.81} & \multicolumn{1}{>{\columncolor{cyan!10}}c}{\textbf{72.9}} & \multicolumn{1}{>{\columncolor{cyan!10}}c}{\textbf{86.1}} & \multicolumn{1}{>{\columncolor{cyan!10}}c}{\textbf{80.2}} & \multicolumn{1}{>{\columncolor{cyan!10}}c}{\textbf{94.3}} & \multicolumn{1}{>{\columncolor{cyan!10}}c}{\textbf{85.5}} & \multicolumn{1}{>{\columncolor{cyan!10}}c}{\textbf{87.2}} & \multicolumn{1}{>{\columncolor{cyan!10}}c}{\textbf{74.6}} & \multicolumn{1}{>{\columncolor{cyan!10}}c}{\textbf{83.4}} & \multicolumn{1}{>{\columncolor{cyan!10}}c}{\textbf{83.0}} \\
\midrule
\multirow{5}{*}{LLaMA3-8B } & \textcolor{darkgray!50}{Full FT\textsuperscript{3}} & \textcolor{darkgray!50}{100}   & \textcolor{darkgray!50}{73.9}  & \textcolor{darkgray!50}{86.2}  & \textcolor{darkgray!50}{79.1}  & \textcolor{darkgray!50}{93.1}  & \textcolor{darkgray!50}{85.8}  & \textcolor{darkgray!50}{88.1}  & \textcolor{darkgray!50}{78.2}  & \textcolor{darkgray!50}{84.0}  & \textcolor{darkgray!50}{83.6} \\
\cdashlinelr{2-12} &  LoRA~\cite{lora}\textsuperscript{1}       & 0.70 & 70.8 & 85.2 & 79.7	& 92.5 & 84.9 & 88.9 & 78.7 & 84.4 & 82.5 \\
                          & DoRA~\cite{dora}\textsuperscript{1}         & 0.71 & 74.6 & \textbf{89.3} & 79.9 & 95.5	& 85.6 & 90.5 & 80.4 & 85.8 & 85.2 \\
                          & \multicolumn{1}{>{\columncolor{cyan!10}}l}{\textbf{\model (Ours)}}  & \multicolumn{1}{>{\columncolor{cyan!10}}c}{0.70}  & \multicolumn{1}{>{\columncolor{cyan!10}}c}{\textbf{75.0}} & \multicolumn{1}{>{\columncolor{cyan!10}}c}{89.0} & \multicolumn{1}{>{\columncolor{cyan!10}}c}{\textbf{80.7}} & \multicolumn{1}{>{\columncolor{cyan!10}}c}{\textbf{96.5}} & \multicolumn{1}{>{\columncolor{cyan!10}}c}{\textbf{88.0}} & \multicolumn{1}{>{\columncolor{cyan!10}}c}{\textbf{92.5}} & \multicolumn{1}{>{\columncolor{cyan!10}}c}{\textbf{83.4}} & \multicolumn{1}{>{\columncolor{cyan!10}}c}{\textbf{87.8}} & \multicolumn{1}{>{\columncolor{cyan!10}}c}{\textbf{86.6}} \\
\bottomrule
\end{tabular}}
\vspace{-1.5em}
\label{tab: commonsense}
\end{table} 
\vspace{-0.2em}

\vspace{-0.7em}
\subsection{Arithmetic Reasoning}
\label{sec:5.2}
\vspace{-0.3em}

\vspace{-0.3em}
As showcased in Table~\ref{tab: math}, \model consistently outperforms other PEFT methods for different base models. On average, it achieves improvements of 1.3\% and 0.9\% over LoRA and DoRA, respectively. These results highlight the versatility and effectiveness of our approach across a diverse range of tasks. Additionally, we observe substantial improvements even when compared to Full FT for the \texttt{LLaMA3-8B} model, particularly on complex tasks such as GSM8K and AQuA. This suggests that \model better preserves the original reasoning capabilities of this stronger model while acquiring new skills from the fine-tuning data, thereby validating the enhanced generalization ability of our method.

\begin{table}[t] 
\small\centering
\footnotesize
\vspace{-0.5em}
\caption{Comparison among various fine-tuning methods for different models on seven math reasoning tasks. Non-PEFT methods are marked in \textcolor{darkgray!50}{gray}. (\textsuperscript{1}: from LLM-Adapters paper, \textsuperscript{2}: reproduced by us)} 
\setlength\tabcolsep{1.1pt} 
\resizebox{\linewidth}{!}{
\begin{tabular}{llccccccccc}
\toprule
\textbf{Model} & \textbf{Method } & \textbf{\# Param(\%) } & \textbf{MultiArith } & \textbf{GSM8K } & \textbf{AddSub } & \textbf{AQuA } & \textbf{SingleEq } & \textbf{SVAMP } & \textbf{MAWPS } & \textbf{Avg. }$\uparrow$ \\
\midrule
\textcolor{darkgray!50}{GPT-3.5\textsuperscript{1}}     & \textcolor{darkgray!50}{-}     & \textcolor{darkgray!50}{-}     & \textcolor{darkgray!50}{83.8}       & \textcolor{darkgray!50}{56.4}       & \textcolor{darkgray!50}{85.3}       & \textcolor{darkgray!50}{38.9}      & \textcolor{darkgray!50}{88.1}       & \textcolor{darkgray!50}{69.9}         & \textcolor{darkgray!50}{87.4}       & \textcolor{darkgray!50}{72.8}\\
\midrule
\multirow{5}{*}{LLaMA-7B } & \textcolor{darkgray!50}{Full FT\textsuperscript{2}} & \textcolor{darkgray!50}{100}   & \textcolor{darkgray!50}{98.8} & \textcolor{darkgray!50}{43.1} & \textcolor{darkgray!50}{91.1} & \textcolor{darkgray!50}{20.9} & \textcolor{darkgray!50}{94.3} & \textcolor{darkgray!50}{60.6}	& \textcolor{darkgray!50}{88.2}  & \textcolor{darkgray!50}{71.0}   \\
\cdashlinelr{2-11}
                          & LoRA~\cite{lora}\textsuperscript{2} & 0.83  & 98.0  & 40.0  & 91.2  & 21.7  & 93.1  & 56.7 & 85.3 & 69.7  \\
                          & DoRA~\cite{dora}\textsuperscript{2} & 0.84  & 97.3  & 38.9  & 89.6  & \textbf{22.4}  & \textbf{93.9}  & \textbf{58.4}  & 85.3 & 69.4  \\                          
                          & \multicolumn{1}{>{\columncolor{cyan!10}}l}{\textbf{\model (Ours)}} & \multicolumn{1}{>{\columncolor{cyan!10}}c}{0.81} & \multicolumn{1}{>{\columncolor{cyan!10}}c}{\textbf{98.8}} & \multicolumn{1}{>{\columncolor{cyan!10}}c}{\textbf{41.3}} & \multicolumn{1}{>{\columncolor{cyan!10}}c}{\textbf{91.4}} & \multicolumn{1}{>{\columncolor{cyan!10}}c}{21.3} & \multicolumn{1}{>{\columncolor{cyan!10}}c}{93.5} & \multicolumn{1}{>{\columncolor{cyan!10}}c}{\textbf{58.4}} & \multicolumn{1}{>{\columncolor{cyan!10}}c}{\textbf{86.1}} & \multicolumn{1}{>{\columncolor{cyan!10}}c}{\textbf{70.1}} \\
\midrule
\multirow{5}{*}{LLaMA-13B } & \textcolor{darkgray!50}{Full FT\textsuperscript{2}} & \textcolor{darkgray!50}{100}   & \textcolor{darkgray!50}{98.3} & \textcolor{darkgray!50}{47.6} & \textcolor{darkgray!50}{92.9} & \textcolor{darkgray!50}{26.0} & \textcolor{darkgray!50}{95.1} & \textcolor{darkgray!50}{65.7} & \textcolor{darkgray!50}{88.7} & \textcolor{darkgray!50}{73.5} \\
\cdashlinelr{2-11}
                          & LoRA~\cite{lora}\textsuperscript{2}       & 0.67 &  97.5 & 47.8 & 89.9 & 20.5 & 94.3 & 61.2 & 87.4	& 71.2 \\
                          & DoRA~\cite{dora}\textsuperscript{2}       & 0.68 &  97.2 & 48.1 & \textbf{90.6} & 20.9 & 93.9 & 63.8 & \textbf{88.2}	& 71.8 \\
                          & \multicolumn{1}{>{\columncolor{cyan!10}}l}{\textbf{\model (Ours)}} & \multicolumn{1}{>{\columncolor{cyan!10}}c}{0.65} & \multicolumn{1}{>{\columncolor{cyan!10}}c}{\textbf{97.7}} & \multicolumn{1}{>{\columncolor{cyan!10}}c}{\textbf{48.4}} & \multicolumn{1}{>{\columncolor{cyan!10}}c}{90.4} & \multicolumn{1}{>{\columncolor{cyan!10}}c}{\textbf{22.8}} & \multicolumn{1}{>{\columncolor{cyan!10}}c}{\textbf{95.5}} & \multicolumn{1}{>{\columncolor{cyan!10}}c}{\textbf{63.9}} & \multicolumn{1}{>{\columncolor{cyan!10}}c}{87.8} & \multicolumn{1}{>{\columncolor{cyan!10}}c}{\textbf{72.4}}\\
\midrule
\multirow{5}{*}{LLaMA2-7B } & \textcolor{darkgray!50}{Full FT\textsuperscript{2}} & \textcolor{darkgray!50}{100}   & \textcolor{darkgray!50}{99.3} & \textcolor{darkgray!50}{47.5} & \textcolor{darkgray!50}{91.1} & \textcolor{darkgray!50}{24.4} & \textcolor{darkgray!50}{96.7} & \textcolor{darkgray!50}{62.5}	& \textcolor{darkgray!50}{89.1}  & \textcolor{darkgray!50}{72.9}   \\
\cdashlinelr{2-11}
&  LoRA~\cite{lora}\textsuperscript{2}     & 0.83 & 97.5	& 44.0 & 91.2 & 20.9 & 94.1 & 59.2 & 85.7 & 70.4 \\
                          & DoRA~\cite{dora}\textsuperscript{2}     & 0.84 & 98.2	& 43.8 & 90.1 & 24.4 & 94.5 & 59.1 & \textbf{89.1} & 71.3 \\
                          & \multicolumn{1}{>{\columncolor{cyan!10}}l}{\textbf{\model (Ours)}} & \multicolumn{1}{>{\columncolor{cyan!10}}c}{0.81} & \multicolumn{1}{>{\columncolor{cyan!10}}c}{\textbf{98.5}} & \multicolumn{1}{>{\columncolor{cyan!10}}c}{\textbf{44.3}} & \multicolumn{1}{>{\columncolor{cyan!10}}c}{\textbf{91.1}} & \multicolumn{1}{>{\columncolor{cyan!10}}c}{\textbf{25.2}} & \multicolumn{1}{>{\columncolor{cyan!10}}c}{\textbf{94.7}} & \multicolumn{1}{>{\columncolor{cyan!10}}c}{\textbf{61.8}} & \multicolumn{1}{>{\columncolor{cyan!10}}c}{88.2} & \multicolumn{1}{>{\columncolor{cyan!10}}c}{\textbf{72.0}} \\
\midrule
\multirow{5}{*}{LLaMA3-8B } &  \textcolor{darkgray!50}{Full FT\textsuperscript{2}} & \textcolor{darkgray!50}{100}   & \textcolor{darkgray!50}{99.2} & \textcolor{darkgray!50}{62.0} & \textcolor{darkgray!50}{93.9} & \textcolor{darkgray!50}{26.8} & \textcolor{darkgray!50}{96.7} & \textcolor{darkgray!50}{74.0}	& \textcolor{darkgray!50}{91.2}  & \textcolor{darkgray!50}{77.7}   \\
\cdashlinelr{2-11}
& LoRA~\cite{lora}\textsuperscript{2}       & 0.70 & 99.5 & 61.6 & 92.7 & 25.6 & 96.3 & 73.8 & 90.8 & 77.2 \\
                          & DoRA~\cite{dora}\textsuperscript{2}         & 0.71 & 98.8 & 62.7 & 92.2 & 26.8	& 96.9 & 74.0 & 91.2 & 77.5 \\
                          & \multicolumn{1}{>{\columncolor{cyan!10}}l}{\textbf{\model (Ours)}} & \multicolumn{1}{>{\columncolor{cyan!10}}c}{0.70} & \multicolumn{1}{>{\columncolor{cyan!10}}c}{\textbf{99.7}} & \multicolumn{1}{>{\columncolor{cyan!10}}c}{\textbf{65.8}} & \multicolumn{1}{>{\columncolor{cyan!10}}c}{\textbf{93.7}} & \multicolumn{1}{>{\columncolor{cyan!10}}c}{\textbf{31.5}} & \multicolumn{1}{>{\columncolor{cyan!10}}c}{\textbf{97.8}} & \multicolumn{1}{>{\columncolor{cyan!10}}c}{\textbf{76.0}} & \multicolumn{1}{>{\columncolor{cyan!10}}c}{\textbf{92.4}} & \multicolumn{1}{>{\columncolor{cyan!10}}c}{\textbf{79.6}} \\
\bottomrule
\vspace{-3.5em}
\end{tabular}}
\label{tab: math}
\end{table} 

\vspace{-0.8em}
\subsection{Instruction Following}
\vspace{-0.5em}

\vspace{-0.3em}
Table~\ref{tab:mt-bench} comprehensively compares various methods on eight tasks in the MT-Bench dataset~\cite{mt-bench}. It is observed that \model $>$ LISA $>$ Full FT $>$ LoRA/Galore $\geq$ Vanilla for both the \texttt{Mistral-7B} and \texttt{LLama2-7B} model. This is because sparse FT methods like \model and LISA retain more pre-trained knowledge while acquiring new skills on the FT dataset, thereby generalizing better to diverse tasks in the MT-Bench dataset. Moreover, our method outperforms LISA due to its fine-grained and flexible selection strategy, enabling all layers to learn to follow instructions on the full fine-tuning set.
\vspace{-1.5em}

\begin{table}[t]
\vspace{-4em}
\caption{Performance comparison of LLM fine-tuning methods trained on the Alpaca GPT-4 dataset. We report the MT-Bench score as the evaluation metric. All baseline results are cited from LISA.}
\small\centering
\setlength\tabcolsep{2pt}
\begin{tabular}{llcccccccccc}
\toprule
\textbf{Model} & \textbf{Method} & \textbf{Writing} & \textbf{Roleplay} & \textbf{Reasoning} & \textbf{Code} & \textbf{Math} & \textbf{Extraction} & \textbf{STEM} & \textbf{Humanities} & \textbf{Avg.} \\
\midrule
\multirow{6}{*}{Mistral-7B} & Vanilla & 5.25 & 3.20 & 4.50 & 1.60 & 2.70 & 6.50 & 6.17 & 4.65 & 4.32 \\
& Full FT & 5.50 & 4.45 & 5.45 & 2.50 & 3.25 & 5.78 & 4.75 & 5.45 & 4.64 \\
& LoRA & 5.30 & 4.40 & 4.65 & 2.35 & 3.30 & 5.50 & 5.55 & 4.30 & 4.41 \\
& Galore & 5.05 & 5.27 & 4.45 & 1.70 & 2.50 & 5.21 & 5.52 & 5.20 & 4.36 \\
& LISA & 6.84 & 3.65 & 5.45 & 2.20 & 2.75 & 5.65 & 5.95 & 6.35 & 4.85 \\
& \multicolumn{1}{>{\columncolor{cyan!10}}l}{\textbf{Ours}} & \multicolumn{1}{>{\columncolor{cyan!10}}c}{\textbf{6.95}} & \multicolumn{1}{>{\columncolor{cyan!10}}c}{4.40} & \multicolumn{1}{>{\columncolor{cyan!10}}c}{\textbf{5.50}} & \multicolumn{1}{>{\columncolor{cyan!10}}c}{\textbf{2.70}} & \multicolumn{1}{>{\columncolor{cyan!10}}c}{\textbf{3.55}} & \multicolumn{1}{>{\columncolor{cyan!10}}c}{5.95} & \multicolumn{1}{>{\columncolor{cyan!10}}c}{\textbf{6.35}} & \multicolumn{1}{>{\columncolor{cyan!10}}c}{\textbf{6.75}} & \multicolumn{1}{>{\columncolor{cyan!10}}c}{\textbf{5.27}} \\
\midrule
\multirow{6}{*}{LLaMA2-7B} & Vanilla & 2.75 & 4.40 & 2.80 & 1.55 & 1.80 & 3.20 & 5.25 & 4.60 & 3.29 \\
& Full FT & 5.55 & 6.45 & 3.60 & 1.75 & 2.00 & 4.70 & 6.45 & 7.50 & 4.75 \\
& LoRA & 6.30 & 5.65 & 4.05 & 1.60 & 1.45 & 4.17 & 6.20 & 6.20 & 4.45 \\
& Galore & 5.60 & 6.40 & 3.20 & 1.25 & 1.95 & 5.05 & 6.57 & 7.00 & 4.63 \\
& LISA & 6.55 & \textbf{6.90} & 3.45 & 1.60 & \textbf{2.16} & 4.50 & 6.75 & 7.65 & 4.94 \\
& \multicolumn{1}{>{\columncolor{cyan!10}}l}{\textbf{Ours}} & \multicolumn{1}{>{\columncolor{cyan!10}}c}{\textbf{6.75}} & \multicolumn{1}{>{\columncolor{cyan!10}}c}{6.60} & \multicolumn{1}{>{\columncolor{cyan!10}}c}{\textbf{4.15}} & \multicolumn{1}{>{\columncolor{cyan!10}}c}{\textbf{1.65}} & \multicolumn{1}{>{\columncolor{cyan!10}}c}{1.85} & \multicolumn{1}{>{\columncolor{cyan!10}}c}{4.75} & \multicolumn{1}{>{\columncolor{cyan!10}}c}{\textbf{7.45}} & \multicolumn{1}{>{\columncolor{cyan!10}}c}{\textbf{8.38}} & \multicolumn{1}{>{\columncolor{cyan!10}}c}{\textbf{5.20}} \\
\bottomrule
\end{tabular}
\vspace{-1em}
\label{tab:mt-bench}
\end{table}


\subsection{Design Choices for Trainable Parameter Allocations}
\label{sec:5.4}
\vspace{-0.5em}
\mbox{Finally, we detail how \model distribute trainable parameters across layers, modules, and channels.}
\vspace{-0.2em}

\textbf{Uniform across Layers}: Following Chen et al.~\cite{chen2023parameter}, we allocate parameters to each layer uniformly.
\vspace{-0.3em}

\textbf{Fine-tune Important Modules}: Figure~\ref{fig:design:component} analyzes the effectiveness of different components in a LLaMA-like Transformer Block for fine-tuning, including Query, Key, Value, Output, Up, Gate, and Down projections. To ensure a fair comparison, we maintain a fixed number of trainable parameters when fine-tuning each component. The results show that the effectiveness of components in fine-tuning follows the order: Query/Key $\ll$ Value/Up/Gate $<$ Output/Down. This is because Query/Key are only used to measure token similarities, while others serve as persistent memories of training data. Based on this finding, we allocate our parameter budget fairly to the Output and Down projections. For the \texttt{LLama3-8B} and \texttt{Mistral-7B} models, we only fine-tune the Down projection due to the inflexible selection in multi-query attention. Further analysis of this setting is left for future research.
\vspace{-0.3em}

\begin{figure}[!ht]
    \centering
    \vspace{-4em}
    \includegraphics[width=\linewidth]{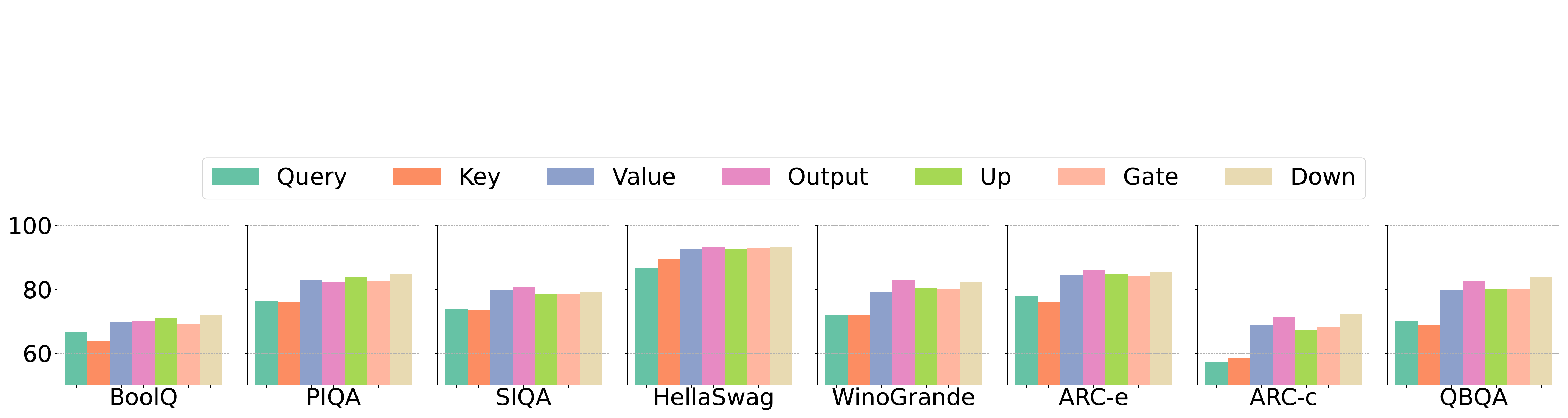}
    \caption{The impact of different components in fine-tuning, including Query, Key, Value, Output, Up, Gate, and Down projection. We fix the trainable parameter budget and only fine-tune one component.}
    \label{fig:design:component}
    \vspace{-1.3em}
\end{figure}

\begin{table}[h]
\vspace{-1em}
\caption{Comparison of various channel selection strategies on the commonsense and arithmetic reasoning datasets for the \texttt{LLama3-8B}. We report the average accuracy (\%) as the evaluation metric.}
\small\centering
\resizebox{\linewidth}{!}{
\begin{tabular}{lcccccccccc}
\toprule
\multirow{2}{*}{\textbf{Task}} & \multirow{2}{*}{\textbf{\model-R}} & \multicolumn{2}{c}{\textbf{\model-W}} & \multicolumn{2}{c}{\textbf{\model-A}}  & \multicolumn{2}{c}{\textbf{\model-S}}  & \multicolumn{2}{c}{\textbf{\model-G}}   \\
\cmidrule(r{0.3em}){3-4}
\cmidrule(r{0.1em}){5-6}
\cmidrule(l{0.1em}){7-8}
\cmidrule(l{0.3em}){9-10}
& & Large & Small & Large & Small & Large & Small & Large & Small \\
\midrule
Commonsense & 86.6 & 85.9\textcolor{mydarkblue}{\textsubscript{(-0.7)}} & 85.3\textcolor{mydarkgray}{\textsubscript{(-1.3)}} & 84.7\textcolor{mydarkgray}{\textsubscript{(-1.9)}} & 87.3\textcolor{mydarkgreen}{\textsubscript{(+0.7)}} & 85.1\textcolor{mydarkgray}{\textsubscript{(-1.5)}} & 87.2\textcolor{mydarkgreen}{\textsubscript{(+0.6)}} & 85.4\textcolor{mydarkgray}{\textsubscript{(-1.2)}} & 86.2\textcolor{mydarkblue}{\textsubscript{(-0.4)}}\\
Arithmetic & 79.6 & 78.4\textcolor{mydarkgray}{\textsubscript{(-1.2)}} & 78.4\textcolor{mydarkgray}{\textsubscript{(-1.2)}} & 77.1\textcolor{mydarkgray}{\textsubscript{(-2.5)}} & 80.0\textcolor{mydarkgreen}{\textsubscript{(+0.4)}} & 76.8\textcolor{mydarkgray}{\textsubscript{(-2.8)}} & 79.8\textcolor{mydarkgreen}{\textsubscript{(+0.2)}} & 77.8\textcolor{mydarkgray}{\textsubscript{(-1.8)}} & 79.5\textcolor{mydarkblue}{\textsubscript{(-0.1)}} \\

\bottomrule
\end{tabular}}
\vspace{-0.7em}
\label{tab:selection}
\end{table}

\textbf{Selection across Channels}: In Section~\ref{sec:3.2}, we discuss several strategies for channel selection. In our main experiments, we employ random selection to ensure fair comparisons with baseline methods, as these approaches treat all channels with equal importance. \mbox{However, the sparse structure of \model} offers controllability during fine-tuning, \mbox{allowing us to prioritize important channels in the selection} process to further boost performance. \mbox{Table~\ref{tab:selection} compared nine different strategies, incorporating five} varying selection metrics (i.e., random, weight, activation, weight-activation product, and gradient), each choosing either the largest or smallest values. For \model-A, \model-S, and \model-G, we employ 1\% of the fine-tuning data as a calibration set, introducing only negligible overhead during inference. 

\mbox{Our results demonstrate that random selection serves as a strong baseline due to its unbiased nature.} \mbox{Among heuristic metrics, selecting channels with the smallest activations (i.e., \model-A and \model-S)} outperforms random selection. This indicates that these channels contain less task-specific information, enabling us to inject new knowledge through fine-tuning while preserving pre-trained capabilities in other channels. In contrast, other strategies introduce bias that compromises model performance. \mbox{Notably, the counterintuitive accuracy decrease in \model-G (Large) suggests that channels with large} gradients contain task-related pre-trained knowledge, and modifying them will disrupt these abilities.


\vspace{-0.8em}
\section{Analysis}
\label{sec:efficient}
\vspace{-0.8em}

Having demonstrated the strong generalization capability and overall performance of \model, we now further explore its training efficiency and serving scalability compared to other fine-tuning techniques.
\vspace{-2em}

\subsection{Training Efficiency}
\vspace{-0.7em}
To evaluate training efficiency, we examine two crucial metrics: peak memory footprint and average training latency. These numbers are measured on a single Nvidia A100 (80G) SXM GPU. We keep a comparable number of parameters for all methods. To obtain the average latency, we fine-tune the model for 50 runs, each run including 200 iterations, with 10 warmup runs excluded in measurement. 

\begin{figure}[!ht]
    \centering
    \vspace{-0.7em}
    \includegraphics[width=\linewidth]{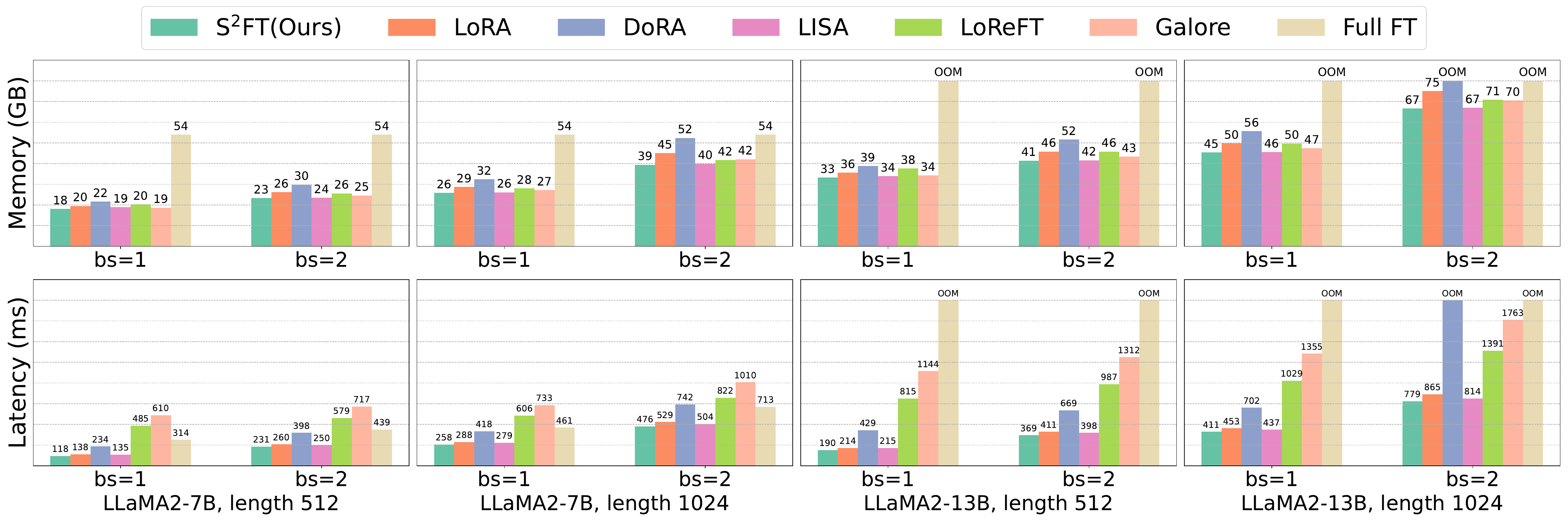}
    \caption{Comparison of memory and computation efficiency during training on the \texttt{LLaMA2-7B/13B} with varying sequence lengths and batch sizes. Average latency and peak memory usage are reported. \model significantly improves training latency while reducing memory footprint compared to baselines.}
    \label{fig:training_efficiency}
    \vspace{-0.8em}
\end{figure}

\mbox{In Figure~\ref{fig:training_efficiency}, we thoughtfully profile \model on various model sizes, sequence lengths, and batch sizes.} Compared to Full FT, \model saves 1.4-3.0$\times$ memory, and speedups fine-tuning by 1.5-2.7 times. When benchmarking against other PEFT methods, \model establishes new standards for efficiency, offering average reductions of 2\% in memory usage and 9\% in latency. Notably, \model outperforms the widely adopted LoRA, achieving about 10\% improvement in both metrics by avoiding the need to store new parameters and perform additional calculations. Our partial back-propagation algorithm further improves efficiency by saving unnecessary forward activations and backward calculations.
\vspace{-0.7em}

\subsection{Serving Scalability}
\vspace{-0.5em}
While \model avoids additional inference overhead for a single fine-tuned model through in-place gradient updates, we will now discuss its scalability for serving thousands of fine-tuned models. To begin, we introduce the unmerged computation paradigm of \model: Given a pre-trained weight matrix $W^{pre} \in \R^{d \times k}$ and its corresponding fine-tuned weight matrix $W$ with sparsity level $s$, we define the weight difference as $\Delta W = W - W^{\text{pre}}$. Similar to Section~\ref{sec:theory}, $\Delta W$ can be decomposed into the product of a weight matrix $V \in \R^{k \times s}$ and a permutation matrix $U \in \R^{d \times s}$. This decomposition allows us to ``unmerge" an adapter $\Delta W = UV^\top$ from $W$, thereby sharing similarities with other adapters during inference. Following Zhong et al.~\cite{zhong2024multi}, we consider three different adapter composition scenarios:

\textbf{Adapter Fusion.} To combine knowledge from multiple trained adapters, we employ weighted fusion when fine-tuning is impractical due to limited data access or computational resources. However, this approach degrades performance. In Table~\ref{tab:fusion}, we compare the effectiveness of LoRA and \model when combining adapters trained separately on commonsense and arithmetic reasoning tasks, where we consider both fine-tuning overlapped and non-overlapped parameters for different adapters in \model. Our results show that \model with non-overlapped parameters achieves the best performance, while the overlapped variant shows inferior results. This is because \model (non-overlap) modifies orthogonal low-rank spaces for different tasks. Similarly, LoRA largely retains task-specific capabilities during adapter fusion by optimizing low-rank projection matrices to create separate spaces for each adapter.


\begin{table}[h]
\vspace{-1.2em}
\caption{Adapter Fusion Results for LoRA and \model trained on the commonsense and arithmetic reasoning datasets using the \texttt{LLama3-8B}. We report the average accuracy (\%) as the evaluation metric.}
\small\centering
\resizebox{\linewidth}{!}{
\begin{tabular}{lccccccc}
\toprule
\multirow{2}{*}{\textbf{Task}} & \multicolumn{3}{c}{\textbf{LoRA}} & \multicolumn{4}{c}{\textbf{\model}} \\
\cmidrule(r{0.1em}){2-4}
\cmidrule(l{0.1em}){5-8}
& Commonsense & Arithmetic  & Fused & Commonsense & Arithmetic & Fused (overlap) & Fused (non-overlap) \\
\midrule
Commonsense & 83.1 & \textcolor{mydarkgray}{32.1} & 79.8\textcolor{mydarkblue}{\textsubscript{(-3.3)}} & 86.6 & \textcolor{mydarkgray}{42.3} & 82.0\textcolor{mydarkgray}{\textsubscript{(-4.6)}} & 84.0\textcolor{mydarkgreen}{\textsubscript{(-2.6)}} \\
Arithmetic & \textcolor{mydarkgray}{12.0} & 77.2 & 71.6\textcolor{mydarkblue}{\textsubscript{(-5.6)}} & \textcolor{mydarkgray}{12.8} & 79.6 & 72.2\textcolor{mydarkgray}{\textsubscript{(-7.4)}} & 75.3\textcolor{mydarkgreen}{\textsubscript{(-4.3)}} \\

\bottomrule
\end{tabular}}
\label{tab:fusion}
\end{table}

\begin{figure}[h]
    \vspace{-1.5em}
    \centering
    \begin{subfigure}{0.32\textwidth}
        \centering
        \includegraphics[width=\textwidth]{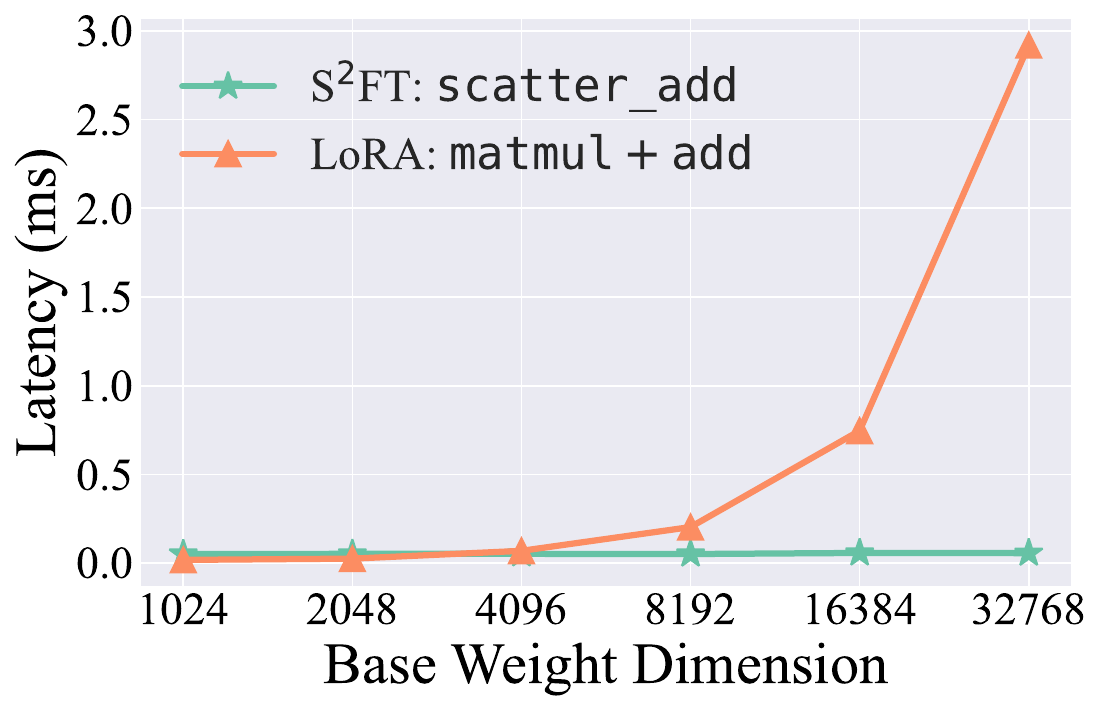}
        \caption{Switch Time on GPU}
    \label{fig:switch1}
    \end{subfigure}
    \hfill
    \begin{subfigure}{0.32\textwidth}
        \centering
        \includegraphics[width=\textwidth]{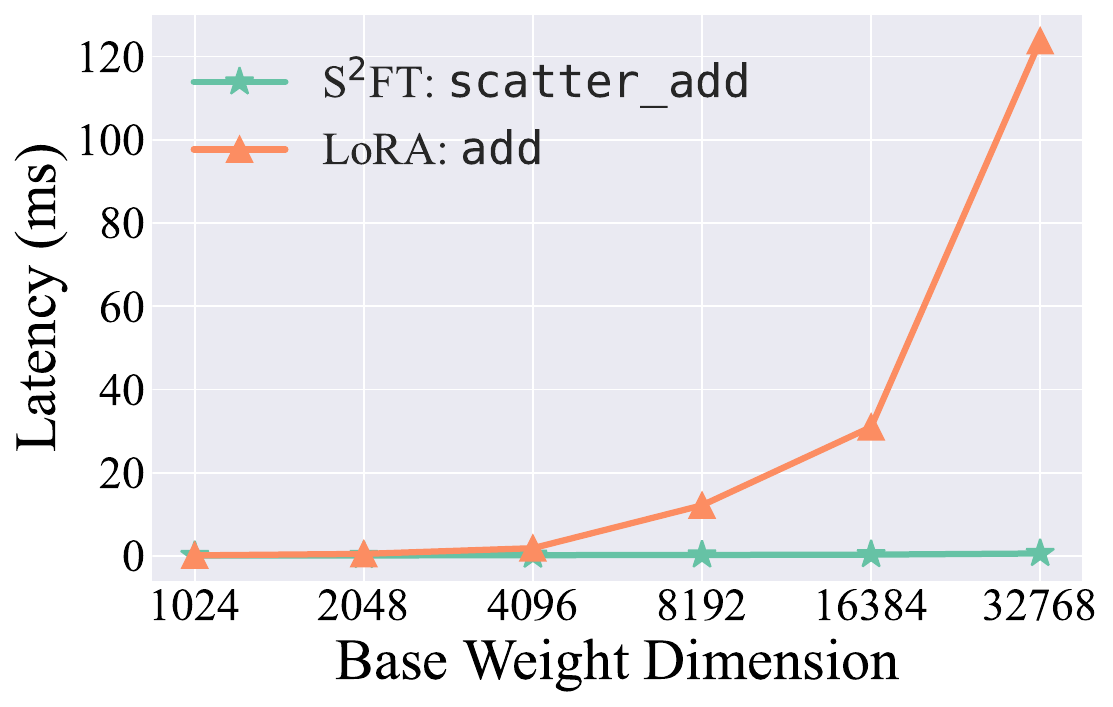}
        \caption{Switch Time on CPU}        \label{fig:switch2}
    \end{subfigure}
    \hfill
    \begin{subfigure}{0.32\textwidth}
        \centering
        \includegraphics[width=\textwidth]{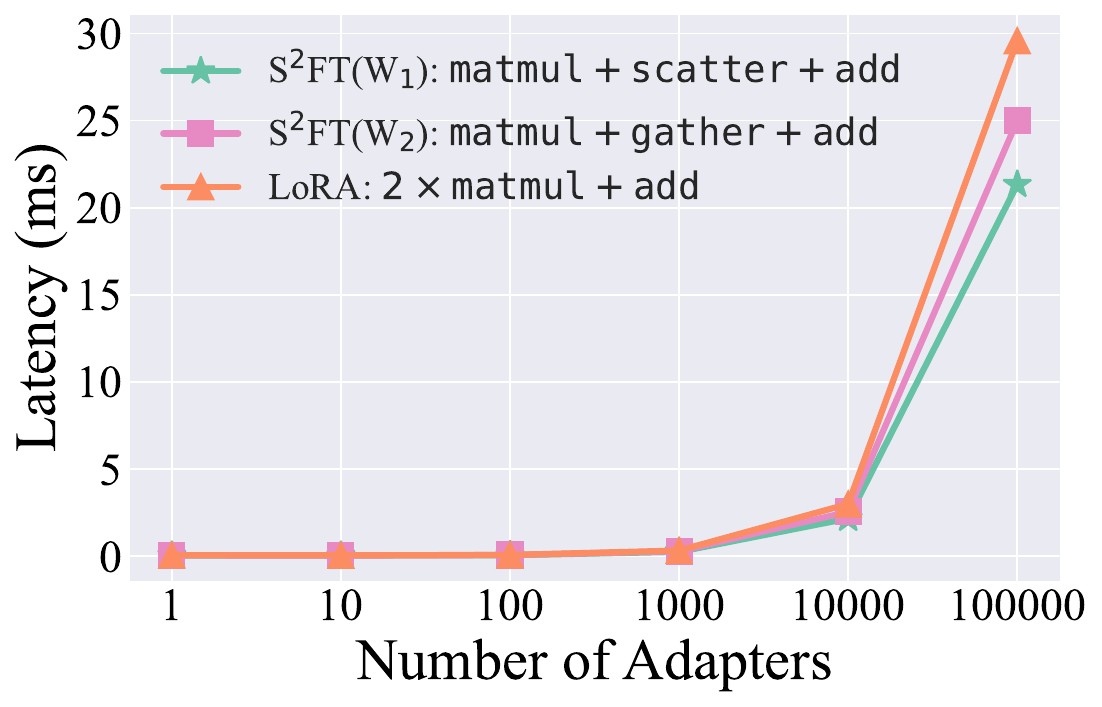}
        \caption{Parallelism Time on GPU}        \label{fig:parallel}
    \end{subfigure}
    \vspace{-0.5em}
    \caption{Comparison of latency for adapter switch and parallelism on a single linear layer. \model improves scalability for switch on GPU and CPU, while saving 22\% time during parallelism on GPU.}
    \label{fig:inference}
    \vspace{-1.8em}
\end{figure}

\textbf{Adapter Switch.} Another way to leveraging multiple adapters is to dynamically switch between them. This process involves four steps: unfusing the old adapter, unloading it from memory, loading the new adapter, and fusing it into the model. In such setting, LoRA needs two matrix multiplications (\texttt{matmul}) and two additions (\texttt{add}) on GPU whereas \model only requires two sparse addition (\texttt{scatter\_add}). In Figure~\ref{fig:switch1}, we increase the base weight dimension while maintaining a sparsity of 32 for \model and a low-rankness of 16 for LoRA. Notably, we observe that LoRA's switching time scales quadratically, while S$^2$FT remains nearly constant. Moreover, in I/O-constrained scenarios such as deployment on CPU, \model further accelerates adapter switch by only updating a small fraction of the original weights, \mbox{reducing the volume of I/O transfers, as time compared between \texttt{scatter\_add} and \texttt{add} in Figure~\ref{fig:switch2}.} 
\vspace{-1.4em}

\textbf{Adapter Parallelism.} To serve thousands of adapters in parallel, we decompose the computation into separate batched computations for $W^{pre}$ and $\Delta W$ following S-LoRA~\cite{slora}.  While LoRA requires two \texttt{matmul} and one \texttt{add} on GPU, \model reduces this to a \texttt{matmul}, an \texttt{add}, and either a \texttt{scatter} or \texttt{gather} for $W_1$ and $W_2$ in Section~\ref{sec:3.1}. Figure~\ref{fig:parallel} shows that \model achieves up to 22\% faster inference than LoRA under the same memory constraints, with more speedup as the number of adapters scales.
\vspace{-1em}

\section{Related Work}
\label{related_work}
\vspace{-0.7em}

PEFT methods reduce the fine-tuning cost for large models, which can be categorized into 4 groups:
\vspace{-0.3em}

\textbf{Adapter-based Fine-tuning} introduces additional trainable module into the original model. Series Adapters insert components between MHA or FFN layers~\cite{pfeiffer2020mad, series-adapter}, while parallel adapters add modules alongside existing components~\cite{parallel-adapter}. Recently, ReFT~\cite{wu2024reft} was introduced to directly learn interventions on hidden representations. However,
they introduce additional latency during inference.
\vspace{-1.4em}

\textbf{Prompt-based Fine-tuning} adds randomly-initialized soft tokens to the input (usually as a prefix) and train their embeddings while freezing the model weights~\cite{Prefix-Tuning, liu2023gpt, lester2021power}. These approaches result in poor performance compared to other groups, while come at the cost of significant inference overhead.
\vspace{-1.4em}

\textbf{Reparameterized Fine-tuning} utilizes low-rank projections to reduce trainable parameters while allowing operations with high-dimensional matrices. LoRA\cite{lora} and its recent variants like DoRA\cite{dora}, AsyLoRA~\cite{zhu2024asymmetry},  and FLoRA~\cite{si2024flora}, use low-rank matrices to approximate additive weight updates during training. To alleviate the limitations of low-rank structure, other work also add or multiply orthogonal matrices to enable high-rank updating, including MoRA~\cite{jiang2024mora}, OFT~\cite{qiu2023controlling}, and BOFT~\cite{liu2023parameter}. These methods require no additional inference cost as the weight updates can be merged into models.
\vspace{-0.3em}

\textbf{Sparse Fine-tuning} aims to reduce the number of fine-tuned parameters by selecting a subset of pre-trained parameters that are critical to downstream tasks while discarding unimportant ones. This kind of methods are commonly used in the pre-LLM era~\cite{guo2020parameter, zaken2021bitfit, sung2021training}. However, they cannot reduce the memory footprints due to their unstructured nature. Recent approaches address this limitation through three directions: (1) developing structured variants that sacrifice selection flexibility for better hardware efficiency~\cite{pan2024lisa, zhu2023lift}, (2) incorporating sparsity into LoRA~\cite{wang2024roselora, ding2023sparse, liu2024alora} but yield limited efficiency gains, or (3) using sparse operators for lower memory cost but slow down training~\cite{ansell2024scaling, panda2024lottery, bhardwaj2024rapid}.
\vspace{-1.3em}

Our work is based on the last category but achieving better performance and efficiency simultaneously. Additionally, we focus on scalable inference of PEFT methods, with \model being the only approach that enables effective fusion, rapid switching, and efficient parallelism when serving multiple adapters.

\vspace{-0.5em}

\vspace{-0.6em}
\section{Conclusion}
\label{conclusion}
\vspace{-0.8em}

This paper introduces \model, a novel PEFT family that simultaneously achieves high quality, efficient training, and scalable serving for LLM fine-tuning. \model accomplishes this by selecting sparsely and compute densely. It selects a subset of heads and channels to be trainable for the MHA and FFN modules, respectively. The weight matrices from the two sides of the coupled structures in LLMs are co-permuted to connect the selected components into dense matrices, and only these parameters are updated using dense operations. We hope \model can be considered as a successor to LoRA for PEFT.

\section{Acknowledgement}

We would like to thank Songlin Yang, Kaustubh Ponkshe, Raghav Singhal, Jinqi Luo, Tianqi Chen, Hanshi Sun, and Chris De Sa for their helpful discussions, and the authors of LLM-Adapters, ReFT, and DoRA for providing detailed results.

{
\small
\bibliographystyle{plain}
\bibliography{reference}
}

\clearpage
\appendix

\section{Limitations}

While our work demonstrates the effectiveness of \model for LLM fine-tuning, several promising directions remain unexplored. First, extending \model to other architectures with coupled structures, such as CNNs and RNNs, can broaden its applicability. Second, verifying our approach beyond language tasks, particularly in large vision/multi-modal models, will enhance its versatility. Third, exploring more selection strategies can provide deeper insights into optimal fine-tuning protocols due to the controllability in \model. Fourth, scaling our method to larger models requires further experiments. Finally, although our work confirms the feasibility of scalable and efficient deployment during inference, developing a practical serving system for \model remains an important next step.

\section{Broader Impacts}

Since our work focuses on PEFT, it leads to a reduction in hardware resource and
energy consumption. Given the growing adoption of LLMs across diverse domains and the corresponding surge in fine-tuning demands, \model should represent an important step toward more sustainable AI development. 

\section{Detailed Experimental Setups for Section~\ref{sec:observation}}
\label{app:observation}

In this study, we used SpFT, LoRA, and Full FT to fine-tune the \texttt{LLaMA-3-8B} model on the Math10K dataset~\cite{llm-adapters}. The Math10K dataset combines training sets from GSM8K~\cite{cobbe2021training}, MAWPS~\cite{koncel2016mawps}, and AQuA~\cite{ling2017program}, augmented with chain-of-thought steps generated by language models. We conducted training for 3 epochs with a batch size of 64. For both PEFT methods--SpFT and LoRA--we fine-tune with three ratios of trainable parameters for all linear layers: $p = 10\%, 1\%, 0.1\%$.  The model's performance is evaluated on both arithmetic and commonsense reasoning tasks, representing near out-of-distribution (OOD) and far OOD generalization scenarios, respectively. The arithmetic reasoning dataset comprises seven subtasks: MultiArith~\cite{roy2016solving}, GSM8K, AddSub~\cite{hosseini2014learning}, AQuA, SingleEq~\cite{koncel2015parsing}, SVAMP~\cite{patel-etal-2021-nlp}, and MAWPS. The commonsense reasoning dataset includes eight subtasks: BoolQ~\cite{clark2019boolq}, PIQA~\cite{bisk2020piqa}, SocialQA~\cite{sap2019socialiqa}, HellaSwag~\cite{zellers2019hellaswag}, WinoGrande~\cite{sakaguchi2021winogrande}, ARC-challenge~\cite{clark2018think}, ARC-easy~\cite{clark2018think}, and OpenbookQA~\cite{mihaylov2018can}.
Based on task complexity within arithmetic reasoning (accuracy $\geq$ 90\%), we group MultiArith, AddSub, SingleEq, and MAWPS as easy subtasks, while the remaining ones are classified as hard subtasks. This stratification enables us to evaluate whether the model develops advanced reasoning abilities beyond memorizing basic arithmetic operations from the training data.

\section{Detailed Selection Strategies in Section~\ref{sec:method}}
\label{app:selection}

For the five selection strategies described in Section~\ref{sec:3.2}, we will detail the methods for identifying and selecting important subsets within each linear layer of both MHA and FFN modules in LLMs.

\begin{enumerate}    
[itemsep=0.0pt,topsep=0pt,leftmargin=*]
    \item \textbf{\model-R (\model)}: In this strategy, we will randomly select some heads for the MHA modules and select a few channels for the FFN modules. For the output projection, all channels in the selected heads will be included to enable dense-only computation. In the up and gate projections, we will select a subset of columns, while for the down projection, a few trainable rows will be chosen.
    \item \textbf{\model-W}: This variant selects subsets based on the weight magnitudes (i.e., $\Vert W \Vert_2$) in the MHA and FFN modules. We will test subsets corresponding to both the largest and smallest weights.
    \item \textbf{\model-A}: This variant selects subsets based on the magnitude of activations (i.e., $\Vert A \Vert_2$) on a calibration set, using $1\%$ of the fine-tuning data. Since collecting activations requires only forward passes, this approach maintains the same memory footprint as inference and incurs a negligible increase in training time. Similarly, we evaluate both the largest and smallest activation variants.
    \item \textbf{\model-S}: The Top-K subsets are ranked and selected by the product of the weight and activation magnitudes (i.e, $\Vert W \Vert_2 \cdot \Vert A \Vert_2$). The activation values are collected in a manner similar to \model-A.
    \item \textbf{\model-G}: This variant selects subsets based on the magnitude of gradients on the calibration set. Since gradients are collected without updating the model, we calculate and discard gradients layer by layer during back-propagation similar to Galore~\cite{zhao2024galore}, requiring minimal additional memory.
    
\end{enumerate}

\section{Detailed Experimental Setups for Section~\ref{sec:exp}}
\label{app:exp}

Detailed selection strategies and number of trainable parameters are presented in Section~\ref{sec:exp}. 

\subsection{Dataset Description}

\textbf{Commonsense Reasoning.} The commonsense reasoning dataset comprise eight subsets: BoolQ~\cite{clark2019boolq}, PIQA~\cite{bisk2020piqa}, SocialQA~\cite{sap2019socialiqa}, HellaSwag~\cite{zellers2019hellaswag}, WinoGrande~\cite{sakaguchi2021winogrande}, ARC-challenge~\cite{clark2018think}, ARC-easy~\cite{clark2018think}, and OpenbookQA~\cite{mihaylov2018can}. Following the experimental setup of LLM-Adapters~\cite{llm-adapters}, we split each dataset into training and test sets. Subsequently, we combine the training data from all eight tasks into a single fine-tuning dataset and evaluate performance on the individual test dataset for each task.

\textbf{Arithmetic Reasoning.} We followed Hu et al.~\cite{llm-adapters} and evaluated \model on seven math reasoning tasks, including MultiArith~\cite{roy2016solving}, GSM8K~\cite{cobbe2021training}, AddSub~\cite{hosseini2014learning}, AQuA~\cite{ling2017program}, SingleEq~\cite{koncel2015parsing}, SVAMP~\cite{patel-etal-2021-nlp} and MAWPS~\cite{koncel2016mawps}. Our fine-tuning employed the Math10K dataset~\cite{llm-adapters}, which combines training sets from GSM8K, MAWPS, and AQuA, augmented with LM-generated chain-of-thought steps. Therefore, these three tasks are considered ID, while the remaining four are classified as OOD tasks.

\textbf{Instruction Following.} To further showcase \model's superior generalization ability, we employ the instruction-following fine-tuning task with Alpaca GPT-4 dataset, which comprises 52k samples generated by GPT-4~\cite{gpt4} based on inputs from Alpaca~\cite{alpaca}. Performance is measured on \mbox{MT-Bench~\cite{mt-bench}}, featuring 80 high-quality, multi-turn questions designed to assess LLMs on eight different aspects.

\subsection{Hyperparameter Description}
Additional hyperparameter configurations for all tasks are provided in Table~\ref{commonsense_hp}. We maintain the same hyperparameter settings across the \texttt{LLaMA-7/13B}, \texttt{LLaMA2-7B},  \texttt{LLaMA3-8B}, and \texttt{Mistral-7B} models.

\begin{table}[htbp]
    \footnotesize
    \centering
    \caption{Hyperparameter configurations of \model on various base models across three tasks.}
    \label{commonsense_hp}
    \begin{tabular}{@{}cccccc@{}}
        \toprule
        \textbf{Hyperparameters} & \textbf{Commonsense Reasoning} & \textbf{Arithmetic Reasoning} & \textbf{Instruction Following}\\ \midrule
        Optimizer                                           & AdamW             & AdamW  & AdamW                   \\
        LR                                                  & 2e-4              & 1e-3  & 2e-5               \\
        LR Scheduler                                        & linear            & linear  & cosine        \\
        Batch size                                          & 16$\times$4              & 16$\times$4 & 16$\times$4                      \\
        Warmup Steps                                        & 100               & 100   & 0          \\
        Epochs                                              & 3                 & 3     & 1      \\
        \bottomrule
    \end{tabular}
\end{table}

\section{Proofs for Theoretical Results in Section~\ref{sec:theory}}
\label{app:theory}

Here we provide proofs for the results in Section~\ref{sec:theory}.

\subsection{Notation}

For a vector $a$, let $\|a\|$ be the $\ell_2$ norm of $a$. For $d_1 \geq d_2$, denote \mbox{a set of orthogonal matrices} by $\mathbb{O}_{d_1,d_2} := \{R \in \R^{d_1 \times d_2}: R^\top R = I_{d_2}\}$.
For a matrix $A \in \R^{d_1\times d_2}$, let $\|A\|_\F$ and $\|A\|_\op$ be the Frobenius norm and spectral norm of $A$, respectively.
Denote the condition number of $A$ by $\kappa_*(A) := \|A\|_\op / \lambda_*(A)$. Let $A^\dag$ be \mbox{Moore-Penrose inverse of $A$.
For a symmetric matrix} $A$, denote its effective rank by $r_e(A) := \tr(A) / \|A\|_\op$.
Note that $r_e(A) \leq \rank(A)$ always holds.
For $a, b \in \R$, we let $a \vee b := \max(a, b)$ and $a \wedge b := \min(a, b)$.
For a matrix $A \in \R^{d_1 \times d_2}$, let $\SVD_r(A) := \Phi_r(A) \Lambda_r(A) \Psi_r^\top(A)$ be the top-$r$ singular value decomposition of $A$, where $\Phi_r(A) \in \mathbb{O}_{d_1,r}$ and $\Psi_r(A) \in \mathbb{O}_{d_2,r}$ are top-$r$ left and right \mbox{singular vectors of $A$, respectively}, and $\Lambda_r(A) = \diag(\lambda_1(A), \dots, \lambda_r(A)) \in \R^{r \times r}$ is a diagonal matrix of singular values of $A$, where $\lambda_j(A)$ denotes the $j$-th largest singular value of $A$.
Define $\Phi_*(A) := \Phi_{\rank(A)}(A)$ and $\Psi_*(A) := \Psi_{\rank(A)}(A)$ as the left and right singular vectors of $A$ corresponding to non-zero singular values, respectively. 
Define the smallest \textit{positive} singular value of $A$ as $\lambda_*(A) = \lambda_{\rank(A)}(A)$ and let $\Lambda_*(A) = \Lambda_{\rank(A)}(A)$.
For a deep learning model fine-tuned on $n$ i.i.d. samples $(x_i^{(\id)}, y_i^{(\id)}) \subset \R^p \times \R^q$, we say an event $\mathcal{F}$ occurs with high probability when $\P(\mathcal{F}) = 1 - \exp(-\Omega(\log^2(n+p+q)))$.

\subsection{Setup}

We consider multivariate regression task.
Using $n$ i.i.d. samples $(x_i^{(\id)}, y_i^{(\id)}) \subset \R^p \times \R^q$ from in-distribution task, we fine-tune a pre-trained network $f^\pre: \R^p \to \R^q$ for better prediction.

\paragraph{Deep Linear Networks}
We consider deep linear networks of the form $x \mapsto W_L W_{L-1} \dots W_1 x: \R^d \to \R^p$, where $W_\ell \in \R^{d_\ell \times d_{\ell-1}}$, with $d_L = q$ and $d_0 = p$. 
In comparison to multi-head attention transformers, each row of $W_\ell$ can be viewed as corresponding to the parameters in a single head.
Let $f^\pre(x) = W_L^\pre W_{L-1}^\pre \dots W_1^\pre x: \R^p\to \R^q$ represent a pre-trained neural network.
We denote $\oW_{\ell}^\pre := W_L^\pre W_{L-1}^\pre \dots W_{\ell}^\pre \in \R^{d_L \times d_{\ell-1}}$ as the weights up to the $\ell$-th layer, and $\uW_\ell^\pre := W_\ell^\pre W_{\ell-1}^\pre \dots W_1^\pre \in \R^{d_\ell \times d_0}$ as the weights above the $\ell$-th layer, with the promise that $\uW_0^\mathrm{pre} = I$.
Deep linear networks have been widely used to facilitate the theoretical analysis of modern complex deep neural networks \citep{saxe2013exact,kawaguchi2016deep,lu2017depth,hardt2016identity,laurent2018deep,arora2019implicit}.

\paragraph{Fine-Tuning}
We employ $\ell_2$ distance as the error metric.
Given a pre-trained network $f^\pre$, we fine-tune its $\ell$-th layer by minimizing the empirical in-distribution risk $\mathcal{R}_n^{(\id)}(f) := (1/n) \sum_{i \in [n]} \|y_i^{(\id)} - f(x_i^{(\id)})\|^2$, where $(x_i^{(\id)}, y_i^{(\id)}) \subset \R^p \times \R^q$ are $n$ i.i.d. observations from in-distribution task.
More specifically, we consider a class of rank-$d$ adaptation defined as
\begin{align}
    f_{\ell,U,V}(x) := \oW^\pre_{\ell+1} (W^\pre_\ell + U V^\top) \uW_{\ell-1}^\pre x,\label{eq: rank s adaptation ap}
\end{align}
where $U \in \R^{d_\ell\times d}$ and $V \in \R^{d_{\ell-1} \times d}$ are parameters to fine-tune.
Note that by regarding multiple consecutive layers as a single layer, our settings can be extended to multi-layer fine-tuning.

We specifically compare two fine-tuning methods: LoRA and \model.
\begin{itemize}[itemsep=0.0pt,topsep=0pt,leftmargin=*]
    \item \textbf{LoRA.} 
        For a fixed $\ell \in [L]$, and low-rankness level $1 \leq r \leq \min\{d_\ell, d_{\ell-1}\}$, we train the low-rank matrices $(U, V)$ in \eqref{eq: rank s adaptation ap} by minimizing the empirical in-distribution risk via gradient descent.
        Motivated from the previous results that gradient descent has implicit regularization \citep{zhang2021understanding,gunasekar2017implicit,arora2019implicit}, we directly consider the minimum norm solutions:
        \begin{align}
            (U^\lora, V^\lora) &\in \argmin_{U, V} \|(U, V)\|_\F^2\ \ \text{ s.t. $(U, V)$ minimizes $\mathcal{R}_n^{(\id)}(f_{\ell,U,V})$}.\label{eq: lora adaptation matrix}
        \end{align}
    \item \textbf{\model.}
        For a fixed $\ell \in [L]$, and a sparsity level $s = \lfloor r \cdot \frac{d_\ell + d_{\ell-1}}{d_{\ell-1}} \rfloor$, we train only $V$ in \eqref{eq: rank s adaptation ap} with the fixed choice of $U \gets U_S^\sft := [e_{a_1}; e_{a_2}; \dots; e_{a_s}]$, which specifies $s$ channels to fine-tune, where $S = \{a_1, a_2, \dots, a_s\} \subset [d_\ell]$. Here $e_a$ is the standard basis vector with the $a$-th entry being $1$.
        We minimize the empirical in-distribution risk via gradient descent. 
        Similar to LoRA, we consider the following minimum norm solution:
        \begin{align}
            V^\sft = \argmin_V \|V\|_\F^2 \ \ \text{ s.t. $V$ minimizes $\mathcal{R}_n^{(\id)}(f_{\ell,U_S^\sft,V})$}.\label{eq: sft adaptation matrix}
        \end{align}
\end{itemize}

\paragraph{Data Generating Process}

As a simplification of the data generating process, we consider multiple linear regression. 
Assume that the in-distribution data $(x^{(\id)}, y^{(\id)}) \in \R^{p+q}$ and out-of-distribution data $(x^{(\ood)}, y^{(\ood)}) \in \R^{p+q}$ are generated according to 
\begin{align}
    y^{(k)} &= B^{(k)} x^{(k)} + \epsilon^{(k)}, \ \ k \in \{\id, \ood\},\label{model: linear regression}
\end{align}
where $B^{(k)} \in \R^{q \times p}$, and $\epsilon^{(k)} \in \R^q$ is the error term satisfying $\E[\epsilon^{(k)} | x^{(k)}] = 0$.
Assume that $\Sigma_\epsilon^{(k)} := \E[\epsilon^{(k)} \epsilon^{(k) \top}] \in \R^{q \times q}$ exists and $\E[x^{(k)}] = 0$. The signal covariance matrix is denoted by $\Sigma_x^{(k)} := \E[x^{(k)} x^{(k) \top}] \in \R^{p \times p}$.


We define the in-distribution and out-of-distribution risks of $f: \R^p \to \R^q$ as:
\begin{align*}
     \mathcal{R}^{(k)}(f) = \E[\|y^{(k)} - f(x^{(k)})\|], \ \ k \in \{\id, \ood\}.
\end{align*}

For notational brevity, we can write $W^\pre = \uW_L^\pre \in \R^{q \times p}$. 
Let $X^{(\id)} := (x_1^{(\id)}, \dots, x_n^{(\id)}) \in \R^{p \times n}$, \mbox{$Y^{(\id)} := (y_1^{(\id)}, \dots, y_n^{(\id)}) \in \R^{q \times n}$, and $E^{(\id)} = (\epsilon_1^{(\id)}, \dots, \epsilon_n^{(\id)}) := Y^{(\id)} - B^{(\id)} X^{(\id)} \in \R^{q \times n}$.
Denote} the in-distribution sample covariance matrices by $\hat \Sigma_x^{(\id)} := (1/n) X^{(\id)} X^{(\id) \top}$, $\hat \Sigma_\epsilon^{(\id)} := (1/n) E^{(\id)} E^{(\id) \top}$, \mbox{$\hat \Sigma_{x,\epsilon}^{(\id)} := (1/n) X^{(\id)} E^{(\id) \top}$, $\hat \Sigma_{\epsilon,x}^{(\id)} = \hat \Sigma_{x,\epsilon}^{(\id) \top}$. Define $\check \Sigma_{x,\epsilon}^{(k)} = (X^{(\id) \top})^\dag E^{(\id) \top}$, $\hat A := (\uW^\pre_{\ell-1} \hat \Sigma_x^{(\id)} \uW^{\pre \top}_{\ell-1})^{1/2}$,} $A := (\uW^\pre_{\ell-1} \Sigma_x^{(\id)} \uW^{\pre \top}_{\ell-1})^{1/2}$, $\Phi' := \Phi_*(\oW^\pre_{\ell+1})$, $\Phi_S'' := \Phi_*(\oW^\pre_{\ell+1} U_S^\sft)$, $D = B^{(\id)} - W^\pre$, $\hat D := B^{(\id)} - W^\pre + \check\Sigma_{\epsilon,x}^{(\id)}$.
Also define $M := \Phi'^\top D \Sigma_x^{(\id)} \uW^{\pre \top}_{\ell-1} A^\dag$ and $\hat M := \Phi'^\top \hat D \hat \Sigma_x^{(\id)} \uW^{\pre \top}_{\ell-1} \hat A^\dag$.
Let $\hat \Psi' := \Psi_*(\hat A)$, and $G_\ell^{(\id,\ood)} := (\uW^\pre_\ell \Sigma_x^{(\id) 1/2})^\dag \uW^\pre_\ell \Sigma_x^{(\ood) 1/2}$ be a matrix that captures the covariate shift at the $\ell$-th layer.


We consider fine-tuning the $\ell$-th ($\ell \in [L]$) layer of the pre-trained deep linear network $f^\pre(x) = W_L^\pre W_{L-1}^\pre \dots W_1^\pre x$ using in-distribution observations $(x_i^{(\id)}, y_i^{(\id)})_{i \in [n]}$.

To measure the performance of models, we define the excess risks of $f$ for the task $k \in \{\id, \ood\}$ as
\begin{align*}
    \mathcal{E}^{(k)}(f) := \E[\|y^{(k)} - f(x^{(k)})\|^2] - \inf_{f'} \E[\|y^{(k)} - f'(x^{(k)})\|^2],
\end{align*}
where the infimum is taken over all square integrable functions.

\subsection{Assumptions}\label{sec: assumptions ap}

We assume that $\uW^\pre_{\ell-1} \Sigma_x^{(\id)} \uW^{\pre \top}_{\ell-1} \neq 0$, since otherwise $\uW^\pre_{\ell-1} x^{(\id)} = 0$ almost surely and fine-tuning the $\ell$-th layer does not improve the performance of the pre-trained model. 
Define the in-distribution prediction residuals for the pre-trained model $f^\pre$ by $\Sigma^{(\id)}_f := \E[(B^{(\id)} x^{(\id)} - W^\pre x^{(\id)}) (B^{(\id)} x^{(\id)} - W^\pre x^{(\id)})^\top]$. Note that $\mathcal{E}^{(\id)}(f^\pre) = \tr(\Sigma^{(\id)}_f)$.
We also assume that $\|\Sigma^{(\id)}_f\|_\op > 0$, since otherwise $\mathcal{E}^{(\id)}(f^\pre) = \|\Sigma^{(\id)}_f\|_\F^2 = 0$ and there is no room for improvement from the pre-trained model.

Next, we introduce several assumptions.
\begin{assumption}[Sub-Gaussianity]\label{asm: sub gaussian x e ap}
    Assume that there exist some constants $c_1, c_2 \in (0, \infty)$ such that $(x^{(\id)}, \epsilon^{(\id)})$ in the model~\ref{model: linear regression} satisfies
    \begin{align*}
        \gamma^\top \Sigma_x^{(\id)} \gamma &\geq c_1 \|\gamma^\top x^{(\id)}\|_{\psi_2}^2, \ \ \text{ and } \ \ \gamma'^\top \Sigma_\epsilon^{(\id)} \gamma' \geq c_2 \|\gamma'^\top \epsilon^{(\id)}\|_{\psi_2}^2,
    \end{align*}
    for any $\gamma \in \R^p$ and $\gamma' \in \R^q$, where $\|y\|_{\psi_2}$ is the sub-Gaussian norm defined as
    \begin{align*}
        \|y\|_{\psi_2} := \inf \{\upsilon > 0: \E[\exp(y^2/\upsilon^2)] \leq 2\}
    \end{align*}
    for a random variable $y$ taking values in $\R$.
\end{assumption}

\begin{assumption}[Sufficiently Many Observations]\label{asm: regime ap}
    Assume that
    \begin{align*}
        n &\gg (\kappa_*^4(A) r_e(A^2) + \kappa_*^2(\Sigma_x^{(\id)}) r_e(\Sigma_x^{(\id)}) + r_e(D \Sigma_x^{(\id)} D^\top)) \log^2(n+p+q),\\
        n &\gg \frac{\|\Sigma_\epsilon^{(\id)}\|_\op}{\|D \Sigma_x^{(\id)} D^\top\|_\op} (r_e(\Sigma_\epsilon^{(\id)}) + r_e(A^2)) \log^2(n+p+q),
    \end{align*}
    and
    \begin{align*}
        n \gg \kappa_*^4(\Sigma_x^{(\id)}) \frac{r_e(\Sigma_x^{(\id)}) (r_e(\Sigma_\epsilon^{(\id)}) + r_e(\Sigma_x^{(\id)}))}{r_e(A^2)} \log^2(n+p+q).
    \end{align*}
\end{assumption}

\begin{assumption}[Eigengap Condition]\label{asm: eigengap ap}
    Assume that there exists some constant $C_{\textnormal{g}} > 0$ such that
    \begin{align*}
        \frac{\lambda_s(\Phi'^\top D \Sigma_x^{(\id)} \uW^{\pre \top}_{\ell-1} A^\dag)}{\lambda_s(\Phi'^\top D \Sigma_x^{(\id)} \uW^{\pre \top}_{\ell-1} A^\dag) - \lambda_{s+1}(\Phi'^\top D \Sigma_x^{(\id)} \uW^{\pre \top}_{\ell-1} A^\dag)} \lesssim C_{\textnormal{g}}
    \end{align*}
    holds.
\end{assumption}
Assumption~\ref{asm: eigengap ap} is necessary to identify the rank-$r$ approximation of $M$, which is used to derive the risk of LoRA. 

\begin{assumption}[Approximate Sparsity of Channels]\label{asm: sparsity ap}
    Assume that there exists some $S_0 \subset [d_\ell]$ with $|S_0| \leq s$ and $\delta > 0$ such that
    \begin{align*}
        \sum_{a \in [d_\ell] \setminus S_0} \|e_a^\top (\oW^\pre_{\ell+1})^\dag (B^{(\id)} - W^\pre) \Sigma_x^{(\id) 1/2}\|^2 \leq \delta^2 \|(\oW^\pre_{\ell+1})^\dag (B^{(\id)} - W^\pre) \Sigma_x^{(\id) 1/2}\|_\F^2
    \end{align*}
    holds.
\end{assumption}

\begin{assumption}[Distribution Shift]\label{asm: distribution shift ap}
    Assume that $\Sigma_x^{(\id)} = \Sigma_x^{(\ood)} = \Sigma_x$ for some  $\Sigma_x \in \R^{d \times d}$ and that $\|\Phi_*^\top(\oW^\pre_{\ell+1} U_S^\sft) (B^{(\ood)} - B^{(\id)}) \Sigma_x^{1/2}\|_\F^2 \leq \varepsilon^2 \mathcal{E}^{(\ood)}(f^\pre)$ for some $\varepsilon > 0$. 
\end{assumption}

\begin{assumption}[Condition Number]\label{asm: condition number ap}
    Assume that $\kappa_*(M) \lesssim 1$, $\kappa_*(\oW^\pre_{\ell+1}) \lesssim 1$, $\kappa_*(\Sigma^{(\id)}_f) \lesssim 1$ and $\kappa_*(\uW^\pre_{\ell-1} \Sigma_x^{(\id)} \uW^{\pre \top}_{\ell-1}) \lesssim 1$.
\end{assumption}
Note that Assumption~\ref{asm: condition number ap} is not essential to our analysis.

\subsection{Main Results}

We first demonstrate that LoRA and \model exhibit comparable memorization abilities. Next, we present a formal restatement of \ref{thm: sft and lora ood informal} that combine Theorems~\ref{thm: lora in-distribution}, \ref{thm: lora ood}, \ref{thm: sft in-distribution}, \ref{thm: sft ood}, and Lemma~\ref{lem: sparsity}.

\begin{theorem}\label{thm: sft and lora id}
    Suppose that Assumptions~\ref{asm: sub gaussian x e ap}, \ref{asm: regime ap}, \ref{asm: eigengap ap}, \ref{asm: sparsity ap}, and \ref{asm: condition number ap} hold. Choose $S$ such that $S \supset S_0$ holds.
    Let $U^\lora, V^\lora$ be the LoRA adaptation matrices defined in \eqref{eq: lora adaptation matrix}.
    Let $V^\sft$ be the \model adaptation matrices given $U_S^\sft$ defined in \eqref{eq: sft adaptation matrix}.
    Then, for all sufficiently large $n$, the following holds with probability $1 - \exp(-\Omega(\log^2(n+p+q)))$:
    for any $\eta > 0$,
    \begin{align*}
        \mathcal{E}^{(\id)}(f_{\ell,U_S^\sft,V^\sft}) &\leq (1 + \eta) (T_\bias^\sft)^2 + (1 + \eta^{-1}) (T_\variance^\sft)^2,\\
        \mathcal{E}^{(\id)}(f_{\ell,U^\lora,V^\lora}) &\leq (1 + \eta) (T_\bias^\lora)^2 + (1 + \eta^{-1}) (T_\variance^\lora)^2,
    \end{align*}
    where 
    \begin{align*}
        0 &\leq (T_\bias^\lora)^2 - \mathcal{E}^{(\id)}(f_\ell^\full) \simeq (T_\bias^\sft)^2 - \mathcal{E}^{(\id)}(f_\ell^\full) \lesssim \delta^2 \mathcal{E}^{(\id)}(f^\pre),\\
        (T_\variance^\sft)^2 &\lesssim (\|\Sigma_\epsilon^{(\id)}\|_\op + \|\Sigma^{(\id)}_f\|_\op) \frac{s d_{\ell-1} \log^2(n+p+q)}{n},\\
        (T_\variance^\lora)^2 &\lesssim (\|\Sigma_\epsilon^{(\id)}\|_\op + \|\Sigma^{(\id)}_f\|_\op) \frac{r (d_\ell + d_{\ell-1}) \log^2(n+p+q)}{n}.
    \end{align*}
\end{theorem}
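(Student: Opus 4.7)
}
The plan is to reduce both problems to a multivariate regression in a transformed coordinate system, and then separate a bias (approximation) term from a variance (estimation) term via the elementary inequality $\|a+b\|^2 \le (1+\eta)\|a\|^2 + (1+\eta^{-1})\|b\|^2$. Reparametrize the $\ell$-th-layer update by writing $\oW_{\ell+1}^\pre UV^\top \uW_{\ell-1}^\pre = \Phi' \Theta A$ for some $\Theta$ living in the range of $\Phi'^\top \cdot $ and domain $A^\dag \cdot$. Under this change of variables the population risk of $f_{\ell,U,V}$ has the classical quadratic form, the Bayes-optimal update is $\Theta^\ast = M$, and the minimum-norm empirical solution is driven by $\hat M$. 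The LoRA minimum-norm solution then corresponds, up to invertible linear maps, to the rank-$r$ Eckart--Young truncation of $\hat M$, while the \model solution with basis $U_S^\sft$ corresponds to the best approximation of $\hat M$ whose row support lies in $S$.

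\paragraph{Bias analysis.}
For the bias, I would analyze the \emph{population} minimizers in each constrained class. For \model, Assumption~\ref{asm: sparsity ap} guarantees that choosing $S \supset S_0$ yields an $s$-sparse-row approximation of $\Phi'^\top D \Sigma_x^{(\id) 1/2}$ whose residual has squared Frobenius norm at most $\delta^2 \|(\oW^\pre_{\ell+1})^\dag D \Sigma_x^{(\id) 1/2}\|_\F^2$. Translating back through $\oW^\pre_{\ell+1}$ using Assumption~\ref{asm: condition number ap} on $\kappa_*(\oW^\pre_{\ell+1})$ gives the additive $O(\delta^2 \mathcal{E}^{(\id)}(f^\pre))$ term over the full-fine-tuning bias $\mathcal{E}^{(\id)}(f_\ell^\full)$. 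For LoRA, the best rank-$r$ update yields bias $\sum_{j>r}\lambda_j^2(M)$, which is dominated by the same quantity since the sparse solution is always a feasible rank-$s\ge r$ competitor under our choice of $s$; this gives $(T_\bias^\lora)^2 \le (T_\bias^\sft)^2$ up to constants.

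\paragraph{Variance analysis.}
For the variance, I would pass from $M$ to $\hat M$ by substituting $\hat\Sigma_x^{(\id)} = \Sigma_x^{(\id)} + (\hat\Sigma_x^{(\id)} - \Sigma_x^{(\id)})$ and $\hat D = D + \check\Sigma_{\epsilon,x}^{(\id)}$ and expanding. The deviations of $\hat\Sigma_x^{(\id)}$, $\hat\Sigma_\epsilon^{(\id)}$ and $\hat\Sigma_{x,\epsilon}^{(\id)}$ from their population counterparts are controlled by sub-Gaussian covariance concentration (Assumption~\ref{asm: sub gaussian x e ap}) with the sample-size budget in Assumption~\ref{asm: regime ap}, yielding high-probability operator-norm bounds of order $\sqrt{r_e(\cdot)\log^2(n+p+q)/n}$. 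For \model, the estimator is a linear least-squares projection onto an $sd_{\ell-1}$-dimensional subspace, so a trace-type bound on the restricted noise plus residual covariance gives the claimed rate $(\|\Sigma_\epsilon^{(\id)}\|_\op + \|\Sigma_f^{(\id)}\|_\op)\cdot sd_{\ell-1}/n$. For LoRA, the Eckart--Young truncation couples the row and column subspaces nonlinearly, so I would use Wedin's $\sin\Theta$ theorem under the eigengap Assumption~\ref{asm: eigengap ap} to linearize the perturbation: the top-$r$ singular subspaces of $\hat M$ are $O(\|\hat M - M\|_\op)$-close to those of $M$, after which the effective number of free parameters becomes $r(d_\ell + d_{\ell-1})$, matching the stated bound.

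\paragraph{Main obstacles and assembly.}
The principal technical difficulty is the LoRA variance: the minimum-norm solution is implicitly defined by the spectral truncation of $\hat M$, and converting its fluctuation into the canonical $r(d_\ell + d_{\ell-1})/n$ rate requires delicate use of Wedin's theorem under Assumption~\ref{asm: eigengap ap} together with the condition-number control from Assumption~\ref{asm: condition number ap} (to prevent the transfer matrices $\Phi'$, $A$ and $\oW_{\ell+1}^\pre$ from inflating constants). A secondary subtlety is that the variance must carry both $\|\Sigma_\epsilon^{(\id)}\|_\op$ and $\|\Sigma_f^{(\id)}\|_\op$: the latter appears because the randomness of the covariates $X^{(\id)}$ interacts with the model residual $D x^{(\id)}$ through $\check\Sigma_{x,\epsilon}^{(\id)}$-like quantities, contributing a term of order $\|\Sigma_f^{(\id)}\|_\op \cdot (\text{eff. params})/n$ that must be isolated before applying concentration. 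Once the bias and variance bounds are in hand, the final statement follows by combining them through $(1+\eta)(T_\bias)^2 + (1+\eta^{-1})(T_\variance)^2$ and taking a union bound across the high-probability events of Assumption~\ref{asm: regime ap}.
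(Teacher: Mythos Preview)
Your proposal is correct and follows essentially the same route as the paper: closed-form minimum-norm solutions identifying LoRA with $\SVD_r(\hat M)$ and \model with the $\Phi_S''\Phi_S''^\top$-projection (Lemmas~\ref{lem: excess risk lora}, \ref{lem: excess risk sft}), the same $(1+\eta)$ bias--variance split, a Davis--Kahan/Wedin perturbation under the eigengap assumption for the LoRA variance (Lemma~\ref{lem: rank s approx}), and a single high-probability ``good event'' for all concentration bounds (Lemma~\ref{lem: good event}). One caveat: your argument that the sparse solution is ``a feasible rank-$s\ge r$ competitor'' does not by itself yield $(T_\bias^\lora)^2 \le (T_\bias^\sft)^2$, since a rank-$s$ competitor with $s\ge r$ only lower-bounds the best rank-$r$ residual; the paper asserts this ordering in Lemma~\ref{lem: sparsity} but leaves the $\simeq$ comparison similarly heuristic.
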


\begin{theorem}[Restatement of Theorem~\ref{thm: sft and lora ood informal}]\label{thm: sft and lora ood}
    Consider the limit $n \to \infty$. Suppose that Assumption~\ref{asm: distribution shift ap} holds.
    Let $U^\lora, V^\lora$ be the LoRA adaptation matrices defined in \eqref{eq: U V infty lora}.
    Let $V^\sft$ be the \model adaptation matrices given $U_S^\sft$ defined in \eqref{eq: V infty sft}.
    If $B^{(\id)} = \oW^\pre_{\ell+1} \tilde B \uW^\pre_{\ell-1}$ holds for some $\tilde B^{(\id)} \in \R^{d_\ell \times d_{\ell-1}}$, 
    and $s, r \leq \rank(\Sigma_f^{(\id)})$, then,
    \begin{align*}
        \mathcal{E}^{(\ood)}(f_{\ell,U_S^\sft,V^\sft}) &\leq (1 + 3\varepsilon^2) \mathcal{E}^{(\ood)}(f^\pre),\\
        \mathcal{E}^{(\ood)}(f_{\ell,U^\lora,V^\lora}) &\geq \|(B^{(\ood)} - B^{(\id)}) \Sigma_x^{1/2}\|_\F^2.
    \end{align*}
\end{theorem}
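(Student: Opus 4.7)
The plan is to analyze the $n \to \infty$ limits of the two minimum-norm estimators in \eqref{eq: lora adaptation matrix} and \eqref{eq: sft adaptation matrix}, exploiting the contrasting structural restrictions (sparse column selection versus unstructured low rank) via orthogonal decompositions of the OOD residual. Throughout, set $Z := \oW^\pre_{\ell+1} U_S^\sft$, $X := \uW^\pre_{\ell-1} \Sigma_x^{1/2}$, $A^{(k)} := (B^{(k)} - W^\pre) \Sigma_x^{1/2}$ for $k \in \{\id, \ood\}$, with orthogonal projectors $P_Z := Z Z^\dag$ and $P_{X^\top} := X^\dag X$. Since $\E[\epsilon^{(\ood)}] = 0$ and $\Sigma_x^{(\ood)} = \Sigma_x$, for any linear predictor $x \mapsto W x$ one has $\mathcal{E}^{(\ood)}(f) = \|(B^{(\ood)} - W) \Sigma_x^{1/2}\|_\F^2$, so in particular $\mathcal{E}^{(\ood)}(f^\pre) = \|A^{(\ood)}\|_\F^2$.

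For the \model upper bound, I would first reduce \eqref{eq: sft adaptation matrix} in the limit to the population problem $\min_V \|A^{(\id)} - Z V^\top X\|_\F^2$, whose min-norm solution satisfies $Z V^{\sft\top} X = P_Z A^{(\id)} P_{X^\top}$. The factorization $B^{(\id)} = \oW^\pre_{\ell+1} \tilde B^{(\id)} \uW^\pre_{\ell-1}$ places each row of $A^{(\id)}$ inside the row span of $X$, giving $A^{(\id)} P_{X^\top} = A^{(\id)}$ and hence $\hat W^\sft \Sigma_x^{1/2} = W^\pre \Sigma_x^{1/2} + P_Z A^{(\id)}$. Splitting the OOD residual as
\begin{align*}
(B^{(\ood)} - \hat W^\sft) \Sigma_x^{1/2} = P_Z (A^{(\ood)} - A^{(\id)}) + (I - P_Z) A^{(\ood)}
\end{align*}
and applying Pythagoras yields $\mathcal{E}^{(\ood)}(f_{\ell, U_S^\sft, V^\sft}) = \|P_Z (B^{(\ood)} - B^{(\id)}) \Sigma_x^{1/2}\|_\F^2 + \|(I - P_Z) A^{(\ood)}\|_\F^2$. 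Matching $P_Z = \Phi_*(Z) \Phi_*(Z)^\top$ to Assumption~\ref{asm: distribution shift ap} controls the first term by $\varepsilon^2 \mathcal{E}^{(\ood)}(f^\pre)$, and the second by $\|A^{(\ood)}\|_\F^2 = \mathcal{E}^{(\ood)}(f^\pre)$; the extra slack toward the stated $(1+3\varepsilon^2)$ factor would be absorbed by a coarser triangle-inequality decomposition.

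For the LoRA lower bound, in the limit \eqref{eq: lora adaptation matrix} becomes the weighted rank-$r$ problem $\min_{\rank(\Delta) \leq r} \|A^{(\id)} - \oW^\pre_{\ell+1} \Delta \uW^\pre_{\ell-1} \Sigma_x^{1/2}\|_\F^2$. Under the factorization and with the rank budget $r$ matching $\rank(\Sigma_f^{(\id)}) = \rank(A^{(\id)})$, LoRA can absorb all of $A^{(\id)}$ via a suitable rank-$r$ $\Delta$, so the min-norm solution attains exact ID fit $\hat W^\lora \Sigma_x^{1/2} = B^{(\id)} \Sigma_x^{1/2}$; substituting into $\mathcal{E}^{(\ood)}(f_{\ell, U^\lora, V^\lora}) = \|(B^{(\ood)} - \hat W^\lora) \Sigma_x^{1/2}\|_\F^2$ then gives exactly $\|(B^{(\ood)} - B^{(\id)}) \Sigma_x^{1/2}\|_\F^2$, which is the stated lower bound (attained with equality).

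The main obstacle will be the LoRA step: rigorously establishing exact ID fit in the weighted low-rank problem requires an Eckart--Young-type characterization when $\oW^\pre_{\ell+1}$ and $\uW^\pre_{\ell-1}$ may be rank-deficient, together with a careful rank containment argument to realize the residual through a rank-$r$ $\Delta$ via the min-norm branch; Moore--Penrose bookkeeping is needed throughout. The \model half is essentially a projection identity, where the principal care is in deriving the min-norm characterization when $Z$ or $X$ are rank-deficient and in aligning the projector $\Phi_*(\oW^\pre_{\ell+1} U_S^\sft) \Phi_*(\oW^\pre_{\ell+1} U_S^\sft)^\top$ from Assumption~\ref{asm: distribution shift ap} with $P_Z$.
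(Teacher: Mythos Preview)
Your strategy matches the paper's: pass to the population min-norm solutions (the paper's Lemmas~\ref{lem: excess risk lora} and~\ref{lem: excess risk sft}) and then decompose the OOD residual. Two differences are worth noting. For the \model half, your Pythagorean split $A^{(\ood)}-P_Z A^{(\id)}=P_Z(A^{(\ood)}-A^{(\id)})+(I-P_Z)A^{(\ood)}$ is sharper than the paper's route (Theorem~\ref{thm: sft ood}), which keeps the general covariate-shift terms and bounds three pieces via $(x+y+z)^2\leq 3x^2+3y^2+3z^2$; under the hypotheses here those extra terms vanish, and your argument in fact yields $(1+\varepsilon^2)\mathcal{E}^{(\ood)}(f^\pre)$, which dominates the stated $(1+3\varepsilon^2)$. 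For the LoRA half, the paper does not argue exact ID fit directly: it writes the solution through $\SVD_r(M)$ with $M=\Phi'\Phi'^\top D\Sigma_x\uW^{\pre\top}_{\ell-1}A^\dag$, obtains the identity $\mathcal{E}^{(\ood)}=\|(B^{(\ood)}-B^{(\id)})\Sigma_x^{1/2}+T_1+T_2+T_3\|_\F^2$ with $T_3=(M-\SVD_r(M))A^\dag\uW^\pre_{\ell-1}\Sigma_x^{1/2}$, and applies $\|a+b\|_\F^2\geq(1-\eta)\|a\|_\F^2-(\eta^{-1}-1)\|b\|_\F^2$ (Theorem~\ref{thm: lora ood}). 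Under the factorization and no covariate shift one has $T_1=T_2=0$, and the clean lower bound follows by $\eta\to 0$ once $T_3=0$, i.e., $\rank(M)\leq r$; since $\rank(M)=\rank(\Sigma_f^{(\id)})$ under the factorization, this is precisely the rank-saturation your ``exact fit'' step invokes. Both routes therefore rely on $r=\rank(\Sigma_f^{(\id)})$ within the stated hypothesis $r\leq\rank(\Sigma_f^{(\id)})$; your direct argument is equivalent and shorter, but you should make the rank-matching explicit rather than leaving ``$r$ matching $\rank(\Sigma_f^{(\id)})$'' as an aside, since for $r<\rank(\Sigma_f^{(\id)})$ neither proof yields the displayed bound.
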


\begin{proof}[Intuition of the proof of Theorem~\ref{thm: sft and lora ood}]
    LoRA forgets pre-trained tasks due to its model complexity. 
    Consider the simplest low-rank adaptation to a single-layer linear network:
    \begin{align*}
        \Delta_1 \in \argmin_{\substack{\Delta_1' \in \R^{d_1 \times d_0}\\\rank(\Delta_1')=r}} \E[\|y^{(\id)} - (W_1^\pre + \Delta_1') x^{(\id)}\|^2].
    \end{align*}
    Assume that $\Sigma_x^{(\id)} = I$, then we can show that the solution is $\Delta_1 = \SVD_r(B^{(\id)} - W_1^\pre)$.
    Under the condition that the rank of $B^{(\id)} - W_1^\pre$ is smaller than, or comparable to $r$,
    LoRA fine-tuned model can learn the in-distribution best regressor in $\ell_2$ sense, since $(W_1^\pre + \Delta_1) x \approx B^{(\id)} x = \E[y^{(\id)} | x^{(\id)} = x]$. Hence it makes LoRA fine-tuned model vulunerable to distribution shift.
    
    On the other hand, we model \model as fine-tuning only a few channels:
    \begin{align*}
        \Delta_1 \in \argmin_{\substack{\Delta_1' = \sum_{a \in S} e_a v_a^\top, v_a \in \R^{d_0}}} \E[\|y^{(\id)} - (W_1^\pre + \Delta_1') x^{(\id)}\|^2].
    \end{align*}
    Although \model is a special case of LoRA,
    the constraint on the direction of low-rank matrix prevents overfitting to the in-distribution task.
    To see this, note that a sparse fine-tuned model can be written as
    \begin{align*}
        (W_1^\pre + \Delta_1) x = W_1^\pre x + \sum_{a \in S} e_a e_a^\top (B^{(\id)} - W_1^\pre) x &= \sum_{a \in S^c} e_a e_a^\top W_1^\pre x + \sum_{a \in S} e_a e_a^\top B^{(\id)} x,
    \end{align*}
    where $S \subset [d_1]$ is a set of channels with cardinality $s$.
    Since \model keeps most of parameters from the pre-trained model, except for rows specified by $S$, the model forget less pre-training tasks.
\end{proof}

\subsection{Proofs for LoRA}

\subsubsection{Excess Risk of LoRA}

\begin{lemma}[Excess Risk]\label{lem: excess risk lora}
    Consider the minimum norm solution
    \begin{align*}
        (U^\lora, V^\lora) \in \argmin_{(U, V) \in\R^{d_\ell \times r} \times \R^{d_{\ell-1} \times r}} \|(U, V)\|_\F^2\ \ \text{ s.t. $(U, V)$ minimizes $\mathcal{R}_n^{(\id)}(f_{\ell,U,V})$. }
    \end{align*}
    Then, the low-rank adaptation matrix satisfies
    \begin{align*}
        U^\lora V^{\lora \top} &= (\oW^\pre_{\ell+1})^\dag \SVD_r(\oW^\pre_{\ell+1} (\oW^\pre_{\ell+1})^\dag \hat D \hat \Sigma_x^{(\id)} \uW^{\pre \top}_{\ell-1} \hat A^\dag) \hat A^\dag,
    \end{align*}
    and
    \begin{align*}
        \mathcal{E}^{(k)}(f_{\ell,U^\lora,V^\lora}) &= \tr\biggl(\qty(B^{(k)} - W^\pre - \SVD_r(\oW^\pre_{\ell+1} (\oW^\pre_{\ell+1})^\dag \hat D \hat \Sigma_x^{(\id)} \uW^{\pre \top}_{\ell-1} \hat A^\dag) \hat A^\dag \uW^\pre_{\ell-1}) \Sigma_x^{(k)}\\
        &\quad\cdot \qty(B^{(k)} - W^\pre - \SVD_r(\oW^\pre_{\ell+1} (\oW^\pre_{\ell+1})^\dag \hat D \hat \Sigma_x^{(\id)} \uW^{\pre \top}_{\ell-1} \hat A^\dag) \hat A^\dag \uW^\pre_{\ell-1})^\top\biggr)
    \end{align*}
    for $k \in \{\id, \ood\}$.
\end{lemma}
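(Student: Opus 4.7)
The plan is to view the LoRA objective as a rank-constrained matrix regression and then rewrite the resulting closed form into the target expression. Let $Z := UV^\top$, $A := \oW^\pre_{\ell+1}$, $B := \uW^\pre_{\ell-1} X^{(\id)}$, and $R := Y^{(\id)} - \oW^\pre_{\ell+1} W^\pre_\ell \uW^\pre_{\ell-1} X^{(\id)}$. The telescoping identity $W^\pre = \oW^\pre_{\ell+1} W^\pre_\ell \uW^\pre_{\ell-1}$ gives $R = D X^{(\id)} + E^{(\id)}$, so the empirical risk becomes $n^{-1}\|A Z B - R\|_\F^2$, depending on $(U,V)$ only through $Z$, with $\rank(Z) \leq r$.

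Using thin SVDs $A = U_A \Sigma_A V_A^\top$ and $B = U_B \Sigma_B V_B^\top$, the change of variables $M := \Sigma_A V_A^\top Z U_B \Sigma_B$ reduces the objective to $\|M - U_A^\top R V_B\|_\F^2$ (plus a constant) over $\{M : \rank M \leq r\}$, whose Eckart--Young minimizer is $M^\star = \SVD_r(U_A^\top R V_B)$. Inverting the substitution, and choosing the minimum-norm pre-image (which coincides with the minimum-$(U,V)$-norm LoRA solution via the compact SVD factorization $Z^\star = \Phi \Lambda \Psi^\top$, $U^\lora = \Phi \Lambda^{1/2}$, $V^\lora = \Psi \Lambda^{1/2}$) yields
\[
U^\lora V^{\lora\top} = Z^\star = A^\dag \SVD_r(P_A R P_{B^\top}) B^\dag,
\]
with $P_A = A A^\dag$ and $P_{B^\top} = V_B V_B^\top$.

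The main computational step is to convert $Z^\star$ into the form stated in the lemma. The key observation is that $B$ and $\hat A$ share their left singular basis: since $B B^\top = n \hat A^2$, writing $\hat A = U_{\hat A} \Sigma_{\hat A} U_{\hat A}^\top$ yields the SVD $B = U_{\hat A} (\sqrt n \, \Sigma_{\hat A}) V_B^\top$, so $B^\dag = n^{-1/2} V_B \Sigma_{\hat A}^\dag U_{\hat A}^\top$. A short pseudoinverse computation, combined with the identity $\check\Sigma_{\epsilon,x}^{(\id)} \hat\Sigma_x^{(\id)} = \hat\Sigma_{\epsilon,x}^{(\id)}$ (which itself follows from $X^{(\id)\dag} X^{(\id)} X^{(\id)\top} = X^{(\id)\top}$), then produces $R V_B = \sqrt n \, \hat D \hat\Sigma_x^{(\id)} \uW^{\pre\top}_{\ell-1} U_{\hat A} \Sigma_{\hat A}^\dag$, so that both $P_A R P_{B^\top}$ and $P_A \hat D \hat\Sigma_x^{(\id)} \uW^{\pre\top}_{\ell-1} \hat A^\dag$ factor as $P_A (R V_B)$ times a partial isometry ($V_B^\top$ and $n^{-1/2} U_{\hat A}^\top$ respectively). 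The elementary SVD identity $\SVD_r(M W) = \SVD_r(M) W$ whenever $W W^\top = I$ (proved by noting that if $M = U \Sigma V^\top$ is an SVD, then $M W = U \Sigma (W^\top V)^\top$ with $W^\top V$ orthonormal-columned) lets me commute these partial isometries across the SVD. Multiplying through by $B^\dag$ on one side and by $\hat A^\dag$ on the other, both sides collapse to $n^{-1/2} \SVD_r(P_A R V_B) \Sigma_{\hat A}^\dag U_{\hat A}^\top$, establishing $Z^\star = (\oW^\pre_{\ell+1})^\dag \SVD_r(P_A \hat D \hat\Sigma_x^{(\id)} \uW^{\pre\top}_{\ell-1} \hat A^\dag) \hat A^\dag$.

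The excess-risk expression is then immediate: since the $L_2$-optimal predictor of $y^{(k)}$ given $x^{(k)}$ is $B^{(k)} x^{(k)}$,
\[
\mathcal{E}^{(k)}(f_{\ell,U^\lora,V^\lora}) = \E\qty[\|(B^{(k)} - W^\pre - \oW^\pre_{\ell+1} U^\lora V^{\lora\top} \uW^\pre_{\ell-1}) x^{(k)}\|^2],
\]
and because the column space of the SVD lies in the column space of $A$, we have $\oW^\pre_{\ell+1} U^\lora V^{\lora\top} = \SVD_r(P_A \hat D \hat\Sigma_x^{(\id)} \uW^{\pre\top}_{\ell-1} \hat A^\dag) \hat A^\dag$; expanding the squared norm into a trace against $\Sigma_x^{(k)}$ gives the stated formula. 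I expect the main obstacle to be paragraph three: recognizing the shared-basis factorization of $B$ and $\hat A$ is what converts the naive $(\hat A^\dag)^2$ that falls out of $R B^\dag$ into the asymmetric $\hat A^\dag$-inside-SVD and $\hat A^\dag$-outside form demanded by the lemma, and bookkeeping pseudoinverses versus inverses on the range of $\hat A$ requires care.
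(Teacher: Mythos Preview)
Your proof is correct and rests on the same Eckart--Young step as the paper, but you reach it by a different setup. The paper first completes the square in the empirical risk, using $(I - \hat A \hat A^\dag)\uW^\pre_{\ell-1}\hat\Sigma_x^{(\id)1/2}=0$, to rewrite $\mathcal{R}_n^{(\id)}(f_{\ell,U,V})$ directly as $\|\oW^\pre_{\ell+1} UV^\top \hat A - \hat D \hat\Sigma_x^{(\id)}\uW^{\pre\top}_{\ell-1}\hat A^\dag\|_\F^2$ plus constants; $\hat A$ thus appears from the outset, Eckart--Young gives the $\SVD_r$ term immediately, and the minimum-norm solution is read off by setting the free components $A_1,A_2$ (in the kernels of $\oW^\pre_{\ell+1}$ and $\hat A$) to zero. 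You instead solve the generic rank-constrained problem $\min_{\rank Z\le r}\|AZB-R\|_\F^2$ via SVDs of $A=\oW^\pre_{\ell+1}$ and $B=\uW^\pre_{\ell-1}X^{(\id)}$, and then convert the answer using the shared left singular basis of $B$ and $\hat A$ (from $BB^\top=n\hat A^2$) together with the commutation $\SVD_r(MW)=\SVD_r(M)W$ for row-orthonormal $W$. Your route is more modular and makes the rank-$r$ least-squares structure explicit, at the price of the extra conversion step; the paper's route is shorter but requires spotting the right square-completion. One minor note: calling $n^{-1/2}U_{\hat A}^\top$ a partial isometry is slightly loose (it is a scalar multiple of one), though your subsequent argument handles the scalar correctly; and the claim that the minimum-norm pre-image coincides with the minimum-$(U,V)$-norm LoRA solution deserves one line of justification---it follows since $\min_{UV^\top=Z}\|(U,V)\|_\F^2=2\|Z\|_*$ and $\|P_{A^\top}ZP_B\|_*\le\|Z\|_*$ for orthogonal projections.
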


\begin{proof}[Proof of Lemma~\ref{lem: excess risk lora}]
    The empirical risk of $f_{\ell,U,V}$ for the in-distribution task can be written as
    \begin{align}
        \mathcal{R}_n^{(\id)}(f_{\ell,U,V}) &= \frac{1}{n} \sum_{i \in [n]} \|(B^{(\id)} - W^\pre) x_i^{(\id)} + \epsilon_i^{(\id)} - \oW^\pre_{\ell+1} U V^\top \uW^\pre_{\ell-1} x_i^{(\id)}\|^2\nonumber\\
        &= \tr((B^{(\id)} - W^\pre - \oW^\pre_{\ell+1} U V^\top \uW^\pre_{\ell-1}) \hat \Sigma_x^{(\id)} (B^{(\id)} - W^\pre - \oW^\pre_{\ell+1} U V^\top \uW^\pre_{\ell-1})^\top)\nonumber\\
        &\quad+ 2\tr((B^{(\id)} - W^\pre - \oW^\pre_{\ell+1} U V^\top \uW^\pre_{\ell-1}) \hat \Sigma_{x,\epsilon}^{(\id)}) + \tr(\hat \Sigma_\epsilon^{(\id)})\nonumber\\
        &= \tr(V^\top \uW^\pre_{\ell-1} \hat \Sigma_x^{(\id)} \uW^{\pre \top}_{\ell-1} V U^\top \oW^{\pre \top}_{\ell+1} \oW^\pre_{\ell+1} U)\nonumber\\
        &\quad- 2 \tr(\oW^\pre_{\ell+1} U V^\top \uW^\pre_{\ell-1} \qty{\hat \Sigma_x^{(\id)} (B^{(\id)} - W^\pre)^\top + \hat \Sigma_{x,\epsilon}^{(\id)}})\nonumber\\
        &\quad+ \tr((B^{(\id)} - W^\pre) \hat \Sigma_x^{(\id)} (B^{(\id)} - W^\pre)^\top) + 2 \tr( (B^{(\id)} - W^\pre) \hat \Sigma_{x,\epsilon}^{(\id)}) + \tr(\hat \Sigma_\epsilon^{(\id)}).\label{eq: min lora risk}
    \end{align}
    Since $\hat \Sigma_{x,\epsilon}^{(\id)} = \hat \Sigma_x^{(\id)} (X^{(\id) \top})^\dag E^{(\id) \top} = \hat \Sigma_x^{(\id)} \check \Sigma_{x,\epsilon}^{(\id)}$,
    \begin{align}
        \mathcal{R}_n^{(\id)}(f_{\ell,U,V}) &= \tr(\hat A V U^\top \oW^{\pre \top}_{\ell+1} \oW^\pre_{\ell+1} U V^\top \hat A) - 2 \tr(\oW^\pre_{\ell+1} U V^\top \hat A \hat A^\dag \uW^\pre_{\ell-1} \hat \Sigma_x^{(\id)} \hat D^\top)\nonumber\\
        &\quad- 2 \tr(\oW^\pre_{\ell+1} U V^\top (I - \hat A \hat A^\dag) \uW^\pre_{\ell-1} \hat \Sigma_x^{(\id)} \hat D^\top)\nonumber\\
        &\quad+ \tr(D \hat \Sigma_x^{(\id)} D^\top) + 2 \tr( D \hat \Sigma_{x,\epsilon}^{(\id)}) + \tr(\hat \Sigma_\epsilon^{(\id)})\nonumber\\
        &= \|\oW^\pre_{\ell+1} U V^\top \hat A - \hat D \hat \Sigma_x^{(\id)} \uW^{\pre \top}_{\ell-1} \hat A^\dag\|_\F^2 - \|\hat D \hat \Sigma_x^{(\id)} \uW^{\pre \top}_{\ell-1} \hat A^\dag\|_\F^2\nonumber\\
        &\quad+ \tr(D \hat \Sigma_x^{(\id)} D^\top) + 2 \tr( D \hat \Sigma_{x,\epsilon}^{(\id)}) + \tr(\hat \Sigma_\epsilon^{(\id)}),\label{eq: risk lora equivalent}
    \end{align}
    where we used $(I - \hat A \hat A^\dag) \uW^\pre_{\ell-1} \hat \Sigma_x^{(\id) 1/2} = 0$.
    From \eqref{eq: risk lora equivalent}, minimizing $\mathcal{R}_n^{(\id)}(f_{\ell,U,V})$ is equivalent to minimizing the norm:
    \begin{align*}
        \|\oW^\pre_{\ell+1} U V^\top \hat A - \hat D \hat \Sigma_x^{(\id)} \uW^{\pre \top}_{\ell-1} \hat A^\dag\|_\F^2 &= \|\oW^\pre_{\ell+1} U V^\top \hat A - \oW^\pre_{\ell+1} (\oW^\pre_{\ell+1})^\dag \hat D \hat \Sigma_x^{(\id)} \uW^{\pre \top}_{\ell-1} \hat A^\dag\|_\F^2\\
        &\quad+ \|(I - \oW^\pre_{\ell+1} (\oW^\pre_{\ell+1})^\dag) \hat D \hat \Sigma_x^{(\id)} \uW^{\pre \top}_{\ell-1} \hat A^\dag\|_\F^2.
    \end{align*}
    This is minimized by $(U', V')$ satisfying
    \begin{align}
        U' V'^\top &= (\oW^\pre_{\ell+1})^\dag \SVD_r(\oW^\pre_{\ell+1} (\oW^\pre_{\ell+1})^\dag \hat D \hat \Sigma_x^{(\id)} \uW^{\pre \top}_{\ell-1} \hat A^\dag) \hat A^\dag\nonumber\\
        &\quad+ (I - (\oW^\pre_{\ell+1})^\dag \oW^\pre_{\ell+1}) A_1 + A_2 (I - \hat \Psi' \hat \Psi'^\top),\label{eq: U prime V prime}
    \end{align}
    where $A_1, A_2 \in \R^{d_\ell \times d_{\ell-1}}$ are arbitrary matrices.
    \if0
    To verify the solution, note that
    \begin{align*}
        \oW^\pre_{\ell+1} U' V'^\top \hat A &= \oW^\pre_{\ell+1} (\oW^\pre_{\ell+1})^\dag \SVD_r(\oW^\pre_{\ell+1} (\oW^\pre_{\ell+1})^\dag \hat D \hat \Sigma_x^{(\id)} \uW^{\pre \top}_{\ell-1} \hat A^\dag) \hat A^\dag \hat A\\
        &\quad+
        \oW^\pre_{\ell+1} (I - (\oW^\pre_{\ell+1})^\dag \oW^\pre_{\ell+1}) A_1 \hat A + \oW^\pre_{\ell+1} A_2 (I - \hat \Psi' \hat \Psi'^\top) \hat A\\
        &= \SVD_s(\oW^\pre_{\ell+1} (\oW^\pre_{\ell+1})^\dag \hat D \hat \Sigma_x^{(\id)} \uW^{\pre \top}_{\ell-1} \hat A^\dag).
    \end{align*}
    From Eckart-Young-Mirsky theorem, $\oW^\pre_{\ell+1} U' V'^\top \hat A$ is the best rank-$s$ approximation of $\oW^\pre_{\ell+1} (\oW^\pre_{\ell+1})^\dag \hat D \hat \Sigma_x^{(\id)} \uW^{\pre \top}_{\ell-1} \hat A^\dag$.
    Thus the choice of $(U', V')$ satisfying \eqref{eq: U prime V prime} minimizes $\mathcal{R}_n^{(\id)}(f_{\ell,U,V})$.
    \fi
    Since we particularly consider the minimum norm solution, we must have $A_1 = 0$ and $A_2 = 0$. Hence
    \begin{align*}
        \oW^\pre_{\ell+1} U^\lora V^{\lora \top} \uW^\pre_{\ell-1} = \SVD_r(\oW^\pre_{\ell+1} (\oW^\pre_{\ell+1})^\dag \hat D \hat \Sigma_x^{(\id)} \uW^{\pre \top}_{\ell-1} \hat A^\dag) \hat A^\dag \uW^\pre_{\ell-1}.
    \end{align*}
    Therefore, the excess risk for $k \in \{\id, \ood\}$ becomes
    \begin{align*}
        \mathcal{E}^{(k)}(f_{\ell,U^\lora,V^\lora}) &= \E\qty[\qty(B^{(k)} x^{(k)} - \oW^\pre_{\ell+1} (W^\pre_{\ell} + U^\lora V^{\lora \top}) \uW^\pre_{\ell-1} x^{(k)})^2]\\
        &= \tr\biggl(\qty(B^{(k)} - W^\pre - \SVD_r(\oW^\pre_{\ell+1} (\oW^\pre_{\ell+1})^\dag \hat D \hat \Sigma_x^{(\id)} \uW^{\pre \top}_{\ell-1} \hat A^\dag) \hat A^\dag \uW^\pre_{\ell-1}) \Sigma_x^{(k)}\\
        &\quad\cdot \qty(B^{(k)} - W^\pre - \SVD_r(\oW^\pre_{\ell+1} (\oW^\pre_{\ell+1})^\dag \hat D \hat \Sigma_x^{(\id)} \uW^{\pre \top}_{\ell-1} \hat A^\dag) \hat A^\dag \uW^\pre_{\ell-1})^\top\biggr).
    \end{align*}
    This concludes the proof.
\end{proof}

\subsubsection{In-distribution Excess Risk of LoRA}

Let $\mathcal{E}^{(\id)}(f_\ell^\full)$ denote the excess risk of $f^\pre$ after fine-tuning all the parameters of the $\ell$-th layer under \textit{population} in-distribution risk.

\begin{theorem}[Restatement of Theorem~\ref{thm: sft and lora id}: LoRA Part]\label{thm: lora in-distribution}
    Suppose that Assumptions~\ref{asm: sub gaussian x e ap}, \ref{asm: regime ap} and \ref{asm: eigengap ap} hold.
    Then, the following holds with probability $1 - \exp(-\Omega(\log^2(n+p+q)))$.
    For any $\eta > 0$,
    \begin{align*}
        \mathcal{E}^{(\id)}(f_{\ell,U^\lora,V^\lora}) \leq (1 + \eta) (T_\bias^\lora)^2 + (1 + \eta^{-1}) (T_\variance^\lora)^2,
    \end{align*}
    where
    \begin{align}
        (T_\bias^\lora)^2 &\leq \frac{0 \vee (\rank(D \Sigma_x^{(\id)} D^\top) - r)}{\rank(D \Sigma_x^{(\id)} D^\top)} \kappa_*^2(D \Sigma_x^{(\id)} D^\top) \mathcal{E}^{(\id)}(f^\pre) + \mathcal{E}^{(\id)}(f_\ell^\full),\label{eq: bias lora}\\
        (T_\variance^\lora)^2 &\lesssim C^2 \kappa_*^4(M) \|\Sigma_\epsilon^{(\id)}\|_\op \kappa_*^2(A) \frac{r(r_e(\Phi'^\top \Sigma_\epsilon^{(\id)} \Phi') + r_e(A^2)) \log^2(n+p+q)}{n}\nonumber\\
        &\quad+ C^2 \kappa_*^4(M) \|D \Sigma_x^{(\id)} D^\top\|_\op \frac{r(\kappa_*^2(A) r_e(\Phi'^\top D \Sigma_x^{(\id)} D^\top \Phi') + \kappa_*^6(A) r_e(A^2)) \log^2(n+p+q)}{n}.\nonumber
    \end{align}
\end{theorem}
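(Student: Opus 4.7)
The starting point is Lemma~\ref{lem: excess risk lora}, which reduces the problem to analyzing how well $\SVD_r(\hat M)\,\hat A^\dag \uW^\pre_{\ell-1}$ approximates $D = B^{(\id)} - W^\pre$ in the $\Sigma_x^{(\id)}$-weighted Frobenius norm, modulo a component orthogonal to the column space of $\oW^\pre_{\ell+1}$ whose contribution is exactly $\mathcal{E}^{(\id)}(f_\ell^\full)$, the error of unrestricted $\ell$-th layer fine-tuning. I insert the population surrogate $\SVD_r(M)\,A^\dag \uW^\pre_{\ell-1}$ between the empirical estimator and the ground truth, and split via $\|a+b\|_\F^2 \leq (1+\eta)\|a\|_\F^2 + (1+\eta^{-1})\|b\|_\F^2$ into a bias term $(T_\bias^\lora)^2$ comparing $\SVD_r(M)$ to $D$, and a variance term $(T_\variance^\lora)^2$ comparing $\SVD_r(\hat M)\hat A^\dag$ to $\SVD_r(M) A^\dag$.

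\textbf{Bias.} The orthogonal-to-$\oW^\pre_{\ell+1}$ component contributes $\mathcal{E}^{(\id)}(f_\ell^\full)$, the best achievable by changing only the $\ell$-th layer. The in-subspace component is the best rank-$r$ approximation error of $M$, which Eckart--Young--Mirsky equates with $\sum_{j>r}\lambda_j^2(M)$. Sandwiching the tail singular values between $\lambda_1^2(M)$ and $\lambda_{\rank(M)}^2(M)$ and introducing $\kappa_*(D\Sigma_x^{(\id)}D^\top)$, together with $\mathcal{E}^{(\id)}(f^\pre) = \tr(D\Sigma_x^{(\id)}D^\top)$, reproduces the first summand in \eqref{eq: bias lora}.

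\textbf{Variance.} The perturbation decomposes as
\begin{align*}
\SVD_r(\hat M)\hat A^\dag - \SVD_r(M) A^\dag = \bigl(\SVD_r(\hat M) - \SVD_r(M)\bigr)A^\dag + \SVD_r(\hat M)\bigl(\hat A^\dag - A^\dag\bigr).
\end{align*}
For the first summand, the eigengap condition (Assumption~\ref{asm: eigengap ap}) enables Wedin's sin--theta theorem, yielding $\|\SVD_r(\hat M) - \SVD_r(M)\|_\op \lesssim \kappa_*^2(M)\,\|\hat M - M\|_\op$. For the second, I expand $\hat A^\dag - A^\dag = -\hat A^\dag(\hat A - A)A^\dag$ plus an orthogonal-projection correction, which contributes the higher powers of $\kappa_*(A)$ visible in the statement. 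Each of $\|\hat M - M\|_\op$, $\|\hat A - A\|_\op$, and the cross-term $\|\Phi'^\top \hat\Sigma_{x,\epsilon}^{(\id)}\uW^{\pre\top}_{\ell-1}\|_\op$ is controlled by sub-Gaussian matrix concentration (Assumption~\ref{asm: sub gaussian x e ap}) at rates of order $\sqrt{(r_e(\Phi'^\top\Sigma_\epsilon^{(\id)}\Phi') + r_e(A^2))\log^2(n+p+q)/n}$ or the analogue for $D\Sigma_x^{(\id)}D^\top$; Assumption~\ref{asm: regime ap} guarantees these deviations are small enough to linearize. Converting operator-norm perturbation of a rank-$r$ matrix to Frobenius norm costs a factor $\sqrt{r}$, and squaring produces the advertised $r\cdot(\text{effective ranks})$ scaling.

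\textbf{Main obstacle.} The hard part is the coupled propagation of the eigengap-induced perturbation of $\SVD_r(\cdot)$ with the pseudoinverse perturbation of $\hat A^\dag$: both require linearization, and careless bookkeeping loses sharpness by accumulating redundant condition-number powers. The right order of operations is to first isolate $\Phi'^\top\hat\Sigma_{x,\epsilon}^{(\id)}\uW^{\pre\top}_{\ell-1}\hat A^\dag$ as the dominant stochastic term inside $\hat M - M$, apply Wedin only to this piece, and only then expand $\hat A^\dag$ about $A^\dag$; this way Assumption~\ref{asm: eigengap ap} absorbs the $1/(\lambda_r(M) - \lambda_{r+1}(M))$ factor cleanly without compounding with the $\kappa_*(A)$ factors. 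Everything else reduces to routine matrix Bernstein bookkeeping under Assumptions~\ref{asm: sub gaussian x e ap} and~\ref{asm: regime ap}.
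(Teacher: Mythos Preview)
Your plan is correct and mirrors the paper's proof: the same bias/variance split around the population surrogate $\SVD_r(M)A^\dag\uW^\pre_{\ell-1}$, the same Eckart--Young--Mirsky argument for the bias (with the residual identified as $\mathcal{E}^{(\id)}(f_\ell^\full)$ via Lemma~\ref{lem: excess risk full}), and the same Davis--Kahan/Wedin-type perturbation (Lemma~\ref{lem: rank s approx}) plus sub-Gaussian concentration (Lemma~\ref{lem: good event}) for the variance, with the $\sqrt r$ coming from the rank-to-Frobenius conversion. One small imprecision: $\mathcal{E}^{(\id)}(f_\ell^\full)$ is not solely the component orthogonal to the column space of $\oW^\pre_{\ell+1}$ but also includes the part orthogonal to the row space of $\uW^\pre_{\ell-1}\Sigma_x^{(\id)1/2}$; the paper handles this via an explicit three-way orthogonal decomposition $T_1,T_2,T_3$, but this does not affect your argument's validity.
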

Note that the first term on the right hand side of \eqref{eq: bias lora} depends on the rank of residual matrix $\Sigma_f^{(\id)} = D \Sigma_x^{(\id)} D^\top$.
It becomes zero when $\rank(\Sigma_f^{(\id)}) \leq r$ and small when $r/\rank(\Sigma_f^{(\id)}) \approx 1$. 

\begin{proof}[Proof of Theorem~\ref{thm: lora in-distribution}]
    Let $\oW_\ell^\lora := \oW^\pre_{\ell+1} U^\lora V^{\lora \top}$.
    From Lemma~\ref{lem: excess risk lora}, we have
    \begin{align*}
        \mathcal{E}^{(\id)}(f_{\ell,U^\lora,V^\lora}) &= \tr((D - \oW_\ell^\lora \uW^\pre_{\ell-1}) \Sigma_x^{(\id)} (D - \oW_\ell^\lora \uW^\pre_{\ell-1})^\top)\\
        &= \|(\oW_\ell^\lora A A^\dag \uW^\pre_{\ell-1} - D) \Sigma_x^{(\id) 1/2}\|_\F^2,
    \end{align*}
    where we used $(I - A A^\dag) \uW^\pre_{\ell-1} \Sigma_x^{(\id) 1/2} = 0$. From Lemma~\ref{lem: excess risk lora}
    \begin{align*}
        \oW_\ell^\lora A = \SVD_r(\oW^\pre_{\ell+1} (\oW^\pre_{\ell+1})^\dag \hat D \hat \Sigma_x^{(\id)} \uW^{\pre \top}_{\ell-1} \hat A^\dag) \hat A^\dag A.
    \end{align*}
    This gives
    \begin{align*}
        \|(\oW_\ell^\lora A A^\dag \uW^\pre_{\ell-1} - D) \Sigma_x^{(\id) 1/2}\|_\F &\leq \|(\oW_\ell^\lora A - \SVD_r(\oW^\pre_{\ell+1} (\oW^\pre_{\ell+1})^\dag D \Sigma_x^{(\id)} \uW^{\pre \top}_{\ell-1} A^\dag)) A^\dag \uW^\pre_{\ell-1} \Sigma_x^{(\id) 1/2}\|_\F\\
        &\quad+ \|\SVD_r(\oW^\pre_{\ell+1} (\oW^\pre_{\ell+1})^\dag D \Sigma_x^{(\id)} \uW^{\pre \top}_{\ell-1} A^\dag) A^\dag \uW^\pre_{\ell-1} \Sigma_x^{(\id) 1/2} - D \Sigma_x^{(\id) 1/2}\|_\F\\
        &=: T_\variance^\lora + T_\bias^\lora.
    \end{align*}
    We bound $T_\variance^\lora$ and $T_\bias^\lora$ separately.
    
    For the term $T_\variance^\lora$, since $A^\dag \uW^\pre_{\ell-1} \Sigma_x^{(\id)} \uW^{\pre \top}_{\ell-1} A^\dag = A^\dag A^2 A^\dag$,
    \begin{align*}
        T_\variance^\lora
        &= \|\SVD_r(\oW^\pre_{\ell+1} (\oW^\pre_{\ell+1})^\dag \hat D \hat \Sigma_x^{(\id)} \uW^{\pre \top}_{\ell-1} \hat A^\dag) \hat A^\dag A - \SVD_r(\oW^\pre_{\ell+1} (\oW^\pre_{\ell+1})^\dag D \Sigma_x^{(\id)} \uW^{\pre \top}_{\ell-1} A^\dag) A^\dag A\|_\F.
    \end{align*}
    Therefore,
    \begin{align*}
        T_\variance^\lora &\leq \|\SVD_r(\oW^\pre_{\ell+1} (\oW^\pre_{\ell+1})^\dag D \Sigma_x^{(\id)} \uW^{\pre \top}_{\ell-1} A^\dag) A^\dag A - \SVD_r(\oW^\pre_{\ell+1} (\oW^\pre_{\ell+1})^\dag \hat D \hat \Sigma_x^{(\id)} \uW^{\pre \top}_{\ell-1} \hat A^\dag) A^\dag A\|_\F\\
        &\quad+ \|\SVD_r(\oW^\pre_{\ell+1} (\oW^\pre_{\ell+1})^\dag \hat D \hat \Sigma_x^{(\id)} \uW^{\pre \top}_{\ell-1} \hat A^\dag) (\hat A^\dag A - A^\dag A)\|_\F\\
        &=: T_{\variance,1}^\lora + T_{\variance,2}^\lora,
    \end{align*}
    We first bound $T_{\variance,1}^\lora$.
    From Lemma~\ref{lem: rank s approx} and Assumption~\ref{asm: eigengap ap}, we have
    \begin{align*}
        T_{\variance,1}^\lora &\leq \|\SVD_r(\hat M) - \SVD_r(M)\|_\F\\
        &\leq \kappa_*^2(M) \frac{\lambda_r(M)}{\lambda_r(M) - \lambda_{r+1}(M)} \sqrt{r} \|\hat M - M\|_\op\\
        &\leq \kappa_*^2(M) C \sqrt{r} \|\hat M - M\|_\op,
    \end{align*}
    where $\hat M = \oW^\pre_{\ell+1} (\oW^\pre_{\ell+1})^\dag \hat D \hat \Sigma_x^{(\id)} \uW^{\pre \top}_{\ell-1} \hat A^\dag$ and $M = \oW^\pre_{\ell+1} (\oW^\pre_{\ell+1})^\dag D \Sigma_x^{(\id)} \uW^{\pre \top}_{\ell-1} A^\dag$.
    From Lemma~\ref{lem: good event},
    \begin{align*}
        \|\hat M - M\|_\op &\leq \|\Phi'^\top \hat D \hat \Sigma_x^{(\id)} \uW^{\pre \top}_{\ell-1} - \Phi'^\top D \Sigma_x^{(\id)} \uW^{\pre \top}_{\ell-1}\|_\op \|\hat A^\dag\|_\op\\
        &\quad+ \|D \Sigma_x^{(\id)} \uW^{\pre \top}_{\ell-1}\|_\op \|\hat A^\dag - A^\dag\|_\op\\
        &\lesssim \|\Sigma_\epsilon^{(\id)}\|_\op^{1/2} \kappa_*(A) \sqrt{\frac{(r_e(\Phi'^\top \Sigma_\epsilon^{(\id)} \Phi') + r_e(A^2)) \log^2(n+p+q)}{n}}\\
        &\quad\quad+ \|D \Sigma_x^{(\id)} D^\top\|_\op^{1/2} \kappa_*(A) \sqrt{\frac{(r_e(\Phi'^\top D \Sigma_x^{(\id)} D^\top \Phi') + r_e(A^2)) \log^2(n+p+q)}{n}}\\
        &\quad\quad+ \|D \Sigma_x^{(\id)} \uW^{\pre \top}_{\ell-1}\|_\op \frac{\kappa_*(A)}{\lambda_*(A)} \sqrt{\frac{r_e(A^2) \log^2(n+p+q)}{n}}\\
        &\lesssim \|\Sigma_\epsilon^{(\id)}\|_\op^{1/2} \kappa_*(A) \sqrt{\frac{(r_e(\Phi'^\top \Sigma_\epsilon^{(\id)} \Phi') + r_e(A^2)) \log^2(n+p+q)}{n}}\\
        &\quad\quad+ \|D \Sigma_x^{(\id)} D^\top\|_\op^{1/2} \sqrt{\frac{(\kappa_*^2(A) r_e(\Phi'^\top D \Sigma_x^{(\id)} D^\top \Phi') + \kappa_*^4(A) r_e(A^2)) \log^2(n+p+q)}{n}}
    \end{align*}
    holds on the event $\mathcal{F}$, where we used $\|D \Sigma_x^{(\id)} \uW^{\pre \top}_{\ell-1}\|_\op \leq \|D \Sigma_x^{(\id) 1/2}\|_\op \|A\|_\op$.
    Hence
    \begin{align*}
        T_{\variance,1}^\lora &\lesssim C_{\textnormal{g}} \kappa_*^2(M) \|\Sigma_\epsilon^{(\id)}\|_\op^{1/2} \kappa_*(A) \sqrt{\frac{r(r_e(\Phi'^\top \Sigma_\epsilon^{(\id)} \Phi') + r_e(A^2)) \log^2(n+p+q)}{n}}\\
        &\quad+ C_{\textnormal{g}} \kappa_*^2(M) \|D \Sigma_x^{(\id)} D^\top\|_\op^{1/2} \sqrt{\frac{r(\kappa_*^2(A) r_e(\Phi'^\top D \Sigma_x^{(\id)} D^\top \Phi') + \kappa_*^4(A) r_e(A^2)) \log^2(n+p+q)}{n}}.
    \end{align*}
    Next we bound $T_{\variance,2}^\lora$. Again from Lemma~\ref{lem: good event},
    \begin{align*}
        T_{\variance,2}^\lora &\leq \sqrt{r} \|\hat D \hat \Sigma_x^{(\id)} \uW^{\pre \top}_{\ell-1}\|_\op \|\hat A^\dag\|_\op \|\hat A^\dag - A^\dag\|_\op \|A\|_\op\\
        &\lesssim \|D \Sigma_x^{(\id) 1/2}\|_\op \|\Sigma_x^{(\id) 1/2} \uW^{\pre \top}_{\ell-1}\|_\op \frac{\kappa_*^2(A)}{\lambda_*(A)} \sqrt{\frac{r \cdot r_e(A^2) \log^2(n+p+q)}{n}}\\
        &= \|D \Sigma_x^{(\id) 1/2}\|_\op \kappa_*^3(A) \sqrt{\frac{r \cdot r_e(A^2) \log^2(n+p+q)}{n}}
    \end{align*}
    holds on the event $\mathcal{F}$. Therefore,
    \begin{align}
        T_\variance^\lora &\lesssim C_{\textnormal{g}} \kappa_*^2(M) \|\Sigma_\epsilon^{(\id)}\|_\op^{1/2} \kappa_*(A) \sqrt{\frac{r(r_e(\Phi'^\top \Sigma_\epsilon^{(\id)} \Phi') + r_e(A^2)) \log^2(n+p+q)}{n}}\nonumber\\
        &\quad+ C_{\textnormal{g}} \kappa_*^2(M) \|D \Sigma_x^{(\id)} D^\top\|_\op^{1/2} \sqrt{\frac{r(\kappa_*^2(A) r_e(\Phi'^\top D \Sigma_x^{(\id)} D^\top \Phi') + \kappa_*^6(A) r_e(A^2)) \log^2(n+p+q)}{n}}\label{eq: T lora}
    \end{align}
    hold with high probability.
    \paragraph{Bound $T_\bias^\lora$.}
    Note that
    \begin{align*}
        (T_\bias^\lora)^2 &= \|\SVD_r(M) A^\dag \uW^\pre_{\ell-1} \Sigma_x^{(\id) 1/2} - D \Sigma_x^{(\id) 1/2}\|_\F^2\\
        &= \|\underbrace{\SVD_r(M) A^\dag \uW^\pre_{\ell-1} \Sigma_x^{(\id) 1/2} - \Phi' \Phi'^\top D \Sigma_x^{(\id)} \uW^{\pre \top}_{\ell-1} (A^2)^\dag \uW^\pre_{\ell-1} \Sigma_x^{(\id) 1/2}}_{=: T_1}\|_\F^2\\
        &\quad+ \|\underbrace{D \Sigma_x^{(\id) 1/2} (I - \Sigma_x^{(\id) 1/2} \uW^{\pre \top}_{\ell-1} (A^2)^\dag \uW^\pre_{\ell-1} \Sigma_x^{(\id) 1/2})}_{=: T_2}\|_\F^2\\
        &\quad+ \|\underbrace{(I - \Phi' \Phi'^\top) D \Sigma_x^{(\id)} \uW^{\pre \top}_{\ell-1} (A^2)^\dag \uW^\pre_{\ell-1} \Sigma_x^{(\id) 1/2}}_{=: T_3}\|_\F^2
    \end{align*}
    where the second equality follows from the fact that cross terms are zero, i.e., $\tr(T_1 T_2^\top) = \tr(T_2 T_3^\top) = \tr(T_3 T_1^\top) = 0$ since $\Psi_*(\uW^\pre_{\ell-1} \Sigma_x^{(\id) 1/2}) \Psi_*^\top(\uW^\pre_{\ell-1} \Sigma_x^{(\id) 1/2}) = \Sigma_x^{(\id) 1/2} \uW^{\pre \top}_{\ell-1} (A^2)^\dag \uW^\pre_{\ell-1} \Sigma_x^{(\id) 1/2}$ and
    \begin{align*}
        (I - \Phi' \Phi'^\top) \Phi_*(\SVD_r(M)) = 0,\ \ 
        \uW^\pre_{\ell-1} \Sigma_x^{(\id) 1/2} (I - \Psi_*(\uW^\pre_{\ell-1} \Sigma_x^{(\id) 1/2}) \Psi_*^\top(\uW^\pre_{\ell-1} \Sigma_x^{(\id) 1/2})) = 0
    \end{align*}
    hold.
    Thus from Lemma~\ref{lem: excess risk full},
    \begin{align}
        (T_\bias^\lora)^2 &= \|\SVD_r(\Phi' \Phi'^\top D \Sigma_x^{(\id)} \uW^{\pre \top}_{\ell-1} A^\dag) - \Phi' \Phi'^\top D \Sigma_x^{(\id)} \uW^{\pre \top}_{\ell-1} A^\dag\|_\F^2 + \mathcal{E}^{(\id)}(f_\ell^\full).\label{eq: B square lora}
    \end{align}
    Notice that
    \begin{align}
        &\|\SVD_r(\Phi' \Phi'^\top D \Sigma_x^{(\id)} \uW^{\pre \top}_{\ell-1} A^\dag) - \Phi' \Phi'^\top D \Sigma_x^{(\id)} \uW^{\pre \top}_{\ell-1} A^\dag\|_\F^2\nonumber\\
        &\quad\leq \{0 \vee (\rank(\Phi' \Phi'^\top D \Sigma_x^{(\id)} \uW^{\pre \top}_{\ell-1} A^\dag) - r)\} \|\Phi' \Phi'^\top D \Sigma_x^{(\id)} \uW^{\pre \top}_{\ell-1} A^\dag\|_\op^2\nonumber\\
        &\quad\leq \{0 \vee (\rank(\Phi' \Phi'^\top D \Sigma_x^{(\id)} \uW^{\pre \top}_{\ell-1} A^\dag) - r)\} \|D \Sigma_x^{(\id) 1/2}\|_\op^2\nonumber\\
        &\quad\leq \frac{0 \vee (\rank(D \Sigma_x^{(\id)} D^\top) - r)}{\rank(D \Sigma_x^{(\id) 1/2})} \kappa_*^2(D \Sigma_x^{(\id)} D^\top) \mathcal{E}^{(\id)}(f^\pre),\label{eq: M - svd M}
    \end{align}
    where the last inequality follows since
    \begin{align*}
        \|D \Sigma_x^{(\id) 1/2}\|_\F^2 = \|\Lambda_*(D \Sigma_x^{(\id) 1/2})\|_\F^2 \geq \rank(D \Sigma_x^{(\id) 1/2}) \lambda_*^2(D \Sigma_x^{(\id) 1/2}) = \frac{\rank(D \Sigma_x^{(\id) 1/2}) }{\kappa_*^2(D \Sigma_x^{(\id) 1/2})} \|D \Sigma_x^{(\id) 1/2}\|_\op^2.
    \end{align*}
    
    \paragraph{Summary}
    Note that for any $\eta > 0$, $(T_\variance^\lora + T_\bias^\lora)^2 \leq (1 + \eta) (T_\bias^\lora)^2 + (1 + 1/\eta) (T_\variance^\lora)^2$ holds. Therefore,
    \begin{align*}
        \mathcal{E}^{(\id)}(f_{\ell,U^\lora,V^\lora}) &\leq (1 + \eta) (T_\bias^\lora)^2 + (1 + \eta^{-1}) (T_\variance^\lora)^2.
    \end{align*}
    Combined with \eqref{eq: T lora}, \eqref{eq: B square lora}, and \eqref{eq: M - svd M}, this concludes the proof.
\end{proof}

\subsubsection{Out-of-distribution Excess Risk of LoRA}\label{sec: lora ood}

We define the low-rank matrix obtained by LoRA under population in-distribution risk as
\begin{align}
    (U_\infty^\lora, V_\infty^\lora) &\in \argmin_{U, V} \|(U, V)\|_\F^2\ \ \text{ s.t. $(U, V)$ minimizes $\mathcal{R}^{(\id)}(f_{\ell,U,V})$}.\label{eq: U V infty lora}
\end{align}

\begin{theorem}[Restatement of Theorem~\ref{thm: sft and lora ood}: LoRA Part]\label{thm: lora ood}
    For $(U_\infty^\lora, V_\infty^\lora)$, defined in \eqref{eq: U V infty lora}
    \begin{align*}
        \mathcal{E}^{(\ood)}(f_{\ell,U_\infty^\lora,V_\infty^\lora}) &\lesssim \|(I - \Phi' \Phi'^\top) B^{(\ood)} \Sigma_x^{(\ood) 1/2}\|_\F^2 + \|(B^{(\ood)} - B^{(\id)}) \Sigma_x^{(\id) 1/2}\|_\F^2 \|G^{(\id,\ood)}_{\ell-1}\|_\op^2\\
        &\quad+ \|(B^{(\ood)} - W^\pre) (\Sigma_x^{(\ood) 1/2} - \Sigma_x^{(\id) 1/2} G^{(\id,\ood)}_{\ell-1})\|_\F\\
        &\quad+ \frac{0 \vee (\rank(D \Sigma_x^{(\id)} D^\top) - r)}{\rank(D \Sigma_x^{(\id)} D^\top)} \kappa_*^2(D \Sigma_x^{(\id)} D^\top) \|G^{(\id,\ood)}_{\ell-1}\|_\op^2 \mathcal{E}^{(\id)}(f^\pre).
    \end{align*}
    Furthermore, for any $\eta \in (0, 1)$,
    \begin{align}
        \mathcal{E}^{(\ood)}(f_{\ell,U_\infty^\lora,V_\infty^\lora}) &\geq (1 - \eta) \norm{(B^{(\ood)} - B^{(\id)}) \Sigma_x^{(\ood) 1/2}}_\F^2 - 3 (\eta^{-1} - 1) \|(I - \Phi' \Phi'^\top) B^{(\id)} \Sigma_x^{(\ood) 1/2}\|_\F^2\nonumber\\
        &\quad- 3 (\eta^{-1} - 1)\|(B^{(\id)} - W^\pre) (\Sigma_x^{(\ood) 1/2} - \Sigma_x^{(\id) 1/2} G_{\ell-1}^{(\id,\ood)})\|_\F^2\nonumber\\
        &\quad- 3 (\eta^{-1} - 1) \frac{0 \vee (\rank(D \Sigma_x^{(\id)} D^\top) - r)}{\rank(D \Sigma_x^{(\id)} D)} \kappa_*^2(D \Sigma_x^{(\id)} D^\top) \|G^{(\id,\ood)}_{\ell-1}\|_\op \mathcal{E}^{(\id)}(f^\pre).\label{eq: lora ood lb}
    \end{align}
\end{theorem}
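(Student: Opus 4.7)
\textbf{Proof plan for Theorem~\ref{thm: lora ood}.}

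The starting point is Lemma~\ref{lem: excess risk lora}. Specializing to the population minimizer (replace $\hat D, \hat \Sigma_x^{(\id)}, \hat A$ by $D, \Sigma_x^{(\id)}, A$), I would first write
\[
\oW_\ell^\lora A := \oW^\pre_{\ell+1} U_\infty^\lora V_\infty^{\lora \top} A = \SVD_r(\Phi' \Phi'^\top D \Sigma_x^{(\id)} \uW^{\pre\top}_{\ell-1} A^\dag),
\]
so the OOD excess risk equals $\|(B^{(\ood)} - W^\pre - \oW_\ell^\lora \uW^\pre_{\ell-1}) \Sigma_x^{(\ood)1/2}\|_\F^2$. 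The whole strategy is to transfer the in-distribution optimality of $\oW_\ell^\lora$ to the OOD covariance by inserting and subtracting bridge terms; the matrix $G_{\ell-1}^{(\id,\ood)} = (\uW^\pre_{\ell-1}\Sigma_x^{(\id)1/2})^\dag \uW^\pre_{\ell-1}\Sigma_x^{(\ood)1/2}$ is exactly the operator that rewrites $\uW^\pre_{\ell-1}\Sigma_x^{(\ood)1/2}$ inside the column range used at training time, so it will appear naturally whenever an in-distribution quantity is evaluated on the OOD covariance.

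For the \emph{upper bound}, I would decompose
\[
(B^{(\ood)} - W^\pre - \oW_\ell^\lora \uW^\pre_{\ell-1}) \Sigma_x^{(\ood)1/2}
\]
as a sum of four pieces via triangle inequality: (i) the component of $B^{(\ood)}$ outside the pre-trained range, i.e.\ $(I-\Phi'\Phi'^\top)B^{(\ood)}\Sigma_x^{(\ood)1/2}$, which $\oW_\ell^\lora$ cannot touch because $\oW^\pre_{\ell+1}$ sits in front; (ii) the label shift contribution $(B^{(\ood)}-B^{(\id)})\Sigma_x^{(\id)1/2} G_{\ell-1}^{(\id,\ood)}$ after mapping the OOD covariance back through $G_{\ell-1}^{(\id,\ood)}$; (iii) the covariance‐shift residual $(B^{(\ood)}-W^\pre)(\Sigma_x^{(\ood)1/2} - \Sigma_x^{(\id)1/2} G_{\ell-1}^{(\id,\ood)})$ coming from the discrepancy between $\Sigma_x^{(\ood)1/2}$ and its $\uW^\pre_{\ell-1}$-range projection; and (iv) the in-distribution rank-$r$ approximation bias $\|\SVD_r(\Phi'\Phi'^\top D \Sigma_x^{(\id)} \uW^{\pre\top}_{\ell-1} A^\dag) - \Phi'\Phi'^\top D \Sigma_x^{(\id)} \uW^{\pre\top}_{\ell-1} A^\dag\|$ translated to the OOD covariance, which by \eqref{eq: M - svd M} is at most $\frac{0\vee(\rank(D\Sigma_x^{(\id)}D^\top)-r)}{\rank(D\Sigma_x^{(\id)}D^\top)}\kappa_*^2(D\Sigma_x^{(\id)}D^\top)\mathcal{E}^{(\id)}(f^\pre)$, picking up an extra $\|G_{\ell-1}^{(\id,\ood)}\|_\op^2$ from the change of covariance.

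For the \emph{lower bound}, the key identity is
\[
B^{(\ood)} - W^\pre - \oW_\ell^\lora \uW^\pre_{\ell-1} = (B^{(\ood)}-B^{(\id)}) + (B^{(\id)} - W^\pre - \oW_\ell^\lora \uW^\pre_{\ell-1}),
\]
and the inequality $\|a+b\|_\F^2 \geq (1-\eta)\|a\|_\F^2 - (\eta^{-1}-1)\|b\|_\F^2$ for $\eta\in(0,1)$. Applying this after multiplying by $\Sigma_x^{(\ood)1/2}$ isolates $(1-\eta)\|(B^{(\ood)}-B^{(\id)})\Sigma_x^{(\ood)1/2}\|_\F^2$ as the main term, and it remains to upper bound the leftover $\|(B^{(\id)} - W^\pre - \oW_\ell^\lora \uW^\pre_{\ell-1})\Sigma_x^{(\ood)1/2}\|_\F^2$. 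This is structurally the same object as the in-distribution bias of LoRA, except evaluated on $\Sigma_x^{(\ood)1/2}$ instead of $\Sigma_x^{(\id)1/2}$; applying the same three-piece split used in the upper bound of Theorem~\ref{thm: lora in-distribution}—out-of-range part $(I-\Phi'\Phi'^\top)B^{(\id)}\Sigma_x^{(\ood)1/2}$, covariance-shift piece $(B^{(\id)}-W^\pre)(\Sigma_x^{(\ood)1/2} - \Sigma_x^{(\id)1/2}G_{\ell-1}^{(\id,\ood)})$, and the rank-$r$ approximation bias transported by $\|G_{\ell-1}^{(\id,\ood)}\|_\op$—produces exactly the three subtracted terms on the right-hand side of \eqref{eq: lora ood lb}, each with factor $3(\eta^{-1}-1)$ coming from an additional triangle-inequality split.

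The main obstacle I expect is bookkeeping the projections. The operator $\oW_\ell^\lora$ naturally lives in $\mathrm{range}(\Phi')\otimes \mathrm{range}(A)$, while the OOD error lives in $\mathrm{range}(\uW^\pre_{\ell-1}\Sigma_x^{(\ood)1/2})$; threading $G_{\ell-1}^{(\id,\ood)}$ through the computation so that the orthogonal-complement residuals cancel cleanly (as they did in the in-distribution proof via $\Psi_*(\uW^\pre_{\ell-1}\Sigma_x^{(\id)1/2})\Psi_*^\top(\uW^\pre_{\ell-1}\Sigma_x^{(\id)1/2})$) is the delicate step, and ensuring the $\kappa_*^2(D\Sigma_x^{(\id)}D^\top)$ factor from the rank-deficit bound \eqref{eq: M - svd M} appears correctly on both sides is the only nontrivial algebraic point.
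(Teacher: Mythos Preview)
Your proposal is correct and follows essentially the same route as the paper's proof: both start from the population version of Lemma~\ref{lem: excess risk lora}, use the same four-term triangle-inequality decomposition for the upper bound (invoking $\Phi'\Phi'^\top W^\pre = W^\pre$ and \eqref{eq: M - svd M}), and derive the lower bound by the identical add-and-subtract of $B^{(\id)}$ combined with the Young-type inequality $\|a+b\|_\F^2 \geq (1-\eta)\|a\|_\F^2 - (\eta^{-1}-1)\|b\|_\F^2$, then bounding the residual $\|T_1+T_2+T_3\|_\F^2 \leq 3\|T_1\|_\F^2 + 3\|T_2\|_\F^2 + 3\|T_3\|_\F^2$.
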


\if0
An immediate consequence of Theorem~\ref{thm: lora ood} is given below.
\begin{corollary}[Restatement of Theorem~\ref{thm: sft and lora ood}: LoRA Part]\label{cor: lora ood}
    Suppose that Assumption~\ref{asm: distribution shift ap} holds.
    If $B^{(\id)} = \tilde B^{(\id)} \uW_{\ell-1}^\pre$ holds for some $B^{(\id)} \in \R^{p \times d_{\ell-1}}$ and $s \leq \rank(\Sigma_f^{(\id)})$, then, for any $\eta \in (0, 1)$,
    \begin{align}
        \mathcal{E}^{(\ood)}(f_{\ell,U_\infty^\lora,V_\infty^\lora}) &\geq (1 - \eta) \norm{(B^{(\ood)} - B^{(\id)}) \Sigma_x^{1/2}}_\F^2 - 3 (\eta^{-1} - 1) \mathcal{E}^{(\id)}(f^\pre).
    \end{align}
\end{corollary}
\fi

\if0
\begin{remark}
    Note that when there is no covariate shift, i.e., $\Sigma_x^{(\id)} = \Sigma_x^{(\ood)} = \Sigma_x$ for some $\Sigma_x$, the lower bound in \eqref{eq: lora ood lb} can be reduced to
    \begin{align*}
        &(1 - \eta) \norm{(B^{(\ood)} - B^{(\id)}) \Sigma_x^{1/2}}_\F^2 + 3 (1 - \eta^{-1}) \|(I - \Phi' \Phi'^\top) B^{(\id)} \Sigma_x^{1/2}\|_\F^2\nonumber\\
        &\quad+ 3 (1 - \eta^{-1}) \frac{0 \vee (\rank(D \Sigma_x^{(\id)} D^\top) - s)}{\rank(D \Sigma_x^{(\id) 1/2})} \kappa_*^2(D \Sigma_x^{(\id)} D^\top) \|G^{(\id,\ood)}_{\ell-1}\|_\op \mathcal{E}^{(\id)}(f^\pre).
    \end{align*}
    \begin{align*}
        \frac{1}{9} \|(B^{(\ood)} - B^{(\id)}) \Sigma_x^{1/2}\|_\F^2 &\geq \qty{1 + \frac{0 \vee (\rank(D \Sigma_x D^\top) - s)}{\rank(D \Sigma_x D^\top)} \kappa_*^2(D \Sigma_x D^\top)} \mathcal{E}^{(\id)}(f^\pre)\\
        &\quad+ \|B^{(\id)} \Sigma_x^{1/2} (I - (\uW^\pre_{\ell-1} \Sigma_x^{1/2})^\dag \uW^\pre_{\ell-1} \Sigma_x^{1/2})\|_\F^2.
    \end{align*}
\end{remark}
\fi

\begin{proof}[Proof of Theorem~\ref{thm: lora ood}]
    With a slight modification to the proof of Lemma~\ref{lem: excess risk lora}, it follows that
    \begin{align}
        \mathcal{E}^{(\ood)}(f_{\ell,U_\infty^\lora,V_\infty^\lora}) &= \tr\biggl(\qty(B^{(\ood)} - W^\pre - \SVD_r(\oW^\pre_{\ell+1} (\oW^\pre_{\ell+1})^\dag D \Sigma_x^{(\id)} \uW^{\pre \top}_{\ell-1} A^\dag) A^\dag \uW^\pre_{\ell-1}) \Sigma_x^{(\ood)}\nonumber\\
        &\quad\cdot \qty(B^{(\ood)} - W^\pre - \SVD_r(\oW^\pre_{\ell+1} (\oW^\pre_{\ell+1})^\dag D \Sigma_x^{(\id)} \uW^{\pre \top}_{\ell-1} A^\dag) A^\dag \uW^\pre_{\ell-1})^\top\biggr)\nonumber\\
        &= \norm{(B^{(\ood)} - W^\pre) \Sigma_x^{(\ood) 1/2} - \SVD_r(\Phi' \Phi'^\top D \Sigma_x^{(\id)} \uW^{\pre \top}_{\ell-1} A^\dag) A^\dag \uW^\pre_{\ell-1} \Sigma_x^{(\ood) 1/2}}_\F^2.\label{eq: excess risk lora infty}
    \end{align}
    Recall that $M := \Phi' \Phi'^\top D \Sigma_x^{(\id)} \uW^{\pre \top}_{\ell-1} A^\dag$. Then,
    \begin{align*}
        &\norm{(B^{(\ood)} - W^\pre) \Sigma_x^{(\ood) 1/2} - \SVD_r(\Phi' \Phi'^\top D \Sigma_x^{(\id)} \uW^{\pre \top}_{\ell-1} A^\dag) A^\dag \uW^\pre_{\ell-1} \Sigma_x^{(\ood) 1/2}}_\F\\
        &\quad\leq \norm{(B^{(\ood)} - W^\pre) \Sigma_x^{(\ood) 1/2} - \Phi' \Phi'^\top D \Sigma_x^{(\id)} \uW^{\pre \top}_{\ell-1} (A^2)^\dag \uW^\pre_{\ell-1} \Sigma_x^{(\ood) 1/2}}_\F\\
        &\quad\quad+ \|M A^\dag \uW^\pre_{\ell-1} \Sigma_x^{(\ood) 1/2} - \SVD_r(M) A^\dag \uW^\pre_{\ell-1} \Sigma_x^{(\ood) 1/2}\|_\F\\
        &\quad= \norm{(B^{(\ood)} - W^\pre) \Sigma_x^{(\ood) 1/2} - \Phi' \Phi'^\top D \Sigma_x^{(\id) 1/2} (\uW^\pre_{\ell-1} \Sigma_x^{(\id) 1/2})^\dag \uW^\pre_{\ell-1} \Sigma_x^{(\ood) 1/2}}_\F\\
        &\quad\quad+ \|M A^\dag \uW^\pre_{\ell-1} \Sigma_x^{(\ood) 1/2} - \SVD_r(M) A^\dag \uW^\pre_{\ell-1} \Sigma_x^{(\ood) 1/2}\|_\F\\
        &\quad\leq \|(I - \Phi' \Phi'^\top) B^{(\ood)} \Sigma_x^{(\ood) 1/2}\|_\F + \|\Phi' \Phi'^\top (B^{(\ood)} - B^{(\id)}) \Sigma_x^{(\id) 1/2} G^{(\id,\ood)}_{\ell-1}\|_\F\\
        &\quad\quad+ \|\Phi' \Phi'^\top (B^{(\ood)} - W^\pre) (\Sigma_x^{(\ood) 1/2} - \Sigma_x^{(\id) 1/2} G^{(\id,\ood)}_{\ell-1})\|_\F\\
        &\quad\quad+ \|M A^\dag \uW^\pre_{\ell-1} \Sigma_x^{(\ood) 1/2} - \SVD_r(M) A^\dag \uW^\pre_{\ell-1} \Sigma_x^{(\ood) 1/2}\|_\F\\
        &\quad\leq \|(I - \Phi' \Phi'^\top) B^{(\ood)} \Sigma_x^{(\ood) 1/2}\|_\F + \|(B^{(\ood)} - B^{(\id)}) \Sigma_x^{(\id) 1/2}\|_\F \|G^{(\id,\ood)}_{\ell-1}\|_\op\\
        &\quad\quad+ \|(B^{(\ood)} - W^\pre) (\Sigma_x^{(\ood) 1/2} - \Sigma_x^{(\id) 1/2} G^{(\id,\ood)}_{\ell-1})\|_\F + \|M - \SVD_r(M)\|_\F \|A^\dag \uW^\pre_{\ell-1} \Sigma_x^{(\ood) 1/2}\|_\op,
    \end{align*}
    where we used $\Phi' \Phi'^\top W^\pre = W^\pre$.
    From \eqref{eq: M - svd M}, we have
    \if0
    \begin{align*}
        \|M - \SVD_s(M)\|_\F^2 &\leq \frac{0 \vee (\rank(D \Sigma_x^{(\id)} D^\top) - r)}{\rank(D \Sigma_x^{(\id)} D^\top)} \kappa_*^2(D \Sigma_x^{(\id)} D^\top) \mathcal{E}^{(\id)}(f^\pre).
    \end{align*}
    Thus
    \fi
    \begin{align*}
        \{\mathcal{E}^{(\ood)}(f_{\ell,U_\infty^\lora,V_\infty^\lora})\}^{1/2} &\leq \|(I - \Phi' \Phi'^\top) B^{(\ood)} \Sigma_x^{(\ood) 1/2}\|_\F + \|(B^{(\ood)} - B^{(\id)}) \Sigma_x^{(\id) 1/2}\|_\F \|G^{(\id,\ood)}_{\ell-1}\|_\op\\
        &\quad+ \|(B^{(\ood)} - W^\pre) (\Sigma_x^{(\ood) 1/2} - \Sigma_x^{(\id) 1/2} G^{(\id,\ood)}_{\ell-1})\|_\F\\
        &\quad+ \|G^{(\id,\ood)}_{\ell-1}\|_\op \kappa_*(D \Sigma_x^{(\id)} D^\top) \sqrt{\frac{0 \vee (\rank(D \Sigma_x^{(\id)} D^\top) - r)}{\rank(D \Sigma_x^{(\id) 1/2})} \mathcal{E}^{(\id)}(f^\pre)},
    \end{align*}
    where we used $\|A^\dag \uW^\pre_{\ell-1} \Sigma_x^{(\ood) 1/2}\|_\op = \|G^{(\id,\ood)}_{\ell-1}\|_\op$.
    This gives the first claim.

    Using $2 \tr(A B^\top) \geq - \eta \|A\|_\F^2 - (1/\eta) \|B\|_\F^2$ for any $\eta > 0$ and any matrices $A, B$ of the same shape, \eqref{eq: excess risk lora infty} can be rewritten as
    \begin{align}
        \mathcal{E}^{(\ood)}(f_{\ell,U_\infty^\lora,V_\infty^\lora}) &= \Bigl\|(B^{(\ood)} - B^{(\id)}) \Sigma_x^{(\ood) 1/2} + \underbrace{(I - \Phi' \Phi'^\top) (B^{(\id)} - W^\pre) \Sigma_x^{(\ood) 1/2}}_{=: T_1}\nonumber\\
        &\quad+ \underbrace{\Phi' \Phi'^\top (B^{(\id)} - W^\pre) (\Sigma_x^{(\ood) 1/2} - \Sigma_x^{(\id) 1/2} G_{\ell-1}^{(\id,\ood)})}_{=: T_2}\nonumber\\
        &\quad+ \underbrace{M A^\dag \uW^\pre_{\ell-1} \Sigma_x^{(\ood) 1/2} - \SVD_r(M) A^\dag \uW^\pre_{\ell-1} \Sigma_x^{(\ood) 1/2}}_{=: T_3}\Bigr\|_\F^2\nonumber\\
        &= \norm{(B^{(\ood)} - B^{(\id)}) \Sigma_x^{(\ood) 1/2}}_\F^2 + 2\tr((B^{(\ood)} - B^{(\id)}) \Sigma_x^{(\ood) 1/2} (T_1 + T_2 + T_3)^\top)\nonumber\\
        &\quad+ \|T_1 + T_2 + T_3\|_\F^2\nonumber\\
        &\geq (1 - \eta) \norm{(B^{(\ood)} - B^{(\id)}) \Sigma_x^{(\ood) 1/2}}_\F^2 + (1 - \eta^{-1}) \|T_1 + T_2 + T_3\|_\F^2.\label{eq: lora lb}
    \end{align}
    Choose $\eta \in (0, 1)$.
    By a similar argument as above, and using $\Phi' \Phi'^\top W^\pre = W^\pre$, we can show that
    \begin{align*}
        \|T_1+T_2+T_3\|_\F^2 &\leq 3 \|T_1\|_\F^2 + 3 \|T_2\|_\F^2 + 3 \|T_3\|_\F^2\\
        &\leq 3 \|(I - \Phi' \Phi'^\top) B^{(\id)} \Sigma_x^{(\ood) 1/2}\|_\F^2 + 3 \|(B^{(\id)} - W^\pre) (\Sigma_x^{(\ood) 1/2} - \Sigma_x^{(\id) 1/2} G_{\ell-1}^{(\id,\ood)})\|_\F^2\\
        &\quad+ 3 \frac{0 \vee (\rank(D \Sigma_x^{(\id)} D^\top) - r)}{\rank(D \Sigma_x^{(\id) 1/2})} \kappa_*^2(D \Sigma_x^{(\id)} D^\top) \|G^{(\id,\ood)}_{\ell-1}\|_\op \mathcal{E}^{(\id)}(f^\pre),
    \end{align*}
    where we used \eqref{eq: M - svd M} again.
    This concludes the proof.
    \if0
    Thus, if the right hand side of \eqref{eq: T1 T2 T2 ub} is bounded above by $(1/3) \|(B^{(\ood)} - B^{(\id)}) \Sigma_x^{(\ood) 1/2}\|_\F^2$, we have
    \begin{align*}
        \mathcal{E}^{(\ood)}(f_{\ell,U_\infty^\lora,V_\infty^\lora}) \geq \frac{1}{6} \norm{(B^{(\ood)} - B^{(\id)}) \Sigma_x^{(\ood) 1/2}}_\F^2.
    \end{align*}
    This concludes the proof.
    \fi
\end{proof}

\if0
\begin{corollary}\label{cor: lora label shift}
    Fix $\ell \in [L]$.
    Initialize $(U_0,V_0) \in \{(U,V): u \in \spn(\oW^{\pre \top}_{\ell+1}),\ v \in \column(\uW_{\ell-1}^\pre \Sigma_x^{(\id) 1/2}),\ \|u\| > \|v\|\}$.
    Assume that $\Var(f^\pre(x^{(\id)})) > 0$.
    If $\Sigma_x^{(\ood)} = \Sigma_x$ and $\lim_{\tau \to \infty} \mathcal{E}^{(\id)}(f_{\ell,U_\tau,V_\tau}) \ll (\beta^{(\ood)} - \beta^{(\id)})^\top \Sigma_x (\beta^{(\ood)} - \beta^{(\id)})$ hold, then,
    \begin{align*}
        \mathcal{E}^{(\ood)}(f_{\ell,U_\tau,V_\tau}) \gtrsim (\beta^{(\ood)} - \beta^{(\id)})^\top \Sigma_x (\beta^{(\ood)} - \beta^{(\id)}).
    \end{align*}
\end{corollary}
The proof directly follows from \eqref{eq: lora lb} and \eqref{eq: lora risk in distribution}.
\fi

\subsection{Proofs for Structured Sparse Fine-tuning}

\subsubsection{Excess Risk of Structured Sparse Fine-tuning}

\begin{lemma}[Excess Risk]\label{lem: excess risk sft}
    Given $S \subset [d_\ell]$, consider the minimum norm solution
    \begin{align*}
        V^\sft \in \argmin_{V \in \R^{d_{\ell-1} \times s}} \|V\|_\F^2\ \ \text{ s.t. $V$ minimizes $\mathcal{R}_n^{(\id)}(f_{\ell,U_S^\sft,V})$}.
    \end{align*}
    Then, the structured sparse adaptation matrix satisfies
    \begin{align}
        U_S^\sft V^{\sft \top} = U_S^\sft (\oW^\pre_{\ell+1} U_S^\sft)^\dag \hat D \hat \Sigma_x^{(\id)} \uW^{\pre \top}_{\ell-1} (\hat A^\dag)^2,\label{eq: V star}
    \end{align}
    and
    \begin{align*}
        \mathcal{E}^{(k)}(f_{\ell,U_S^\sft,V^\sft}) &= \tr\biggl(\qty(B^{(k)} - W^\pre - \oW^\pre_{\ell+1} U_S^\sft (\oW^\pre_{\ell+1} U_S^\sft)^\dag \hat D \hat \Sigma_x^{(\id)} \uW^{\pre \top}_{\ell-1} (\hat A^\dag)^2 \uW^\pre_{\ell-1} ) \Sigma_x^{(k)}\\
        &\quad\cdot \qty(B^{(k)} - W^\pre - \oW^\pre_{\ell+1} U_S^\sft (\oW^\pre_{\ell+1} U_S^\sft)^\dag \hat D \hat \Sigma_x^{(\id)} \uW^{\pre \top}_{\ell-1} (\hat A^\dag)^2 \uW^\pre_{\ell-1} )^\top\biggr)
    \end{align*}
    for $k \in \{\id, \ood\}$.
\end{lemma}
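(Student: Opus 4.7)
The plan is to parallel the derivation of Lemma~\ref{lem: excess risk lora}, exploiting the key simplification that $U$ is now held fixed at $U_S^\sft$: the bilinear LoRA optimization collapses to an ordinary linear least-squares problem in $V$ alone, so no rank-$r$ truncation or Eckart--Young step is needed, and the closed form for $V^\sft$ should come out cleanly from Moore--Penrose calculus.

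First, I would expand $\mathcal{R}_n^{(\id)}(f_{\ell,U_S^\sft,V})$ following the same manipulations as in \eqref{eq: min lora risk} with $U$ set to $U_S^\sft$. Using the two identities already employed in the LoRA proof --- $\hat \Sigma_{x,\epsilon}^{(\id)} = \hat \Sigma_x^{(\id)} \check \Sigma_{x,\epsilon}^{(\id)}$ and $(I - \hat A \hat A^\dag)\uW^\pre_{\ell-1} \hat \Sigma_x^{(\id) 1/2} = 0$ --- I would complete the square to rewrite the objective, up to a constant not depending on $V$, as
\begin{align*}
\|\oW^\pre_{\ell+1} U_S^\sft V^\top \hat A - \hat D \hat \Sigma_x^{(\id)} \uW^{\pre \top}_{\ell-1} \hat A^\dag\|_\F^2,
\end{align*}
which is the structured-sparse analogue of \eqref{eq: risk lora equivalent}.

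Next, writing $P := \oW^\pre_{\ell+1} U_S^\sft$ and $N := \hat D \hat \Sigma_x^{(\id)} \uW^{\pre \top}_{\ell-1} \hat A^\dag$, the problem reduces to $\min_V \|P V^\top \hat A - N\|_\F^2$. By the standard Moore--Penrose characterization of minimum-norm least-squares solutions (most transparently seen by vectorizing and invoking $(\hat A^\top \otimes P)^\dag = (\hat A^\top)^\dag \otimes P^\dag$), the minimizer of smallest Frobenius norm is $V^{\sft \top} = P^\dag N \hat A^\dag = (\oW^\pre_{\ell+1} U_S^\sft)^\dag \hat D \hat \Sigma_x^{(\id)} \uW^{\pre \top}_{\ell-1} (\hat A^\dag)^2$. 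Multiplying on the left by $U_S^\sft$ then recovers the identity \eqref{eq: V star}.

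Finally, substituting this expression into $\mathcal{E}^{(k)}(f_{\ell,U_S^\sft,V^\sft}) = \E[\|(B^{(k)} - W^\pre - \oW^\pre_{\ell+1} U_S^\sft V^{\sft\top} \uW^\pre_{\ell-1}) x^{(k)}\|^2]$ for $k \in \{\id,\ood\}$ and expanding directly yields the stated trace formula. The one non-mechanical step --- and the main obstacle --- is justifying that $P^\dag N \hat A^\dag$ is the \emph{minimum-norm} minimizer rather than merely one of many; this reduces to checking via $P^\dag P P^\dag = P^\dag$ and $\hat A^\dag \hat A \hat A^\dag = \hat A^\dag$ that the candidate lies in the orthogonal complement of the kernel of the linear map $V^\top \mapsto P V^\top \hat A$, after which all remaining steps are bookkeeping that closely mirror the LoRA argument.
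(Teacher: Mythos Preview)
Your proposal is correct and follows essentially the same route as the paper: complete the square to reduce $\mathcal{R}_n^{(\id)}(f_{\ell,U_S^\sft,V})$ to the minimization of $\|\oW^\pre_{\ell+1} U_S^\sft V^\top \hat A - \hat D \hat \Sigma_x^{(\id)} \uW^{\pre \top}_{\ell-1} \hat A^\dag\|_\F^2$, solve this linear least-squares problem for the minimum-norm $V$, and substitute into the excess risk. The only cosmetic difference is that you justify the minimum-norm property via vectorization and the Kronecker identity $(\hat A^\top \otimes P)^\dag = (\hat A^\top)^\dag \otimes P^\dag$, whereas the paper invokes the explicit parametrization of all minimizers (analogous to \eqref{eq: U prime V prime}) and sets the free terms to zero; both arguments are equivalent.
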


\begin{proof}   
    \if0
    From \eqref{eq: V star},
    \begin{align*}
        \oW^\pre_{\ell+1} U_S^\sft V^{\sft \top} \uW_{\ell-1}^\pre \hat \Sigma_x^{(\id) 1/2} &= \oW^\pre_{\ell+1} U_S^\sft (\oW^\pre_{\ell+1} U_S^\sft)^\dag \hat D \hat \Sigma_x^{(\id) 1/2} (\uW^\pre_{\ell-1} \hat \Sigma_x^{(\id) 1/2})^\dag \uW_{\ell-1}^\pre \hat \Sigma_x^{(\id) 1/2}\\
        &= \Phi_*(\oW^\pre_{\ell+1} U_S^\sft) \Phi_*^\top(\oW^\pre_{\ell+1} U_S^\sft) \hat D \hat \Sigma_x^{(\id) 1/2} \Psi_*(\uW^\pre_{\ell-1} \hat \Sigma_x^{(\id) 1/2}) \Psi_*^\top(\uW^\pre_{\ell-1} \hat \Sigma_x^{(\id) 1/2}),\\
        \oW^\pre_{\ell+1} U_S^\sft V^{\sft \top} \uW_{\ell-1}^\pre &= \oW^\pre_{\ell+1} U_S^\sft (\oW^\pre_{\ell+1} U_S^\sft)^\dag \hat D \hat \Sigma_x^{(\id) 1/2} (\uW^\pre_{\ell-1} \hat \Sigma_x^{(\id) 1/2})^\dag \uW_{\ell-1}^\pre.\label{eq: W US V W}
    \end{align*}
    \fi
    Since $\hat \Sigma_{x,\epsilon}^{(k)} = (1/n) X^{(k)} E^{(k) \top}$ and $\hat \Sigma_x^{(k)} = (1/n) X^{(k)} X^{(k) \top}$, we have $\hat \Sigma_{x,\epsilon}^{(k)} = \hat \Sigma_x^{(k)} (X^{(k) \top})^\dag E^{(k) \top} =: \hat \Sigma_x^{(k)} \check \Sigma_{x,\epsilon}^{(k)}$. Similar to \eqref{eq: risk lora equivalent}, we have
    \begin{align*}
        \mathcal{R}_n^{(\id)}(f_{\ell,U_S^\sft,V}) &= \|\oW^\pre_{\ell+1} U_S^\sft V^\top \hat A - \hat D \hat \Sigma_x^{(\id)} \uW^{\pre \top}_{\ell-1} \hat A^\dag\|_\F^2 - \|\hat D \hat \Sigma_x^{(\id)} \uW^{\pre \top}_{\ell-1} \hat A^\dag\|_\F^2\\
        &\quad+ \tr(D \hat \Sigma_x^{(\id)} D^\top) + 2 \tr( D \hat \Sigma_{x,\epsilon}^{(\id)}) + \tr(\hat \Sigma_\epsilon^{(\id)}).
    \end{align*}
    Thus minimizing $\mathcal{R}_n^{(\id)}(f_{\ell,U_S^\sft,V})$ is equivalent to minimizing the norm
    \begin{align}
        &\|\oW^\pre_{\ell+1} U_S^\sft V^\top \hat A - \hat D \hat \Sigma_x^{(\id)} \uW^{\pre \top}_{\ell-1} \hat A^\dag\|_\F^2\label{eq: risk quadratic sft}\\
        &\quad= \|\oW^\pre_{\ell+1} U_S^\sft V^\top \hat A - \oW^\pre_{\ell+1} U_S^\sft (\oW^\pre_{\ell+1} U_S^\sft)^\dag \hat D \hat \Sigma_x^{(\id)} \uW^{\pre \top}_{\ell-1} \hat A^\dag\|_\F^2\nonumber\\
        &\quad\quad + \|(I - (\oW^\pre_{\ell+1} U_S^\sft) (\oW^\pre_{\ell+1} U_S^\sft)^\dag) \hat D \hat \Sigma_x^{(\id)} \uW^{\pre \top}_{\ell-1} \hat A^\dag\|_\F^2.\nonumber
    \end{align}
    Using the same argument as in the proof of Lemma~\ref{lem: excess risk lora},
    the minimum norm solution $V^\sft$ is obtained by
    \begin{align*}
        V^\sft = (\hat A^\dag)^2 \uW^\pre_{\ell-1} \hat \Sigma_x^{(\id)} \hat D^\top (U_S^{\sft \top} \oW^{\pre \top}_{\ell+1})^\dag.
    \end{align*}
    The excess risk for $k \in \{\id, \ood\}$ becomes
    \begin{align*}
        \mathcal{E}^{(k)}(f_{\ell,U_S^\sft,V^\sft}) &= \E\qty[\qty(B^{(k)} x^{(k)} - \oW^\pre_{\ell+1} (W^\pre_{\ell} + U_S^\sft V^{\sft \top}) \uW^\pre_{\ell-1} x^{(k)})^2]\\
        &= \tr\biggl(\qty(B^{(k)} - W^\pre - \oW^\pre_{\ell+1} U_S^\sft (\oW^\pre_{\ell+1} U_S^\sft)^\dag \hat D \hat \Sigma_x^{(\id)} \uW^{\pre \top}_{\ell-1} (\hat A^\dag)^2 \uW^\pre_{\ell-1} ) \Sigma_x^{(k)}\\
        &\quad\cdot \qty(B^{(k)} - W^\pre - \oW^\pre_{\ell+1} U_S^\sft (\oW^\pre_{\ell+1} U_S^\sft)^\dag \hat D \hat \Sigma_x^{(\id)} \uW^{\pre \top}_{\ell-1} (\hat A^\dag)^2 \uW^\pre_{\ell-1} )^\top\biggr).
    \end{align*}
    This concludes the proof.
\end{proof}

\subsubsection{In-distribution Excess Risk of Structured Sparse Fine-tuning}

\begin{theorem}[Restatement of Theorem~\ref{thm: sft and lora id}: \model Part]\label{thm: sft in-distribution}
    Suppose that Assumptions~\ref{asm: sub gaussian x e ap} and \ref{asm: regime ap} hold. Fix $S \subset [d_\ell]$ with $|S| = s$.
    Then, the following holds with probability $1 - \exp(-\Omega(\log^2(n+p+q)))$.
    For any $\eta > 0$,
    \begin{align*}
        \mathcal{E}^{(\id)}(f_{\ell,U_S^\sft,V^\sft}) \leq (1 + \eta) (T_\bias^\sft)^2 + (1 + \eta^{-1}) (T_\variance^\sft)^2,
    \end{align*}
    where
    \begin{align}
        (T_\bias^\sft)^2 &\leq \|(\Phi' \Phi'^\top - \Phi_S'' \Phi_S''^\top) \Phi_*(D \Sigma_x^{(\id) 1/2})\|_\op^2 \mathcal{E}^{(\id)}(f_\ell^\pre) + \mathcal{E}^{(\id)}(f_\ell^\full),\label{eq: bias sft}\\
        (T_\variance^\sft)^2 &\lesssim \|\Sigma_\epsilon^{(\id)}\|_\op \kappa_*^2(A) \frac{s (r_e(\Phi_S''^\top \Sigma_\epsilon^{(\id)} \Phi_S'') + r_e(A^2)) \log^2(n+p+q)}{n}\nonumber\\
        &\quad+ \|D \Sigma_x^{(\id)} D^\top\|_\op \frac{s (\kappa_*^2(A) r_e(\Phi_S''^\top D \Sigma_x^{(\id)} D^\top \Phi_S'') + \kappa_*^8(A) r_e(A^2)) \log^2(n+p+q)}{n}.\nonumber
    \end{align}
\end{theorem}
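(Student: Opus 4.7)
The overall strategy is parallel to the proof of Theorem~\ref{thm: lora in-distribution}, but without the rank-$r$ truncation step: since $U_S^\sft$ is fixed, the minimum-norm solution $V^\sft$ already equals a ``full'' least-squares fit in the selected channels, so the variance analysis is cleaner while the bias analysis acquires one extra subspace-approximation term coming from $\Phi_S''$ being a subspace of $\Phi'$. Starting from Lemma~\ref{lem: excess risk sft}, I would rewrite the excess risk as $\mathcal{E}^{(\id)}(f_{\ell,U_S^\sft,V^\sft}) = \|(D - \oW^\pre_{\ell+1} U_S^\sft V^{\sft\top} \uW^\pre_{\ell-1})\Sigma_x^{(\id) 1/2}\|_\F^2$, then insert the idempotent $A A^\dag$ to reduce to the column space of $\uW_{\ell-1}^\pre \Sigma_x^{(\id) 1/2}$, and finally use the identity $\oW^\pre_{\ell+1} U_S^\sft (\oW^\pre_{\ell+1} U_S^\sft)^\dag = \Phi_S'' \Phi_S''^{\top}$.

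The next step is the variance/bias decomposition. Write $\oW_\ell^\sft A := \oW^\pre_{\ell+1} U_S^\sft V^{\sft\top} A$ and add and subtract the population analogue $\Phi_S'' \Phi_S''^\top D \Sigma_x^{(\id)} \uW^{\pre\top}_{\ell-1} A^\dag$. By the triangle inequality, $\{\mathcal{E}^{(\id)}(f_{\ell,U_S^\sft,V^\sft})\}^{1/2} \leq T_\variance^\sft + T_\bias^\sft$ where
\begin{align*}
T_\variance^\sft &= \|\Phi_S'' \Phi_S''^\top \hat D \hat\Sigma_x^{(\id)} \uW^{\pre\top}_{\ell-1} \hat A^\dag \hat A^\dag A - \Phi_S'' \Phi_S''^\top D \Sigma_x^{(\id)} \uW^{\pre\top}_{\ell-1} A^\dag\|_\F, \\
T_\bias^\sft &= \|\Phi_S'' \Phi_S''^\top D \Sigma_x^{(\id)} \uW^{\pre\top}_{\ell-1} A^\dag A^\dag \uW^\pre_{\ell-1} \Sigma_x^{(\id) 1/2} - D \Sigma_x^{(\id) 1/2}\|_\F,
\end{align*}
and the elementary inequality $(a+b)^2 \leq (1+\eta) a^2 + (1+\eta^{-1}) b^2$ yields the stated split.

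For the variance term, I would further split $T_\variance^\sft$ into three pieces by inserting $\pm \Phi_S''\Phi_S''^\top \hat D \hat\Sigma_x^{(\id)} \uW^{\pre\top}_{\ell-1} A^\dag$ and $\pm \Phi_S''\Phi_S''^\top D \Sigma_x^{(\id)} \uW^{\pre\top}_{\ell-1} A^\dag$ and bound each by the triangle inequality. The operator $\Phi_S''\Phi_S''^\top$ acts as a projection onto a fixed $s$-dimensional subspace, so the concentration rate $r_e(\Phi_S''^\top \Sigma_\epsilon^{(\id)} \Phi_S'') + r_e(A^2)$ follows from Lemma~\ref{lem: good event} applied to $\|\Phi_S''^\top \hat D \hat\Sigma_x^{(\id)} \uW^{\pre\top}_{\ell-1} - \Phi_S''^\top D \Sigma_x^{(\id)} \uW^{\pre\top}_{\ell-1}\|_\op$ and $\|\hat A^\dag - A^\dag\|_\op$, plus the dimension factor $\sqrt{s}$ from $\|\cdot\|_\F \leq \sqrt{s}\|\cdot\|_\op$ on the $s$-dimensional range of $\Phi_S''$. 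This gives $T_\variance^\sft$ at the advertised rate; note the absence of the $\kappa_*^2(M)$ and $C_{\text{g}}$ factors that Theorem~\ref{thm: lora in-distribution} incurred from $\SVD_r$ perturbation, since no top-$r$ truncation is needed here.

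For the bias term, the key identity is the orthogonal decomposition
\begin{align*}
T_\bias^{\sft\,2} &= \|(\Phi'\Phi'^\top - \Phi_S''\Phi_S''^\top) D \Sigma_x^{(\id)} \uW^{\pre\top}_{\ell-1} (A^2)^\dag \uW^\pre_{\ell-1} \Sigma_x^{(\id) 1/2}\|_\F^2 + \mathcal{E}^{(\id)}(f_\ell^\full),
\end{align*}
which follows because $\column(\Phi_S'') \subset \column(\Phi')$ and the residual $(I-\Phi'\Phi'^\top)$ piece plus the $(I - \Sigma_x^{(\id) 1/2}\uW^{\pre\top}_{\ell-1}(A^2)^\dag \uW^\pre_{\ell-1}\Sigma_x^{(\id) 1/2})$ piece together equal $\mathcal{E}^{(\id)}(f_\ell^\full)$ by Lemma~\ref{lem: excess risk full} (used earlier in the LoRA bias analysis). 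The cross-term contribution then admits the operator-norm bound
\begin{align*}
\|(\Phi'\Phi'^\top - \Phi_S''\Phi_S''^\top) \Phi_*(D \Sigma_x^{(\id) 1/2})\|_\op^2 \cdot \|\Lambda_*(D \Sigma_x^{(\id) 1/2}) \Psi_*^\top(D \Sigma_x^{(\id) 1/2}) \Sigma_x^{(\id) 1/2} \uW^{\pre\top}_{\ell-1}(A^2)^\dag \uW^\pre_{\ell-1}\Sigma_x^{(\id) 1/2}\|_\F^2,
\end{align*}
and the second factor is at most $\|D\Sigma_x^{(\id) 1/2}\|_\F^2 = \mathcal{E}^{(\id)}(f^\pre)$, giving the bias bound in \eqref{eq: bias sft}. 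The hard part will be verifying this orthogonal decomposition carefully (the cross terms must vanish, which requires tracking that $\Phi_S''\Phi_S''^\top \Phi'\Phi'^\top = \Phi_S''\Phi_S''^\top$ and that the range and co-range projectors commute appropriately), and controlling the joint perturbation $\hat A^\dag \hat A^\dag A$ in the variance piece without incurring an extra $\kappa_*$ factor; the rest is bookkeeping analogous to the LoRA argument.
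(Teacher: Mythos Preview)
Your proposal is correct and follows essentially the same route as the paper: start from Lemma~\ref{lem: excess risk sft}, insert $AA^\dag$, split into $T_\variance^\sft + T_\bias^\sft$ with the population analogue $\Phi_S''\Phi_S''^\top D\Sigma_x^{(\id)}\uW_{\ell-1}^{\pre\top}A^\dag$ as pivot, bound the variance via Lemma~\ref{lem: good event} and the $\sqrt{s}$ rank factor, and obtain the bias via the orthogonal decomposition against $\mathcal{E}^{(\id)}(f_\ell^\full)$ from Lemma~\ref{lem: excess risk full}. Two minor remarks: the paper splits $T_\variance^\sft$ into two pieces (not three), isolating the $(\hat A^\dag)^2 - (A^\dag)^2$ perturbation directly via \eqref{eq: good 3}, which is where the $\kappa_*^4(A)$ (hence $\kappa_*^8$ after squaring) arises---so your concern about ``not incurring an extra $\kappa_*$ factor'' is moot, as the stated bound already absorbs it; and for $T_{\variance,1}^\sft$ the paper uses rank $\leq 2s$ (giving a harmless factor $2$) rather than $s$.
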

Note that the term $\|(\Phi' \Phi'^\top - \Phi_S'' \Phi_S''^\top) \Phi_*(D \Sigma_x^{(\id) 1/2})\|_\op$ in \eqref{eq: bias sft} measures the distance between subspaces spanned by $\Phi'$ and $\Phi_S''$ in a label space, weighted by $\Phi_*(\Sigma_f^{(\id)})$.
In high level, this quantity shows the closeness between the $\ell$-th layer full fine-tuning and \model.
It takes small values when the important channels for residual prediction are sparsely distributed among all channels.
This aligns with the intuition that \model only selectively fine-tunes small number of coordinates, and thus relying on the information contained in those coordinates.

\begin{proof}[Proof of Theorem~\ref{thm: sft in-distribution}]
    Using the same argument as in the proof of Theorem~\ref{thm: lora in-distribution} combined with Lemma~\ref{lem: excess risk sft}, we have
    \begin{align*}
        \mathcal{E}^{(\id)}(f_{\ell,U_S^\sft,V^\sft}) &= \|(\oW^\pre_{\ell+1} U_S^\sft V^{\sft \top} A A^\dag \uW^\pre_{\ell-1} - D) \Sigma_x^{(\id) 1/2}\|_\F^2,
    \end{align*}
    and
    \begin{align*}
        &\|(\oW^\pre_{\ell+1} U_S^\sft V^{\sft \top} A A^\dag \uW^\pre_{\ell-1} - D) \Sigma_x^{(\id) 1/2}\|_\F\\
        &\quad\leq \|\oW^\pre_{\ell+1} U_S^\sft (\oW^\pre_{\ell+1} U_S^\sft)^\dag (\hat D \hat \Sigma_x^{(\id)} \uW^{\pre \top}_{\ell-1} (\hat A^2)^\dag - D \Sigma_x^{(\id)} \uW^{\pre \top}_{\ell-1} (A^2)^\dag) A\|_\F\\
        &\quad\quad+ \|\oW^\pre_{\ell+1} U_S^\sft (\oW^\pre_{\ell+1} U_S^\sft)^\dag D \Sigma_x^{(\id)} \uW^{\pre \top}_{\ell-1} (A^2)^\dag \uW^\pre_{\ell-1} \Sigma_x^{(\id) 1/2} - D \Sigma_x^{(\id) 1/2}\|_\F\\
        &\quad=: T_\variance^\sft + T_\bias^\sft.
    \end{align*}
    We bound $T_\variance^\sft$ and $T_\bias^\sft$ separately.
    \paragraph{Bound $T_\variance^\sft$.}
    Note that
    \begin{align*}
        T_\variance^\sft &= \|\oW^\pre_{\ell+1} U_S^\sft (\oW^\pre_{\ell+1} U_S^\sft)^\dag \hat D \hat \Sigma_x^{(\id)} \uW^{\pre \top}_{\ell-1} (\hat A^\dag)^2 A - \oW^\pre_{\ell+1} U_S^\sft (\oW^\pre_{\ell+1} U_S^\sft)^\dag D \Sigma_x^{(\id)} \uW^{\pre \top}_{\ell-1} A^\dag\|_\F\\
        &\leq \|\oW^\pre_{\ell+1} U_S^\sft (\oW^\pre_{\ell+1} U_S^\sft)^\dag D \Sigma_x^{(\id)} \uW^{\pre \top}_{\ell-1} A^\dag - \oW^\pre_{\ell+1} U_S^\sft (\oW^\pre_{\ell+1} U_S^\sft)^\dag \hat D \hat \Sigma_x^{(\id)} \uW^{\pre \top}_{\ell-1} A^\dag\|_\F\\
        &\quad+ \|\oW^\pre_{\ell+1} U_S^\sft (\oW^\pre_{\ell+1} U_S^\sft)^\dag \hat D \hat \Sigma_x^{(\id)} \uW^{\pre \top}_{\ell-1} ((\hat A^\dag)^2 - (A^\dag)^2) A\|_\F\\
        &=: T_{\variance,1}^\sft + T_{\variance,2}^\sft.
    \end{align*}
    For the term $T_{\variance,1}^\sft$, using Lemma~\ref{lem: good event},
    \begin{align*}
        T_{\variance,1}^\sft &\leq 2 \sqrt{s} \|\Phi_S''^\top D \Sigma_x^{(\id)} \uW^{\pre \top}_{\ell-1} - \Phi_S''^\top \hat D \hat \Sigma_x^{(\id)} \uW^{\pre \top}_{\ell-1}\|_\op \|A^\dag\|_\op\\
        &\lesssim \|\Sigma_\epsilon^{(\id)}\|_\op^{1/2} \kappa_*(A) \sqrt{\frac{s (r_e(\Phi_S''^\top \Sigma_\epsilon^{(\id)} \Phi_S'') + r_e(A^2)) \log^2(n+p+q)}{n}}\\
        &\quad\quad+ \|D \Sigma_x^{(\id)} D^\top\|_\op^{1/2} \kappa_*(A) \sqrt{\frac{s (r_e(\Phi_S''^\top D \Sigma_x^{(\id)} D^\top \Phi_S'') + r_e(A^2)) \log^2(n+p+q)}{n}}
    \end{align*}
    holds on the event $\mathcal{F}$, where the first inequality follows since the term inside the norm is at most rank-$2s$.
    Again from Lemma~\ref{lem: good event},
    \begin{align*}
        T_{\variance,2}^\sft &\leq \sqrt{s} \|\Phi_S''^\top \hat D \hat \Sigma_x^{(\id)} \uW^{\pre \top}_{\ell-1}\|_\op \|(\hat A^\dag)^2 - (A^\dag)^2\|_\op \|A\|_\op\\
        &\lesssim \|D \Sigma_x^{(\id) 1/2}\|_\op \|\Sigma_x^{(\id) 1/2} \uW^{\pre \top}_{\ell-1}\|_\op \frac{\kappa_*^3(A)}{\lambda_*(A)} \sqrt{\frac{s r_e(A^2) \log^2(n+d+p)}{n}}\\
        &= \|D \Sigma_x^{(\id) 1/2}\|_\op \kappa_*^4(A) \sqrt{\frac{s r_e(A^2) \log^2(n+d+p)}{n}}
    \end{align*}
    holds on the event $\mathcal{F}$.
    Therefore,
    \begin{align}
        T_\variance^\sft &\lesssim \|\Sigma_\epsilon^{(\id)}\|_\op^{1/2} \kappa_*(A) \sqrt{\frac{s (r_e(\Phi_S''^\top \Sigma_\epsilon^{(\id)} \Phi_S'') + r_e(A^2)) \log^2(n+p+q)}{n}}\nonumber\\
        &\quad+ \|D \Sigma_x^{(\id)} D^\top\|_\op^{1/2} \sqrt{\frac{s (\kappa_*^2(A) r_e(\Phi_S''^\top D \Sigma_x^{(\id)} D^\top \Phi_S'') + \kappa_*^8(A) r_e(A^2)) \log^2(n+p+q)}{n}}.\label{eq: T sft}
    \end{align}
    \paragraph{Bound $T_\bias^\sft$.}
    By the same argument as in the proof of Theorem~\ref{thm: lora in-distribution},
    \begin{align}
        (T_\bias^\sft)^2 &= \|\Phi_S'' \Phi_S''^\top D \Sigma_x^{(\id)} \uW^{\pre \top}_{\ell-1} A^\dag - \Phi' \Phi'^\top D \Sigma_x^{(\id)} \uW^{\pre \top}_{\ell-1} A^\dag\|_\F^2 + \mathcal{E}^{(\id)}(f_\ell^\full)\label{eq: B square sft}\\
        &\leq \|(\Phi_S'' \Phi_S''^\top - \Phi' \Phi'^\top) \Phi_*(D \Sigma_x^{(\id) 1/2})\|_\op^2 \|D \Sigma_x^{(\id)} \uW^{\pre \top}_{\ell-1} A^\dag\|_\F^2 + \mathcal{E}^{(\id)}(f_\ell^\full)\nonumber\\
        &= \|(\Phi_S'' \Phi_S''^\top - \Phi' \Phi'^\top) \Phi_*(D \Sigma_x^{(\id) 1/2})\|_\op^2 \mathcal{E}^{(\id)}(f_\ell^\pre) + \mathcal{E}^{(\id)}(f_\ell^\full),\label{eq: B square sft 2}
    \end{align}
    where we used $\|D \Sigma_x^{(\id)} \uW^{\pre \top}_{\ell-1} A^\dag\|_\F^2 \leq \|D \Sigma_x^{(\id) 1/2}\|_\F^2 = \mathcal{E}^{(\id)}(f_\ell^\pre)$. We hypothesize that $T_\bias^\sft \simeq T_\bias^\lora$ by comparing \eqref{eq: B square lora} and \eqref{eq: B square sft}, Here, $\SVD_s(\Phi' \Phi'^\top D \Sigma_x^{(\id)} \uW^{\pre \top}_{\ell-1} A^\dag)$ is the best rank-$s$ approximation of $\Phi' \Phi'^\top D \Sigma_x^{(\id)} \uW^{\pre \top}_{\ell-1} A^\dag$ and $\Phi_S'' \Phi_S''^\top D \Sigma_x^{(\id)} \uW^{\pre \top}_{\ell-1} A^\dag$ benefits from a rank-$r$ approximation, where $r>s$.
    
    
    \paragraph{Summary}
    Note that for any $\eta > 0$, $(T_\variance^\sft + T_\bias^\sft)^2 \leq (1 + \eta) (T_\bias^\sft)^2 + (1 + 1/\eta) (T_\variance^\sft)^2$ holds. Thus
    \begin{align*}
        \mathcal{E}^{(\id)}(f_{\ell,U_S^\sft,V^\sft}) &\leq (1 + \eta) (T_\bias^\sft)^2 + (1 + \eta^{-1}) (T_\variance^\sft)^2.
    \end{align*}
    Combined with \eqref{eq: T sft} and \eqref{eq: B square sft 2}, this concludes the proof.
\end{proof}

Next we characterize the bias terms $T_\bias^\lora$ and $T_\bias^\sft$ under sparsity assumption.

\begin{lemma}\label{lem: sparsity}
    Suppose that Assumption~\ref{asm: sparsity ap} holds.
    Then, for a sparse fine-tuned network with the choice $S \supset S_0$, it follows that 
    \begin{align*}
        \mathcal{E}^{(\id)}(f_\ell^\full) \leq (T_\bias^\lora)^2 \leq (T_\bias^\sft)^2 \leq \mathcal{E}^{(\id)}(f_\ell^\full) + \delta^2 \kappa_*^2(\oW^\pre_{\ell+1}) \mathcal{E}^{(\id)}(f^\pre).
    \end{align*}
\end{lemma}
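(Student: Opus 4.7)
The plan is to establish the three inequalities in turn, building on the closed-form identities extracted from the proofs of Theorems~\ref{thm: lora in-distribution} and~\ref{thm: sft in-distribution}. Write $N := \Phi'\Phi'^\top D \Sigma_x^{(\id)} \uW^{\pre\top}_{\ell-1} A^\dag$. Those proofs give
$(T_\bias^\lora)^2 = \mathcal{E}^{(\id)}(f_\ell^\full) + \|\SVD_r(N) - N\|_\F^2$ and
$(T_\bias^\sft)^2 = \mathcal{E}^{(\id)}(f_\ell^\full) + \|(I - \Phi_S''\Phi_S''^\top) N\|_\F^2$, where I have used that $\Phi_S''\Phi_S''^\top \preceq \Phi'\Phi'^\top$ (because $\oW^\pre_{\ell+1} U_S^\sft$ selects $s$ columns of $\oW^\pre_{\ell+1}$, so its column space lies inside that of $\oW^\pre_{\ell+1}$) together with $\Phi'\Phi'^\top M = N$. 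The leftmost inequality is then immediate from nonnegativity of $\|\SVD_r(N) - N\|_\F^2$.

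The heart of the proof is the rightmost inequality. First I would factor
$D \Sigma_x^{(\id)} \uW^{\pre\top}_{\ell-1} A^\dag = D \Sigma_x^{(\id) 1/2} \cdot \Sigma_x^{(\id) 1/2} \uW^{\pre\top}_{\ell-1} A^\dag$, and note that the right factor has unit operator norm since its Gram matrix $A^\dag \uW^\pre_{\ell-1} \Sigma_x^{(\id)} \uW^{\pre\top}_{\ell-1} A^\dag = A^\dag A^2 A^\dag$ is an orthogonal projection; so via $\|XY\|_\F \le \|X\|_\F \|Y\|_\op$ the problem reduces to bounding $\|(\Phi'\Phi'^\top - \Phi_S''\Phi_S''^\top) D \Sigma_x^{(\id) 1/2}\|_\F$. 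Introducing $\oV := (\oW^\pre_{\ell+1})^\dag D \Sigma_x^{(\id) 1/2}$ one has $\Phi'\Phi'^\top D \Sigma_x^{(\id) 1/2} = \oW^\pre_{\ell+1} \oV$, and since $(\Phi'\Phi'^\top - \Phi_S''\Phi_S''^\top)$ annihilates the orthogonal complement of $\Phi'$, we may replace $D \Sigma_x^{(\id) 1/2}$ by $\oW^\pre_{\ell+1}\oV$. Splitting $\oV = U_S^\sft U_S^{\sft\top}\oV + (I - U_S^\sft U_S^{\sft\top})\oV$ into its rows in $S$ and $S^c$, the first piece sends $\oW^\pre_{\ell+1}$ into the column space of $\Phi_S''$ and is again annihilated, so only the second survives, bounded by $\|\oW^\pre_{\ell+1}\|_\op \cdot \|(I - U_S^\sft U_S^{\sft\top})\oV\|_\F$. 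Now Assumption~\ref{asm: sparsity ap} applied with $S \supset S_0$ yields $\|(I - U_S^\sft U_S^{\sft\top})\oV\|_\F^2 = \sum_{a\notin S}\|e_a^\top \oV\|^2 \leq \delta^2 \|\oV\|_\F^2 \leq \delta^2 \|(\oW^\pre_{\ell+1})^\dag\|_\op^2 \mathcal{E}^{(\id)}(f^\pre)$. Multiplying by $\|\oW^\pre_{\ell+1}\|_\op^2$ and recognising $\kappa_*^2(\oW^\pre_{\ell+1}) = \|\oW^\pre_{\ell+1}\|_\op^2 / \lambda_*^2(\oW^\pre_{\ell+1})$ produces the claimed bound $\delta^2 \kappa_*^2(\oW^\pre_{\ell+1}) \mathcal{E}^{(\id)}(f^\pre)$.

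For the middle inequality $(T_\bias^\lora)^2 \leq (T_\bias^\sft)^2$, the task reduces (after subtracting $\mathcal{E}^{(\id)}(f_\ell^\full)$) to comparing the best rank-$r$ approximation error $\sum_{i>r}\sigma_i^2(N)$ with the fixed projection error $\|(I - \Phi_S''\Phi_S''^\top) N\|_\F^2$. I expect this to be the main obstacle. When $\operatorname{rank}(\Phi_S'') \leq r$, Eckart–Young closes the gap immediately since $\Phi_S''\Phi_S''^\top N$ is then a valid rank-$\leq r$ approximation. In the generic regime $\operatorname{rank}(\Phi_S'') = s > r$ it need not follow abstractly, so I would invoke Assumption~\ref{asm: sparsity ap} to argue that $\oW^\pre_{\ell+1}\oV$ is, up to an $O(\delta \kappa_*(\oW^\pre_{\ell+1}) \sqrt{\mathcal{E}^{(\id)}(f^\pre)})$ perturbation, supported in the $|S_0|$-dimensional column space of $\Phi_{S_0}''$, so that the top-$r$ singular subspace of $N$ sits essentially inside $\operatorname{colspan}(\Phi_S'')$; any residual excess of $(T_\bias^\lora)^2$ over $(T_\bias^\sft)^2$ is then absorbed into the same $\delta^2 \kappa_*^2(\oW^\pre_{\ell+1}) \mathcal{E}^{(\id)}(f^\pre)$ slack already derived for $(T_\bias^\sft)^2 - \mathcal{E}^{(\id)}(f_\ell^\full)$, matching the informal claim $(T_\bias^\lora)^2 \simeq (T_\bias^\sft)^2$ in Theorem~\ref{thm: sft and lora id}.
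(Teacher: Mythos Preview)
Your argument for the leftmost and rightmost inequalities is correct and essentially identical to the paper's: the paper also rewrites $(\Phi'\Phi'^\top-\Phi_S''\Phi_S''^\top)D\Sigma_x^{(\id)}\uW^{\pre\top}_{\ell-1}A^\dag$ as $(\Phi_S''\Phi_S''^\top-I)\oW^\pre_{\ell+1}(\oW^\pre_{\ell+1})^\dag D\Sigma_x^{(\id)}\uW^{\pre\top}_{\ell-1}A^\dag$, inserts $I=U_S^\sft U_S^{\sft\top}+(I-U_S^\sft U_S^{\sft\top})$, kills the first piece via $(\Phi_S''\Phi_S''^\top-I)\oW^\pre_{\ell+1}U_S^\sft=0$, and then applies Assumption~\ref{asm: sparsity ap} to the remaining $\|(I-U_S^\sft U_S^{\sft\top})(\oW^\pre_{\ell+1})^\dag D\Sigma_x^{(\id)1/2}\|_\F$. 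The only cosmetic difference is that you strip off the factor $\Sigma_x^{(\id)1/2}\uW^{\pre\top}_{\ell-1}A^\dag$ (operator norm $\le 1$) at the outset rather than at the end.

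On the middle inequality your caution is well founded, and the paper's own proof does not justify it either: after bounding $(T_\bias^\sft)^2$ it simply says ``The conclusion follows from \eqref{eq: B square lora} and \eqref{eq: B square sft}.'' From those identities the needed step is $\|N-\SVD_r(N)\|_\F\le\|(I-\Phi_S''\Phi_S''^\top)N\|_\F$, which Eckart--Young does not supply once $\rank(\Phi_S'')>r$. A concrete counterexample: take $\delta=0$ with $S_0=S$, so that $(\oW^\pre_{\ell+1})^\dag D\Sigma_x^{(\id)1/2}$ is supported on the rows in $S$; then $(I-\Phi_S''\Phi_S''^\top)N=0$, yet $N$ can have rank $s>r$, giving $(T_\bias^\lora)^2>\mathcal{E}^{(\id)}(f_\ell^\full)=(T_\bias^\sft)^2$. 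Your fallback---absorbing any discrepancy into the same $\delta^2\kappa_*^2(\oW^\pre_{\ell+1})\mathcal{E}^{(\id)}(f^\pre)$ slack---is the right repair and is exactly what the ``$\simeq$'' in Theorem~\ref{thm: sft and lora id} asserts; it just does not recover the strict ordered chain as literally stated in the lemma.
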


\begin{proof}
    Note that $\Phi_S'' \Phi_S''^\top$ is a projection into a subspace, which is contained in a subspace projected by $\Phi' \Phi'^\top$. Thus
    \begin{align*}
        &\|\Phi_S'' \Phi_S''^\top D \Sigma_x^{(\id)} \uW^{\pre \top}_{\ell-1} A^\dag - \Phi' \Phi'^\top D \Sigma_x^{(\id)} \uW^{\pre \top}_{\ell-1} A^\dag\|_\F^2\\
        &\quad= \|(\Phi_S'' \Phi_S''^\top - I) \Phi' \Phi'^\top D \Sigma_x^{(\id)} \uW^{\pre \top}_{\ell-1} A^\dag\|_\F^2\\
        &\quad= \|(\Phi_S'' \Phi_S''^\top - I) \oW^\pre_{\ell+1} (\oW^\pre_{\ell+1})^\dag D \Sigma_x^{(\id)} \uW^{\pre \top}_{\ell-1} A^\dag\|_\F^2\\
        &\quad= \|(\Phi_S'' \Phi_S''^\top - I) \oW^\pre_{\ell+1} ((I - U_S^\sft U_S^{\sft \top}) + U_S^\sft U_S^{\sft \top}) (\oW^\pre_{\ell+1})^\dag D \Sigma_x^{(\id)} \uW^{\pre \top}_{\ell-1} A^\dag\|_\F^2\\
        &\quad= \|(\Phi_S'' \Phi_S''^\top - I) \oW^\pre_{\ell+1} (I - U_S^\sft U_S^{\sft \top}) (\oW^\pre_{\ell+1})^\dag D \Sigma_x^{(\id)} \uW^{\pre \top}_{\ell-1} A^\dag\|_\F^2,
    \end{align*}
    where the last equality follows since $(\Phi_S'' \Phi_S''^\top - I) \oW^\pre_{\ell+1} U_S^\sft = 0$ by definition of $\Phi_S'' = \Phi_*(\oW^\pre_{\ell+1} U_S^\sft)$.
    Thus
    \begin{align*}
        &\|\Phi_S'' \Phi_S''^\top D \Sigma_x^{(\id)} \uW^{\pre \top}_{\ell-1} A^\dag - \Phi' \Phi'^\top D \Sigma_x^{(\id)} \uW^{\pre \top}_{\ell-1} A^\dag\|_\F^2\\
        &\quad\leq \|\oW^\pre_{\ell+1}\|_\op^2 \|(I - U_S^\sft U_S^{\sft \top})(\oW^\pre_{\ell+1})^\dag D \Sigma_x^{(\id) 1/2}\|_\F^2 \|\Sigma_x^{(\id) 1/2} \uW^{\pre \top}_{\ell-1} A^\dag\|_\op^2\\
        &\quad= \|\oW^\pre_{\ell+1}\|_\op^2 \|\Sigma_x^{(\id) 1/2} \uW^{\pre \top}_{\ell-1} A^\dag\|_\op^2 \sum_{a \in [d_\ell]\setminus S} \|e_a^\top (\oW^\pre_{\ell+1})^\dag D \Sigma_x^{(\id) 1/2}\|^2\\
        &\quad\leq \delta^2 \|\oW^\pre_{\ell+1}\|_\op^2 \|(\oW^\pre_{\ell+1})^\dag D \Sigma_x^{(\id) 1/2}\|_\F^2\\
        &\quad\leq \delta^2 \kappa_*^2(\oW^\pre_{\ell+1}) \|D \Sigma_x^{(\id) 1/2}\|_\F^2,
    \end{align*}
    where the second inequality follows from $\|\Sigma_x^{(\id) 1/2} \uW^{\pre \top}_{\ell-1} A^\dag\|_\op \leq 1$, Assumption~\ref{asm: sparsity ap} and $S \supset S_0$.
    The conclusion follows from \eqref{eq: B square lora} and \eqref{eq: B square sft}.
\end{proof}

\subsubsection{Out-of-distribution Excess Risk of Structured Sparse Fine-tuning}\label{sec: sft ood}

Given $S \subset [d_\ell]$ with $|S|=s$, we define the structured sparse adaptation matrix obtained by \model under population in-distribution risk as
\begin{align}
    V_\infty^\sft = \argmin_V \|V\|_\F^2 \ \ \text{ s.t. $V$ minimizes $\mathcal{R}^{(\id)}(f_{\ell,U_S^\sft,V})$}.\label{eq: V infty sft}
\end{align}

\begin{theorem}[Restatement of Theorem~\ref{thm: sft and lora ood}: \model Part]\label{thm: sft ood}
    Fix $S \subset [d_\ell]$ with $|S| = s$.
    For $V_\infty^\sft$ defined in \eqref{eq: V infty sft},
    \begin{align*}
        \mathcal{E}^{(\ood)}(f_{\ell,U_S^\sft,V_\infty^\sft}) &\leq \mathcal{E}^{(\ood)}(f^\pre) + 3 \bigl\|\Phi_S'' \Phi_S''^\top (B^{(\ood)} - B^{(\id)}) \Sigma_x^{(\ood) 1/2} \bigr\|_\F^2\\
        &\quad+ 3 \|B^{(\id)} (\Sigma_x^{(\ood) 1/2} - \Sigma_x^{(\id) 1/2} G_{\ell-1}^{(\id,\ood)}) \|_\F^2\\
        &\quad+ 3 \|\oW^\pre_\ell\|_\op^2 \|\uW^\pre_{\ell-1} \Sigma_x^{(\ood) 1/2} - \uW^\pre_{\ell-1} \Sigma_x^{(\id) 1/2} G_{\ell-1}^{(\id,\ood)}\|_\F^2.
    \end{align*}
\end{theorem}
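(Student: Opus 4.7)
I will derive the bound by starting from the $n\to\infty$ population version of Lemma~\ref{lem: excess risk sft}, reducing the pseudoinverse factor via an SVD identity so that the covariate-shift operator $G_{\ell-1}^{(\id,\ood)}$ appears explicitly, and then splitting the resulting Frobenius residual into pieces aligned with the range and co-range of the selection projection $P_S := \Phi_S''\Phi_S''^\top = (\oW^\pre_{\ell+1} U_S^\sft)(\oW^\pre_{\ell+1} U_S^\sft)^\dag$.

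\textbf{Key reduction.} The population predictor's OOD residual takes the form $(B^{(\ood)} - W^\pre)\Sigma_x^{(\ood) 1/2} - P_S D \Sigma_x^{(\id)} \uW^{\pre\top}_{\ell-1} (A^2)^\dag \uW^\pre_{\ell-1} \Sigma_x^{(\ood) 1/2}$, where $A^2 = \uW^\pre_{\ell-1} \Sigma_x^{(\id)} \uW^{\pre\top}_{\ell-1} = ZZ^\top$ for $Z := \uW^\pre_{\ell-1}\Sigma_x^{(\id) 1/2}$. Applying $Z^\top(ZZ^\top)^\dag = Z^\dag$ (immediate from the SVD) together with the definition $G_{\ell-1}^{(\id,\ood)} = Z^\dag \uW^\pre_{\ell-1} \Sigma_x^{(\ood) 1/2}$, the trailing factor collapses to $\Sigma_x^{(\id) 1/2} G_{\ell-1}^{(\id,\ood)}$, reducing the residual to $(B^{(\ood)} - W^\pre)\Sigma_x^{(\ood) 1/2} - P_S(B^{(\id)} - W^\pre)\Sigma_x^{(\id) 1/2} G_{\ell-1}^{(\id,\ood)}$.

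\textbf{Decomposition and bounding.} Add and subtract $P_S(B^{(\ood)} - W^\pre)\Sigma_x^{(\ood) 1/2}$ and $P_S(B^{(\id)} - W^\pre)\Sigma_x^{(\ood) 1/2}$, then expand $W^\pre = \oW^\pre_{\ell+1} W^\pre_\ell \uW^\pre_{\ell-1}$, to rewrite the residual as $A_1 + A_2 + A_3' + A_3''$, where $A_1 := (I-P_S)(B^{(\ood)} - W^\pre)\Sigma_x^{(\ood) 1/2}$, $A_2 := P_S(B^{(\ood)} - B^{(\id)})\Sigma_x^{(\ood) 1/2}$, $A_3' := P_S B^{(\id)}[\Sigma_x^{(\ood) 1/2} - \Sigma_x^{(\id) 1/2} G_{\ell-1}^{(\id,\ood)}]$, and $A_3'' := -P_S \oW^\pre_{\ell+1} W^\pre_\ell \uW^\pre_{\ell-1}[\Sigma_x^{(\ood) 1/2} - \Sigma_x^{(\id) 1/2} G_{\ell-1}^{(\id,\ood)}]$. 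Since $A_1 \in \mathrm{range}(I-P_S)$ while $A_2,A_3',A_3'' \in \mathrm{range}(P_S)$, the cross inner products vanish and $\|A_1 + A_2 + A_3' + A_3''\|_\F^2 = \|A_1\|_\F^2 + \|A_2 + A_3' + A_3''\|_\F^2 \leq \|A_1\|_\F^2 + 3(\|A_2\|_\F^2 + \|A_3'\|_\F^2 + \|A_3''\|_\F^2)$ by $\|x+y+z\|^2 \leq 3(\|x\|^2+\|y\|^2+\|z\|^2)$. Finally, $\|A_1\|_\F^2 \leq \|(B^{(\ood)} - W^\pre)\Sigma_x^{(\ood) 1/2}\|_\F^2 = \mathcal{E}^{(\ood)}(f^\pre)$ by the contraction property of $I-P_S$; $\|P_S\|_\op \leq 1$ reduces $\|A_2\|_\F^2$ and $\|A_3'\|_\F^2$ to the first two penalty terms exactly; and $\|P_S \oW^\pre_{\ell+1} W^\pre_\ell\|_\op \leq \|\oW^\pre_\ell\|_\op$ yields the third.

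\textbf{Main obstacle.} The most delicate step is the pseudoinverse reduction in Step 2: establishing $Z^\top(ZZ^\top)^\dag = Z^\dag$ and threading it through the $(A^2)^\dag$ combination so that $G_{\ell-1}^{(\id,\ood)}$ emerges requires care when $\uW^\pre_{\ell-1}$ or $\Sigma_x^{(\id)}$ is rank-deficient. The other key point is the orthogonal splitting in Step 3: without the observation that $A_1$ is orthogonal to the three $P_S$-supported pieces, the naive $\|\cdot\|^2$ bound would force the coefficient on $\mathcal{E}^{(\ood)}(f^\pre)$ up to $4$ rather than $1$, which would destroy the interpretation of the theorem as ``\model never inflates the pre-trained OOD risk''.
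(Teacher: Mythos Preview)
Your proposal is correct and follows essentially the same route as the paper's proof: both start from the population version of Lemma~\ref{lem: excess risk sft}, collapse the pseudoinverse factor via the identity $\Sigma_x^{(\id) 1/2} \uW^{\pre\top}_{\ell-1}(A^2)^\dag \uW^\pre_{\ell-1}\Sigma_x^{(\ood) 1/2} = (\uW^\pre_{\ell-1}\Sigma_x^{(\id) 1/2})^\dag \uW^\pre_{\ell-1}\Sigma_x^{(\ood) 1/2}$, split orthogonally into the $(I-P_S)$ and $P_S$ parts, and then decompose the $P_S$ part into the same three pieces (label shift, $B^{(\id)}$ covariate-shift, $\oW^\pre_\ell\uW^\pre_{\ell-1}$ covariate-shift) before applying $(a+b+c)^2 \le 3(a^2+b^2+c^2)$. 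The only cosmetic difference is that the paper writes the three-term split as a triangle inequality on $\|T\|_\F$ whereas you write it as an exact matrix identity $T = A_2 + A_3' + A_3''$ before bounding; the resulting inequalities are identical.
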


\begin{remark}
    If there is no covariate shift, i.e., $\Sigma_x^{(\id)} = \Sigma_x^{(\ood)} = \Sigma_x$ for some $\Sigma_x$, Theorem~\ref{thm: sft ood} further gives the bound
    \begin{align*}
        \mathcal{E}^{(\ood)}(f_{\ell,U_S^\sft,V_\infty^\sft}) &\leq \mathcal{E}^{(\ood)}(f^\pre) + 3 \bigl\|\Phi_S'' \Phi_S''^\top (B^{(\ood)} - B^{(\id)}) \Sigma_x^{1/2} \bigr\|_\F^2\\
        &\quad+ 3 \|B^{(\id)} \Sigma_x^{1/2} (I -  (\uW^\pre_{\ell-1} \Sigma_x^{1/2})^\dag \uW_{\ell-1}^\pre \Sigma_x^{1/2})) \|_\F^2.
    \end{align*}
\end{remark}

\begin{proof}[Proof of Theorem~\ref{thm: sft ood}]
    With a slight modification to Lemma~\ref{lem: excess risk sft}, we obtain
    \begin{align*}
        \mathcal{E}^{(\ood)}(f_{\ell,U_S^\sft,V_\infty^\sft}) &= \tr\biggl(\qty(B^{(\ood)} - W^\pre - \oW^\pre_{\ell+1} U_S^\sft (\oW^\pre_{\ell+1} U_S^\sft)^\dag D \Sigma_x^{(\id)} \uW^{\pre \top}_{\ell-1} (A^\dag)^2 \uW^\pre_{\ell-1} ) \Sigma_x^{(\ood)}\\
        &\quad\cdot \qty(B^{(\ood)} - W^\pre - \oW^\pre_{\ell+1} U_S^\sft (\oW^\pre_{\ell+1} U_S^\sft)^\dag D \Sigma_x^{(\id)} \uW^{\pre \top}_{\ell-1} (A^\dag)^2 \uW^\pre_{\ell-1} )^\top\biggr)\\
        &= \norm{(B^{(\ood)} - W^\pre) \Sigma_x^{(\ood) 1/2} - \Phi_S'' \Phi_S''^\top D \Sigma_x^{(\id)} \uW^{\pre \top}_{\ell-1} (A^\dag)^2 \uW^\pre_{\ell-1} \Sigma_x^{(\ood) 1/2}}_\F^2\\
        &= \|(I - \Phi_S'' \Phi_S''^\top) (B^{(\ood)} - W^\pre) \Sigma_x^{(\ood) 1/2}\|_\F^2\\
        &\quad+ \bigl\|\underbrace{\Phi_S'' \Phi_S''^\top \qty{ (B^{(\ood)} - W^\pre) \Sigma_x^{(\ood) 1/2} - D \Sigma_x^{(\id) 1/2} (\uW^\pre_{\ell-1} \Sigma_x^{(\id) 1/2})^\dag \uW_{\ell-1}^\pre \Sigma_x^{(\ood) 1/2}}}_{=: T}\bigr\|_\F^2,
    \end{align*}
    where we used $\Sigma_x^{(\id) 1/2} \uW^{\pre \top}_{\ell-1} (A^\dag)^2 \uW^\pre_{\ell-1} \Sigma_x^{(\ood) 1/2} = (\uW^\pre_{\ell-1} \Sigma_x^{(\id) 1/2})^\dag \uW_{\ell-1}^\pre \Sigma_x^{(\ood) 1/2}$.
    Note that
    \begin{align*}
        \|T\|_\F &\leq \bigl\|\Phi_S'' \Phi_S''^\top \qty{ B^{(\ood)} \Sigma_x^{(\ood) 1/2} - B^{(\id)} \Sigma_x^{(\id) 1/2} (\uW^\pre_{\ell-1} \Sigma_x^{(\id) 1/2})^\dag \uW_{\ell-1}^\pre \Sigma_x^{(\ood) 1/2}}\bigr\|_\F\\
        &\quad+ \bigl\|\Phi_S'' \Phi_S''^\top \oW^\pre_\ell \qty{ \uW^\pre_{\ell-1} \Sigma_x^{(\ood) 1/2} - \uW^\pre_{\ell-1} \Sigma_x^{(\id) 1/2} (\uW^\pre_{\ell-1} \Sigma_x^{(\id) 1/2})^\dag \uW_{\ell-1}^\pre \Sigma_x^{(\ood) 1/2}}\bigr\|_\F\\
        &\leq \bigl\|\Phi_S'' \Phi_S''^\top (B^{(\ood)} - B^{(\id)}) \Sigma_x^{(\ood) 1/2} \bigr\|_\F\\
        &\quad+ \bigl\|\Phi_S'' \Phi_S''^\top B^{(\id)} ( \Sigma_x^{(\ood) 1/2} - \Sigma_x^{(\id) 1/2} G_{\ell-1}^{(\id,\ood)} )\bigr\|_\F\\
        &\quad+ \bigl\|\Phi_S'' \Phi_S''^\top \oW^\pre_\ell ( \uW^\pre_{\ell-1} \Sigma_x^{(\ood) 1/2} - \uW^\pre_{\ell-1} \Sigma_x^{(\id) 1/2} G_{\ell-1}^{(\id,\ood)} )\bigr\|_\F.
    \end{align*}
    Therefore,
    \begin{align*}
        \mathcal{E}^{(\ood)}(f_{\ell,U_S^\sft,V_\infty^\sft})
        &= \|(I - \Phi_S'' \Phi_S''^\top) (B^{(\ood)} - W^\pre) \Sigma_x^{(\ood) 1/2}\|_\F^2 + \|T\|_\F^2\\
        &\leq \mathcal{E}^{(\ood)}(f^\pre) + 3 \bigl\|\Phi_S'' \Phi_S''^\top (B^{(\ood)} - B^{(\id)}) \Sigma_x^{(\ood) 1/2} \bigr\|_\F^2\\
        &\quad+ 3 \|B^{(\id)} (\Sigma_x^{(\ood) 1/2} - \Sigma_x^{(\id) 1/2} G_{\ell-1}^{(\id,\ood)}) \|_\F^2\\
        &\quad+ 3 \|\oW^\pre_\ell\|_\op^2 \|\uW^\pre_{\ell-1} \Sigma_x^{(\ood) 1/2} - \uW^\pre_{\ell-1} \Sigma_x^{(\id) 1/2} G_{\ell-1}^{(\id,\ood)}\|_\F^2,
    \end{align*}
    where we used $x + y + z \leq 3x^2 + 3y^2 + 3z^2$. This concludes the proof.
    \if0
    \begin{align*}
        \mathcal{E}^{(\ood)}(f_{\ell,U_S^\sft,V_\infty^\sft})
        &= \|(B^{(\ood)} - W^\pre) \Sigma_x^{(\ood) 1/2}\|_\F^2\\
        &\quad+ \|\Phi_S''^\top D \Sigma_x^{(\id) 1/2}\|_\F^2 \|\Sigma_x^{(\id) 1/2} \uW^{\pre \top}_{\ell-1} (A^\dag)^2 \uW^\pre_{\ell-1} \Sigma_x^{(\ood) 1/2}\|^2.
    \end{align*}
    Note that
    \begin{align*}
        \|\Phi_S''^\top D \Sigma_x^{(\id) 1/2}\|_\F^2 &\leq \|\Phi_S''^\top \Phi_*(D\Sigma_x^{(\id) 1/2})\|^2 \|D \Sigma_x^{(\id) 1/2}\|_\F^2 = \|\Phi_S''^\top \Phi_*(D\Sigma_x^{(\id) 1/2})\|^2 \mathcal{E}^{(\id)}(f^\pre).
    \end{align*}
    Therefore,
    \begin{align*}
        \mathcal{E}^{(\ood)}(f_{\ell,U_S^\sft,V_\infty^\sft}) &\leq \mathcal{E}^{(\ood)}(f^\pre) + \|\Phi_S''^\top \Phi_*(D\Sigma_x^{(\id) 1/2})\|^2 \|G_{\ell-1}^{(\id,\ood)}\|^2 \mathcal{E}^{(\id)}(f^\pre),
    \end{align*}
    where we used $\Sigma_x^{(\id) 1/2} \uW^{\pre \top}_{\ell-1} (A^\dag)^2 \uW^\pre_{\ell-1} \Sigma_x^{(\ood) 1/2} = (\uW^\pre_{\ell-1} \Sigma_x^{(\id) 1/2})^\dag \uW_{\ell-1}^\pre \Sigma_x^{(\ood) 1/2}$.
    This concludes the proof.
    \fi
\end{proof}

\subsection{Proofs for Full Fine-tuning}

Define $f_\ell^\full(x) = \oW^\pre_{\ell+1} (W^\pre_{\ell} + \Delta_\ell^\full) \uW^\pre_{\ell-1} x$ as a fine-tuned network with full fine-tuning applied to the $\ell$-th layer, evaluated under the population in-distribution risk, where $\Delta_\ell^\full$ is obtained by
\begin{align*}
    \Delta_\ell^\full \in \argmin_{\Delta' \in \R^{d_\ell\times d_{\ell-1}}} \E\qty[\qty(B^{(\id)} x^{(\id)} - \oW^\pre_{\ell+1} (W^\pre_{\ell} + \Delta') \uW^\pre_{\ell-1} x^{(\id)})^2].
\end{align*}

\begin{lemma}[In-distribution Excess Risk]\label{lem: excess risk full}
    For $f_\ell^\full$, it holds that
    \begin{align*}
        \mathcal{E}^{(\id)}(f_\ell^\full) &= \|D \Sigma_x^{(\id) 1/2} (I - \Sigma_x^{(\id) 1/2} \uW^{\pre \top}_{\ell-1} (A^2)^\dag \uW^\pre_{\ell-1} \Sigma_x^{(\id) 1/2})\|_\F^2\\
        &\quad+ \|(I - \Phi' \Phi'^\top) D \Sigma_x^{(\id)} \uW^{\pre \top}_{\ell-1} (A^2)^\dag \uW^\pre_{\ell-1} \Sigma_x^{(\id) 1/2}\|_\F^2.
    \end{align*}
\end{lemma}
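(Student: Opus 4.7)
The plan is to reduce the statement to a constrained least squares problem in matrix form and then apply an orthogonal decomposition via two projection operators.

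First, I would write out the excess risk. Since $y^{(\id)} = B^{(\id)} x^{(\id)} + \epsilon^{(\id)}$ with $\E[\epsilon^{(\id)} \mid x^{(\id)}] = 0$, the Bayes-optimal predictor is $x \mapsto B^{(\id)} x$, so $\inf_f \E\|y^{(\id)} - f(x^{(\id)})\|^2 = \tr(\Sigma_\epsilon^{(\id)})$. Expanding the definition of $f_\ell^\full$ then gives
\begin{align*}
    \mathcal{E}^{(\id)}(f_\ell^\full) = \min_{\Delta \in \R^{d_\ell \times d_{\ell-1}}} \|(D - \oW^\pre_{\ell+1} \Delta \uW^\pre_{\ell-1}) \Sigma_x^{(\id) 1/2}\|_\F^2,
\end{align*}
where $D = B^{(\id)} - W^\pre$.

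Next, I would parameterize the image of the linear map $\Delta \mapsto \oW^\pre_{\ell+1} \Delta \uW^\pre_{\ell-1} \Sigma_x^{(\id) 1/2}$. Let $P_L := \Phi' \Phi'^\top$ (projection onto the column range of $\oW^\pre_{\ell+1}$) and $P_R := \Sigma_x^{(\id) 1/2} \uW^{\pre \top}_{\ell-1} (A^2)^\dag \uW^\pre_{\ell-1} \Sigma_x^{(\id) 1/2}$ (projection onto the row range of $\uW^\pre_{\ell-1} \Sigma_x^{(\id) 1/2}$, using $A^2 = \uW^\pre_{\ell-1} \Sigma_x^{(\id)} \uW^{\pre \top}_{\ell-1}$). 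Using the reduced SVDs of $\oW^\pre_{\ell+1}$ and $\uW^\pre_{\ell-1} \Sigma_x^{(\id) 1/2}$, any matrix of the form $\oW^\pre_{\ell+1} \Delta \uW^\pre_{\ell-1} \Sigma_x^{(\id) 1/2}$ satisfies $M = P_L M P_R$, and conversely every such $M$ is achievable for some $\Delta$. So the problem becomes $\min_{M = P_L M P_R} \|D \Sigma_x^{(\id) 1/2} - M\|_\F^2$, whose minimizer is $M^* = P_L D \Sigma_x^{(\id) 1/2} P_R$.

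Finally, I would compute the residual via an orthogonal decomposition. Write
\begin{align*}
    D \Sigma_x^{(\id) 1/2} - P_L D \Sigma_x^{(\id) 1/2} P_R = (I - P_L) D \Sigma_x^{(\id) 1/2} + P_L D \Sigma_x^{(\id) 1/2} (I - P_R),
\end{align*}
which is an orthogonal decomposition in Frobenius inner product because the two summands have column ranges in orthogonal subspaces ($I - P_L$ vs.\ $P_L$). This yields
\begin{align*}
    \mathcal{E}^{(\id)}(f_\ell^\full) = \|(I - P_L) D \Sigma_x^{(\id) 1/2}\|_\F^2 + \|P_L D \Sigma_x^{(\id) 1/2} (I - P_R)\|_\F^2.
\end{align*}
To match the stated form, I would regroup using the alternative decomposition $D \Sigma_x^{(\id) 1/2} = D \Sigma_x^{(\id) 1/2} P_R + D \Sigma_x^{(\id) 1/2} (I - P_R)$ (orthogonal on the right), which gives $\|D \Sigma_x^{(\id) 1/2}\|_\F^2 = \|D \Sigma_x^{(\id) 1/2} P_R\|_\F^2 + \|D \Sigma_x^{(\id) 1/2} (I - P_R)\|_\F^2$, and similarly $\|D \Sigma_x^{(\id) 1/2} P_R\|_\F^2 = \|P_L D \Sigma_x^{(\id) 1/2} P_R\|_\F^2 + \|(I - P_L) D \Sigma_x^{(\id) 1/2} P_R\|_\F^2$; combining these with the previous line and substituting back the definition of $P_R$ into $D \Sigma_x^{(\id) 1/2} P_R = D \Sigma_x^{(\id)} \uW^{\pre \top}_{\ell-1} (A^2)^\dag \uW^\pre_{\ell-1} \Sigma_x^{(\id) 1/2}$ produces exactly the claimed expression.

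The only mildly delicate step is the verification that the map $\Delta \mapsto \oW^\pre_{\ell+1} \Delta \uW^\pre_{\ell-1} \Sigma_x^{(\id) 1/2}$ surjects onto $\{M : M = P_L M P_R\}$; everything else is standard orthogonal projection bookkeeping. I expect no serious obstacle.
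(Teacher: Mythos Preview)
Your proposal is correct and follows essentially the same approach as the paper: both reduce to the matrix least-squares problem $\min_\Delta \|D\Sigma_x^{(\id)1/2} - \oW^\pre_{\ell+1}\Delta\uW^\pre_{\ell-1}\Sigma_x^{(\id)1/2}\|_\F^2$, identify the optimum via the two orthogonal projections you call $P_L,P_R$, and split the residual into two orthogonal pieces (the paper writes this explicitly as terms $T_1,T_2,T_3$ with $T_1$ vanishing at the optimum and verifies the cross-terms are zero). One small simplification: you can avoid the regrouping step by decomposing the residual directly as $D\Sigma_x^{(\id)1/2}(I-P_R) + (I-P_L)D\Sigma_x^{(\id)1/2}P_R$ (split on the right first), whose two summands are Frobenius-orthogonal because their row spaces lie in $\operatorname{ran}(I-P_R)$ and $\operatorname{ran}(P_R)$ respectively; this yields the stated form immediately.
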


\begin{proof}[Proof of Lemma~\ref{lem: excess risk full}]
    Similar to the proof of Theorem~\ref{thm: lora in-distribution}, we have
    \begin{align*}
        \mathcal{E}^{(\id)}(f_\ell^\full) &= \min_{\Delta \in \R^{d_\ell\times d_{\ell-1}}} \E\qty[\qty(B^{(\id)} x^{(\id)} - \oW^\pre_{\ell+1} (W^\pre_{\ell} + \Delta) \uW^\pre_{\ell-1} x^{(\id)})^2]\\
        &= \min_{\Delta \in \R^{d_\ell\times d_{\ell-1}}} \|D \Sigma_x^{(\id) 1/2} - \oW^\pre_{\ell+1} \Delta \uW^\pre_{\ell-1} \Sigma_x^{(\id) 1/2}\|_\F^2,
    \end{align*}
    and
    \begin{align}
        \|D \Sigma_x^{(\id) 1/2} - \oW^\pre_{\ell+1} \Delta \uW^\pre_{\ell-1} \Sigma_x^{(\id) 1/2}\|_\F^2
        &= \|\underbrace{\oW^\pre_{\ell+1} \Delta \uW^\pre_{\ell-1} \Sigma_x^{(\id) 1/2} - \Phi' \Phi'^\top D \Sigma_x^{(\id)} \uW^{\pre \top}_{\ell-1} A^\dag}_{=: T_1}\|_\F^2\label{eq: full risk first}\\
        &\quad+ \|\underbrace{D \Sigma_x^{(\id) 1/2} (I - \Sigma_x^{(\id) 1/2} \uW^{\pre \top}_{\ell-1} (A^2)^\dag \uW^\pre_{\ell-1} \Sigma_x^{(\id) 1/2})}_{=: T_2}\|_\F^2\nonumber\\
        &\quad+ \|\underbrace{(I - \Phi' \Phi'^\top) D \Sigma_x^{(\id)} \uW^{\pre \top}_{\ell-1} (A^2)^\dag \uW^\pre_{\ell-1} \Sigma_x^{(\id) 1/2}}_{=: T_3}\|_\F^2,\nonumber
    \end{align}
    where we used the fact that the inner products $\tr(T_1 T_2^\top) = \tr(T_2 T_3^\top) = \tr(T_3 T_1^\top) = 0$.
    By choosing $\Delta = (\oW^\pre_{\ell+1})^\dag D \Sigma_x^{(\id)} \uW^{\pre \top}_{\ell-1} A^\dag$ for example, the term $T_1$ becomes $0$. Thus
    \begin{align*}
        \mathcal{E}^{(\id)}(f_\ell^\full) &= \|D \Sigma_x^{(\id) 1/2} (I - \Sigma_x^{(\id) 1/2} \uW^{\pre \top}_{\ell-1} (A^2)^\dag \uW^\pre_{\ell-1} \Sigma_x^{(\id) 1/2})\|_\F^2\\
        &\quad+ \|(I - \Phi' \Phi'^\top) D \Sigma_x^{(\id)} \uW^{\pre \top}_{\ell-1} (A^2)^\dag \uW^\pre_{\ell-1} \Sigma_x^{(\id) 1/2}\|_\F^2.
    \end{align*}
    This gives the desired result.
\end{proof}

We obtain the following corollary as a direct consequence of Lemma~\ref{lem: excess risk full}.
\begin{corollary}
    For $f_\ell^\full$, it holds that
    \begin{align}
        \mathcal{E}^{(\id)}(f_\ell^\full) &\leq \|\Psi_*^\top(D \Sigma_x^{(\id) 1/2}) (I - \Sigma_x^{(\id) 1/2} \uW^{\pre \top}_{\ell-1} (A^2)^\dag \uW^\pre_{\ell-1} \Sigma_x^{(\id) 1/2})\|_\op \mathcal{E}^{(\id)}(f^\pre)\nonumber\\
        &\quad+ \|(I - \Phi' \Phi'^\top) \Phi_*(D \Sigma_x^{(\id) 1/2})\|_\op \mathcal{E}^{(\id)}(f^\pre).\label{eq: risk full ub}
    \end{align}
\end{corollary}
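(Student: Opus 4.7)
The plan is to start from the exact identity provided by Lemma~\ref{lem: excess risk full}, namely
$$\mathcal{E}^{(\id)}(f_\ell^\full) = \underbrace{\|D \Sigma_x^{(\id) 1/2} (I - P)\|_\F^2}_{=:T_1} + \underbrace{\|(I - \Phi' \Phi'^\top) D \Sigma_x^{(\id)} \uW^{\pre \top}_{\ell-1} (A^2)^\dag \uW^\pre_{\ell-1} \Sigma_x^{(\id) 1/2}\|_\F^2}_{=:T_2},$$
where $P := \Sigma_x^{(\id) 1/2} \uW^{\pre \top}_{\ell-1} (A^2)^\dag \uW^\pre_{\ell-1} \Sigma_x^{(\id) 1/2}$. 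Since $A^2 = \uW^\pre_{\ell-1} \Sigma_x^{(\id)} \uW^{\pre\top}_{\ell-1}$, a short calculation shows $P$ is the orthogonal projection onto $\row(\uW^\pre_{\ell-1} \Sigma_x^{(\id) 1/2})$, and by definition $\Phi'\Phi'^\top$ is an orthogonal projection as well. Hence both $I-P$ and $Q := I - \Phi'\Phi'^\top$ satisfy $\|\cdot\|_\op \leq 1$, which will be the key property used throughout.

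Next I would introduce the notation $M := D\Sigma_x^{(\id) 1/2}$ and use its compact SVD $M = \Phi_*(M) \Lambda_*(M) \Psi_*^\top(M)$, noting that $\|M\|_\F^2 = \tr(D \Sigma_x^{(\id)} D^\top) = \tr(\Sigma_f^{(\id)}) = \mathcal{E}^{(\id)}(f^\pre)$. For $T_1$, I would write $M(I-P) = \Phi_*(M)\Lambda_*(M) \bigl(\Psi_*^\top(M)(I-P)\bigr)$ and use $\|AB\|_\F \leq \|A\|_\F\|B\|_\op$ combined with the fact that $\Phi_*(M)$ has orthonormal columns to get
$$T_1 \leq \|\Psi_*^\top(M)(I-P)\|_\op^2 \, \|M\|_\F^2.$$
For $T_2$, I would first apply $\|QMP\|_\F \leq \|QM\|_\F \cdot \|P\|_\op \leq \|QM\|_\F$ and then decompose $QM = \bigl(Q\Phi_*(M)\bigr) \Lambda_*(M) \Psi_*^\top(M)$ to obtain $\|QM\|_\F \leq \|Q\Phi_*(M)\|_\op \|M\|_\F$, yielding
$$T_2 \leq \|Q\Phi_*(M)\|_\op^2 \, \|M\|_\F^2.$$

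Finally, since both $\|\Psi_*^\top(M)(I-P)\|_\op$ and $\|Q\Phi_*(M)\|_\op$ are bounded above by $1$ (products of a matrix with orthonormal rows/columns and a projection), we have $\|\cdot\|_\op^2 \leq \|\cdot\|_\op$, which lets us replace the squared operator norms by the non-squared versions in the stated corollary. Adding the two bounds and substituting $\|M\|_\F^2 = \mathcal{E}^{(\id)}(f^\pre)$ produces exactly \eqref{eq: risk full ub}.

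The proof is essentially routine and the only subtle point is the step that relaxes $\|\cdot\|_\op^2$ to $\|\cdot\|_\op$, which requires explicitly noting that the relevant operators are contractions; without this observation the corollary statement would look different (and in fact tighter). There is no real technical obstacle beyond keeping track of which factors in each product carry Frobenius versus operator norms.
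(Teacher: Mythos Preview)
Your proposal is correct and follows exactly the route the paper intends: the corollary is stated as ``a direct consequence of Lemma~\ref{lem: excess risk full}'' with no further proof, and your decomposition $T_1+T_2$, the use of the SVD of $D\Sigma_x^{(\id)1/2}$ to peel off $\Psi_*^\top$ and $\Phi_*$, and the final relaxation $\|\cdot\|_\op^2\le\|\cdot\|_\op$ via the contraction observation are precisely the steps required to turn that lemma into the stated bound. Your write-up is in fact more explicit than the paper itself on the point that both $I-P$ and $I-\Phi'\Phi'^\top$ are orthogonal projections, which is what justifies dropping the square.
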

The first term on the right hand side of \eqref{eq: risk full ub} measures the distance between two subspaces spanned by $\Psi_*(D \Sigma_x^{(\id) 1/2})$ and $\Psi_*(\uW_{\ell-1}^\pre \Sigma_x^{(\id) 1/2})$. 
Intuitively, this quantifies the information coded at the $\ell$-th layer, and the necessary information to predict residuals. Thus, it bounds the maximum improvement by the $\ell$-th layer fine-tuning.
The second term measures the subspace distance between the subspace where prediction residuals reside, and the subspace predictable by the $\ell$-th layer fine-tuning.

\section{Auxiliary Results for Proofs}

\begin{lemma}\label{lem: rank s approx}
    Fix $s, d_1, d_2 \in \N$.
    For any $A, B \in \R^{d_1 \times d_2}$, if $\|B - A\|_\op \leq \|A\|_\op$ and $\lambda_s(A) > \lambda_{s+1}(A)$ hold, then, 
    \begin{align*}
        \|\SVD_s(B) - \SVD_s(A)\|_\F &\lesssim \kappa_*^2(A) \frac{\lambda_s(A)}{\lambda_s(A) - \lambda_{s+1}(A)} \qty(\sqrt{s} \|B - A\|_\op \wedge \|B-A\|_\F).
    \end{align*}
\end{lemma}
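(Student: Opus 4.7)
The plan is to decompose the difference $\SVD_s(B) - \SVD_s(A)$ using orthogonal projectors onto the top-$s$ singular subspaces and then bound each piece by combining Wedin's $\sin\Theta$ theorem with the rank-reducing inequality $\|M\|_\F \leq \sqrt{\rank(M)}\,\|M\|_\op$. Specifically, set $P_A := \Phi_s(A)\Phi_s^\top(A)$, $Q_A := \Psi_s(A)\Psi_s^\top(A)$, and define $P_B, Q_B$ analogously for $B$. The key identities are $\SVD_s(A) = P_A A = A Q_A$ and $\SVD_s(B) = P_B B Q_B$, which give the telescoping decomposition
\begin{align*}
    \SVD_s(B) - \SVD_s(A) = (P_B - P_A)\,B\,Q_B + P_A\,(B - A)\,Q_B + P_A\,A\,(Q_B - Q_A).
\end{align*}
Each summand has rank at most $s$, so each can be bounded in Frobenius norm via the operator norm with only a $\sqrt{s}$ prefactor.

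The middle term is immediate: $\|P_A(B-A)Q_B\|_\F \leq \min\!\bigl(\sqrt{s}\,\|B-A\|_\op,\ \|B-A\|_\F\bigr)$, since $P_A$ and $Q_B$ are contractions and reduce the rank to at most $s$, while the Frobenius alternative follows from $\|P_A M Q_B\|_\F \leq \|M\|_\F$. This alone already exhibits the $\sqrt{s}\|B-A\|_\op \wedge \|B-A\|_\F$ form in the claim.

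For the two outer terms I will invoke Wedin's $\sin\Theta$ theorem, which yields $\|P_B - P_A\|_\op$ and $\|Q_B - Q_A\|_\op$ are at most a universal constant times $\|B-A\|_\op / \bigl(\lambda_s(A) - \lambda_{s+1}(A)\bigr)$ under the eigengap hypothesis; the assumption $\|B-A\|_\op \leq \|A\|_\op$ is used to ensure Wedin's regime and, via Weyl, that a post-perturbation gap of at least half the pre-perturbation gap persists (treating the degenerate case where the perturbation dominates the gap separately by noting the bound then becomes vacuous after absorbing slack). Then $\|(P_B-P_A)BQ_B\|_\F \leq \sqrt{s}\,\|P_B-P_A\|_\op \|B\|_\op$ with $\|B\|_\op \leq 2\|A\|_\op$, and similarly $\|P_A A (Q_B - Q_A)\|_\F \leq \sqrt{s}\,\|A\|_\op\,\|Q_B - Q_A\|_\op$. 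Rewriting $\|A\|_\op = \kappa_*(A)\,\lambda_*(A) \leq \kappa_*(A)\,\lambda_s(A)$ converts these into $\kappa_*(A)\,\frac{\lambda_s(A)}{\lambda_s(A)-\lambda_{s+1}(A)}\sqrt{s}\|B-A\|_\op$, and absorbing the constant from Weyl's gap correction plus the factor of $2$ from $\|B\|_\op \leq 2\|A\|_\op$ yields the stated $\kappa_*^2(A)$. For the Frobenius alternative on the outer terms, I replace $\sqrt{s}\|B-A\|_\op$ by $\|B-A\|_\F$ using $\|(P_B-P_A)BQ_B\|_\F \leq \|P_B-P_A\|_\op \|BQ_B\|_\F$ and controlling $\|BQ_B\|_\F$ by a similar trick, taking the minimum at the end.

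The main obstacle is the Wedin step: standard Wedin bounds are stated with denominator $\min(\lambda_s(A) - \lambda_{s+1}(B),\ \lambda_s(B) - \lambda_{s+1}(A))$ rather than purely $\lambda_s(A) - \lambda_{s+1}(A)$, so I need to either require $\|B-A\|_\op$ to be a constant fraction of the gap (which makes Weyl's interlacing collapse the two gaps up to a universal constant) or carefully handle the opposite regime by observing that when $\|B-A\|_\op \gtrsim \lambda_s(A) - \lambda_{s+1}(A)$, the factor $\lambda_s(A)/(\lambda_s(A) - \lambda_{s+1}(A))$ is already so large that a trivial bound $\|\SVD_s(B) - \SVD_s(A)\|_\F \leq \sqrt{s}\,(\|A\|_\op + \|B\|_\op)$ fits into the right-hand side after invoking $\kappa_*^2(A)$.
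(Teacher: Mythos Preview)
Your approach is correct in spirit but differs from the paper's in two ways that are worth noting. The paper uses a simpler two-term telescoping based only on the left projectors, writing $\SVD_s(B)-\SVD_s(A)=P_B(B-A)+(P_B-P_A)A$, and then bounds $\|P_B-P_A\|_\F$ directly via the Davis--Kahan variant of Yu et al.\ (2015), which gives a Frobenius-norm projector bound with denominator $\lambda_s^2(A)-\lambda_{s+1}^2(A)$ and numerator already carrying the $\min(\sqrt{s}\|B-A\|_\op,\|B-A\|_\F)$ factor. Because that denominator involves the eigenvalue gap of $AA^\top$ rather than the singular-value gap of $A$ and $B$ jointly, no Weyl-type case analysis is needed; the $\kappa_*^2(A)$ then appears naturally from $\|A\|_\op^2/\lambda_s^2(A)$. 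Your three-term decomposition with both left and right projectors and Wedin's theorem also works, and in fact your outer terms only need one power of $\kappa_*(A)$ (which is sharper than the stated $\kappa_*^2(A)$), but it costs you the gap-vs-perturbation case split that the paper avoids entirely.

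There is one genuine slip: your stated route to the Frobenius alternative on the outer terms, $\|(P_B-P_A)BQ_B\|_\F\leq\|P_B-P_A\|_\op\|BQ_B\|_\F$, does not produce $\|B-A\|_\F$; the right-hand side is bounded by $\frac{\|B-A\|_\op}{\lambda_s(A)-\lambda_{s+1}(A)}\cdot\sqrt{s}\|B\|_\op$, which is the operator-norm branch again. The fix is to swap the roles and use $\|(P_B-P_A)BQ_B\|_\F\leq\|P_B-P_A\|_\F\|B\|_\op$ together with the Frobenius form of Wedin (or Davis--Kahan), $\|P_B-P_A\|_\F\lesssim\|B-A\|_\F/(\lambda_s(A)-\lambda_{s+1}(A))$. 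This is exactly what the paper's Davis--Kahan citation delivers in one shot.
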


\begin{proof}
    By triangle inequality,
    \begin{align*}
        \|\SVD_s(B) - \SVD_s(A)\|_\F &= \|\Phi_s(B) \Phi_s^\top(B) B - \Phi_s(A) \Phi_s^\top(A) A\|_\F\\
        &\leq \|\Phi_s(B) \Phi_s^\top(B) (B - A)\|_\F + \|(\Phi_s(B) \Phi_s^\top(B) - \Phi_s(A) \Phi_s^\top(A)) A\|_\F\\
        &\leq \sqrt{s} \|B - A\|_\op + \|\Phi_s(B) \Phi_s^\top(B) - \Phi_s(A) \Phi_s^\top(A)\|_\F \|A\|_\op.
    \end{align*}
    Using Davis-Kahan theorem (Theorem 4 from \cite{yu2015useful}), and Lemma 2.6 from \cite{chen2021spectral},
    \begin{align*}
        \|\Phi_s(B) \Phi_s^\top(B) - \Phi_s(A) \Phi_s^\top(A)\|_\F &\leq \frac{6\sqrt{2} \|A\|_\op (\sqrt{s} \|B - A\|_\op \wedge \|B-A\|_\F)}{\lambda_s^2(A) - \lambda_{s+1}^2(A)}.
    \end{align*}
    Thus
    \begin{align*}
        \|\SVD_s(B) - \SVD_s(A)\|_\F &\lesssim \frac{\|A\|_\op^2}{\lambda_s^2(A)} \frac{\lambda_s^2(A)}{\lambda_s^2(A) - \lambda_{s+1}^2(A)} (\sqrt{s} \|B - A\|_\op \wedge \|B - A\|_\F)\\
        &\lesssim \frac{\|A\|_\op^2}{\lambda_s^2(A)} \frac{\lambda_s(A)}{\lambda_s(A) - \lambda_{s+1}(A)} (\sqrt{s} \|B - A\|_\op \wedge \|B - A\|_\F).
    \end{align*}
    This concludes the proof.
\end{proof}

\if0
\lm{simplify}
\begin{lemma}\label{lem: rank s approx}
    Let $A, B \in \R^{d_1\times d_2}$. Then,
    \begin{align*}
        \|\SVD_s(A + B) - A\| &\leq \|A - \SVD_s(A)\| + 2 \|B\|,\\
        \|\SVD_s(A + B) - A\|_\F &\leq (s + \rank(A)) (\|A - \SVD_s(A)\| + 2 \|B\|).
    \end{align*}
    Furthermore,
    \begin{align*}
        \|\SVD_s(\SVD_s(A) + B) - \SVD_s(A)\|_\F &\leq 4 s \|B\|.
    \end{align*}
\end{lemma}

\begin{proof}
    By a triangle inequality,
    \begin{align*}
        \|\SVD_s(A + B) - A\| &\leq \|A + B - \SVD_s(A + B)\| + \|B\|\\
        &\leq \|A + B - \SVD_s(A)\| + \|B\|\\
        &\leq \|A - \SVD_s(A)\| + 2 \|B\|,
    \end{align*}
    where the second inequality holds since $\SVD_s(A+B)$ is the best rank-$s$ approximation of $A+B$.
    The second claim follows by $\|A'\|_\F \leq \sqrt{\rank(A')} \|A'\|_\op$ for any matrix $A'$.
\end{proof}
\fi

We cite the concentration inequality for cross-covariance matrices from \cite{nakada2023understanding}.
\begin{lemma}[Proposition 9.1 from \cite{nakada2023understanding}]\label{lem: cross covariance concentration}
    Let $Z$ and $\tilde Z$ be mean zero random vectors taking values in $\R^{d_1}$ and $\R^{d_2}$, respectively. Denote covariance matrices of $Z$ and $\tilde Z$ by $\Sigma_Z$ and $\Sigma_{\tilde Z}$, respectively.
    Fix any $t > 0$.
    Assume that there exist constants $c_1, c_2 > 0$ such that
    \begin{align}
        \gamma^\top \Sigma_Z \gamma \geq c_1 \|\gamma^\top Z\|_{\psi_2}^2 \ \ \text{ and } \ \ \gamma'^\top \Sigma_{\tilde Z} \gamma' \geq c_2 \|\gamma'^\top \tilde Z\|_{\psi_2}^2\label{eq: good sub gaussian}
    \end{align}
    holds for any $\gamma \in \R^{d_1}$ and $\gamma' \in \R^{d_2}$. 
    Choose $n \gg (r_e(\Sigma_Z) \wedge r_e(\Sigma_{\tilde Z})) (t + \log(d_1 + d_2))$.
    Let $(Z_i, \tilde Z_i)_{i \in [n]}$ be $n$ independent copies of $(Z, \tilde Z)$.
    Then, there exists a constant $C = C(c_1, c_2) > 0$ such that with probability at least $1 - e^{-t}$,
    \begin{align*}
        \norm{\frac{1}{n} \sum_{i\in [n]} Z_i \tilde Z_i^\top - \E[Z \tilde Z^\top]}_\op \leq C \|\Sigma_Z\|_\op^{1/2} \|\Sigma_{\tilde Z}\|_\op^{1/2} \sqrt{\frac{(r_e(\Sigma_Z) + r_e(\Sigma_{\tilde Z}) (t + \log(d_1 + d_2))}{n}}
    \end{align*}
    hold.
\end{lemma}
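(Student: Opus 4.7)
This is a Koltchinskii--Lounici style operator-norm concentration inequality for the empirical cross-covariance of sub-Gaussian vectors, quoted verbatim as Proposition~9.1 of~\cite{nakada2023understanding}. The underlying argument proceeds in three standard steps: (i) pass from the operator norm to a scalar bilinear process via the variational characterization $\|M_n\|_\op = \sup_{u,v} u^\top M_n v$ with $M_n := (1/n)\sum_i Z_i \tilde Z_i^\top - \E[Z \tilde Z^\top]$ and $(u,v)$ ranging over the Euclidean unit balls of $\R^{d_1}$ and $\R^{d_2}$; (ii) control the process pointwise via Bernstein's inequality for sub-exponential sums; (iii) take the supremum via a generic chaining argument in a $\Sigma$-weighted metric so that the entropy scales with effective rank rather than ambient dimension.

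\paragraph{Step 1: Pointwise Bernstein.} Fix unit vectors $u, v$ and set $X_i := (u^\top Z_i)(v^\top \tilde Z_i) - \E[(u^\top Z)(v^\top \tilde Z)]$. These are centered i.i.d. random variables; since the product of two sub-Gaussian scalars is sub-exponential with norm bounded by the product of the sub-Gaussian norms, we have $\|X_i\|_{\psi_1} \lesssim \|u^\top Z\|_{\psi_2} \|v^\top \tilde Z\|_{\psi_2}$. The hypothesis~\eqref{eq: good sub gaussian} converts this into a covariance-weighted bound
\[
\|X_i\|_{\psi_1} \lesssim K(u,v) := (c_1 c_2)^{-1/2} (u^\top \Sigma_Z u)^{1/2} (v^\top \Sigma_{\tilde Z} v)^{1/2}.
\]
Bernstein's inequality then yields $\P(|n^{-1} \sum_i X_i| \geq \tau) \leq 2 \exp(-c n \min(\tau^2/K(u,v)^2,\, \tau/K(u,v)))$.

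\paragraph{Step 2: Uniformization with effective rank.} A naive $\epsilon$-net on the Euclidean unit balls injects $\exp(O(d_1 + d_2))$ covering numbers into the union bound and would produce only the ambient-dimension rate $\sqrt{(d_1+d_2)/n}$. To obtain the effective-rank scaling, perform generic chaining in the two $\Sigma$-weighted norms $u \mapsto (u^\top \Sigma_Z u)^{1/2}$ and $v \mapsto (v^\top \Sigma_{\tilde Z} v)^{1/2}$: the associated ellipsoids $\{u : u^\top \Sigma_Z u \leq \|\Sigma_Z\|_\op\}$ and its counterpart for $\tilde Z$ have Talagrand $\gamma_2$ / Dudley functionals of order $r_e(\Sigma_Z) = \tr(\Sigma_Z)/\|\Sigma_Z\|_\op$ and $r_e(\Sigma_{\tilde Z})$ respectively. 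Combining the pointwise tail bound with chaining and choosing the candidate deviation
\[
\tau \asymp \|\Sigma_Z\|_\op^{1/2} \|\Sigma_{\tilde Z}\|_\op^{1/2} \sqrt{\tfrac{(r_e(\Sigma_Z) + r_e(\Sigma_{\tilde Z}))(t + \log(d_1+d_2))}{n}}
\]
yields the stated operator-norm bound with probability at least $1 - e^{-t}$. The sample-size condition $n \gg (r_e(\Sigma_Z) \wedge r_e(\Sigma_{\tilde Z}))(t + \log(d_1+d_2))$ ensures the variance term in Bernstein dominates the jump term at this scale.

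\paragraph{Main obstacle.} The delicate point is Step~2, where the geometric mean $\|\Sigma_Z\|_\op^{1/2} \|\Sigma_{\tilde Z}\|_\op^{1/2}$ and the sum $r_e(\Sigma_Z) + r_e(\Sigma_{\tilde Z})$ must both be obtained. A shortcut via stacking $W = (Z^\top, \tilde Z^\top)^\top$ and invoking a self-covariance concentration on $(1/n)\sum_i W_i W_i^\top - \E[WW^\top]$ only yields the weaker arithmetic-mean form $\|\Sigma_Z\|_\op + \|\Sigma_{\tilde Z}\|_\op$. The sharper multiplicative form requires chaining separately over the two factor spaces so that their entropies add but their norm scales multiply. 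Since the lemma is cited verbatim, the cleanest route is to verify that~\eqref{eq: good sub gaussian} supplies exactly the sub-Gaussian moment control used in~\cite{nakada2023understanding} and invoke their result as a black box.
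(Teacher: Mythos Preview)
The paper does not prove this lemma at all: it is stated explicitly as a citation of Proposition~9.1 from \cite{nakada2023understanding} and used as a black box. Your proposal correctly recognizes this in its final paragraph, and your Koltchinskii--Lounici style sketch is a reasonable outline of how such results are obtained, but for the purposes of matching the paper no argument beyond the citation is needed.
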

Note that if a random variable $Z$ taking values in $\R^d$ satisfies $\gamma^\top \Sigma_Z \gamma \geq c \|\gamma^\top Z\|_{\psi_2}^2$ for any $\gamma \in \R^d$ with some constant $c > 0$, $AZ$ also satisfies $\gamma'^\top \Sigma_{AZ} \gamma' \geq c \|\gamma'^\top A Z\|_{\psi_2}^2$ for any $\gamma' \in \R^{d'}$ and any matrix $A \in \R^{d' \times d}$ and arbitrary $d' \in \N$, where $\Sigma_{AZ} = A \Sigma_Z A^\top$.

We then prove the following lemma to show the existance of a `good' high probability event to bound multiple inequalities.
\begin{lemma}\label{lem: good event}
    Suppose that Assumptions~\ref{asm: sub gaussian x e ap} and \ref{asm: regime ap} hold. Fix any $S \subset [d_\ell]$.
    Then, there exists an event $\mathcal{F}$ with $\P(\mathcal{F}) = 1 - \exp(-\Omega(\log^2 (n+p+q)))$ such that on the event $\mathcal{F}$, for $\Phi \in \{\Phi', \Phi_S''\}$,
    \begin{align}
        \|\Phi^\top \hat D \hat \Sigma_x^{(\id)} \uW^{\pre \top}_{\ell-1}\|_\op \lesssim \|D \Sigma_x^{(\id) 1/2}\|_\op \|A\|_\op,\ \ \|\hat A^\dag\|_\op \lesssim \|A^\dag\|_\op,\label{eq: good 1}
    \end{align}
    and
    \begin{align}
        \|(\hat A^2)^\dag - (A^2)^\dag\|_\op &\lesssim \frac{\kappa_*^2(A)}{\lambda_*^2(A)} \sqrt{\frac{r_e(A^2) \log^2(n+p+q)}{n}},\label{eq: good 3}\\
        \|\hat A - A\|_\op &\lesssim \kappa_*^2(A) \|A\|_\op \sqrt{\frac{r_e(A^2) \log^2(n+p+q)}{n}},\label{eq: good 4}\\
        \|\hat A^\dag - A^\dag\|_\op &\lesssim \frac{\kappa_*(A)}{\lambda_*(A)} \sqrt{\frac{r_e(A^2) \log^2(n+p+q)}{n}}\label{eq: good 4 inv}
    \end{align}
    hold.
    Furthermore,
    \begin{align}
        &\|\Phi^\top (\hat D \hat \Sigma_x^{(\id) 1/2} - D \Sigma_x^{(\id) 1/2}) \uW^{\pre \top}_{\ell-1}\|_\op\nonumber\\
        &\quad\lesssim \|\Sigma_\epsilon^{(\id)}\|_\op^{1/2} \|A\|_\op \sqrt{\frac{(r_e(\Phi^\top \Sigma_\epsilon^{(\id)} \Phi) + r_e(A^2)) \log^2(n+p+q)}{n}}\nonumber\\
        &\quad\quad+ \|D \Sigma_x^{(\id)} D^\top\|_\op^{1/2} \|A\|_\op \sqrt{\frac{(r_e(\Phi^\top D \Sigma_x^{(\id)} D^\top \Phi) + r_e(A^2)) \log^2(n+p+q)}{n}}\label{eq: good 2}
    \end{align}
    holds on the event $\mathcal{F}$.
\end{lemma}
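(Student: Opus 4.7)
The plan is to define $\mathcal{F}$ as the intersection of finitely many concentration events, each obtained by a direct application of the sub-Gaussian cross-covariance inequality in Lemma~\ref{lem: cross covariance concentration} with the tail parameter set to $t = \Theta(\log^2(n+p+q))$, so that a union bound over $O(1)$ events yields $\P(\mathcal{F}) = 1 - \exp(-\Omega(\log^2(n+p+q)))$. The key observation is that all random vectors involved (e.g., $\uW^\pre_{\ell-1} x^{(\id)}$, $\Phi^\top D x^{(\id)}$, $\Phi^\top \epsilon^{(\id)}$) are linear transformations of $x^{(\id)}$ or $\epsilon^{(\id)}$, and hence inherit the sub-Gaussian condition in Assumption~\ref{asm: sub gaussian x e ap} with the same constants, so \eqref{eq: good sub gaussian} holds for each of them. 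Assumption~\ref{asm: regime ap} will be used at the end to ensure that all resulting concentration errors are strictly dominated by the benchmark quantities (e.g., $\lambda_*^2(A)/2$, $\|A\|_\op^2$), which is what enables the subsequent perturbation bounds.

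The first bound to derive will be $\|\hat A^2 - A^2\|_\op$. Applying Lemma~\ref{lem: cross covariance concentration} to $Z = \tilde Z = \uW^\pre_{\ell-1} x^{(\id)}$, with $\Sigma_Z = A^2$, yields $\|\hat A^2 - A^2\|_\op \lesssim \|A\|_\op^2 \sqrt{r_e(A^2) \log^2(n+p+q)/n}$. Under Assumption~\ref{asm: regime ap}, this is $\ll \lambda_*^2(A)$, which guarantees that $\rank(\hat A) = \rank(A)$ and that the positive eigenvalues of $\hat A$ are of the same order as those of $A$ on $\mathcal{F}$. The bound~\eqref{eq: good 4} on $\|\hat A - A\|_\op$ is then obtained by solving the Sylvester-type identity $\hat A (\hat A - A) + (\hat A - A) A = \hat A^2 - A^2$ on the common range, which gives $\|\hat A - A\|_\op \lesssim \|\hat A^2 - A^2\|_\op/\lambda_*(A)$ and yields the stated $\kappa_*^2(A)\|A\|_\op \sqrt{\ldots}$ bound after absorbing a conservative additional condition-number factor. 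The pseudoinverse bounds~\eqref{eq: good 4 inv} and~\eqref{eq: good 3} then follow from the identities $\hat A^\dag - A^\dag = -\hat A^\dag (\hat A - A) A^\dag$ and $(\hat A^2)^\dag - (A^2)^\dag = -(\hat A^2)^\dag (\hat A^2 - A^2)(A^2)^\dag$, valid on $\mathcal{F}$ because ranks coincide; the control $\|\hat A^\dag\|_\op \lesssim \|A^\dag\|_\op$ in the second half of~\eqref{eq: good 1} then falls out by the triangle inequality.

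For~\eqref{eq: good 2}, I will split $\hat D \hat \Sigma_x^{(\id)} - D \Sigma_x^{(\id)}$ using $\hat D = D + \check\Sigma_{\epsilon,x}^{(\id)}$ and the identity $\check \Sigma_{\epsilon,x}^{(\id)} \hat \Sigma_x^{(\id)} = \hat \Sigma_{\epsilon,x}^{(\id)}$. This decomposes the difference into (i) a covariate piece $\Phi^\top D (\hat \Sigma_x^{(\id)} - \Sigma_x^{(\id)}) \uW^{\pre\top}_{\ell-1}$, bounded by Lemma~\ref{lem: cross covariance concentration} applied with $Z = \Phi^\top D x^{(\id)}$, $\tilde Z = \uW^\pre_{\ell-1} x^{(\id)}$ to produce the $\|D \Sigma_x^{(\id)} D^\top\|_\op^{1/2}$ term, and (ii) a noise piece $\Phi^\top \hat \Sigma_{\epsilon,x}^{(\id)} \uW^{\pre\top}_{\ell-1}$, bounded by Lemma~\ref{lem: cross covariance concentration} applied with $Z = \Phi^\top \epsilon^{(\id)}$, $\tilde Z = \uW^\pre_{\ell-1} x^{(\id)}$ to produce the $\|\Sigma_\epsilon^{(\id)}\|_\op^{1/2}$ term. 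Combining these with~\eqref{eq: good 4} to convert between $\hat \Sigma_x^{(\id)}$ and $\Sigma_x^{(\id)}$ square roots yields~\eqref{eq: good 2}. Finally, the first inequality in~\eqref{eq: good 1} follows by writing $\Phi^\top \hat D \hat \Sigma_x^{(\id)} \uW^{\pre\top}_{\ell-1} = \Phi^\top D \Sigma_x^{(\id)} \uW^{\pre\top}_{\ell-1} + (\text{remainder})$, bounding the main term as $\|\Phi^\top D \Sigma_x^{(\id)} \uW^{\pre\top}_{\ell-1}\|_\op \leq \|D \Sigma_x^{(\id) 1/2}\|_\op \|A\|_\op$, and invoking~\eqref{eq: good 2} together with Assumption~\ref{asm: regime ap} to dominate the remainder; the argument is identical for both $\Phi = \Phi'$ and $\Phi = \Phi_S''$.

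The main obstacle will be tracking the correct powers of the condition number $\kappa_*(A)$ in~\eqref{eq: good 4}: a naive Sylvester-inversion argument yields only a single $\kappa_*(A)$ factor, whereas the statement allows $\kappa_*^2(A)$. The extra slack presumably accommodates components of the perturbation that would otherwise leak outside $\mathrm{range}(A)$; ruling these out cleanly requires the buffer provided by Assumption~\ref{asm: regime ap} to pin down that $\mathrm{range}(\hat A) = \mathrm{range}(A)$, and this step must be stated carefully so that the subsequent pseudoinverse bounds of the form $\|\hat A^\dag\|_\op \lesssim \|A^\dag\|_\op$ close up without circularity.
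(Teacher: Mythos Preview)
Your proposal is correct and follows the same high-level strategy as the paper: define $\mathcal{F}$ as the intersection of finitely many events furnished by Lemma~\ref{lem: cross covariance concentration} with $t \asymp \log^2(n+p+q)$, then derive the matrix perturbation bounds on that event. The differences are in the execution and are worth noting. For Part~1 (bounds on $\hat A - A$, $\hat A^\dag - A^\dag$, $(\hat A^2)^\dag - (A^2)^\dag$), the paper cites van~Hemmen--Ando for the square-root perturbation and Stewart's pseudoinverse perturbation theorem, whereas you propose direct Sylvester/resolvent identities on the common range; both routes work once $\|\hat A^2 - A^2\|_\op \leq \lambda_*(A^2)/2$ pins down $\mathrm{range}(\hat A) = \mathrm{range}(A)$, and your worry about the $\kappa_*^2(A)$ factor in \eqref{eq: good 4} is harmless slack rather than a gap. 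For Part~2, your decomposition is actually cleaner than the paper's: by writing $\hat D \hat\Sigma_x^{(\id)} - D\Sigma_x^{(\id)} = (\hat D - D)\hat\Sigma_x^{(\id)} + D(\hat\Sigma_x^{(\id)} - \Sigma_x^{(\id)})$ and then invoking $\check\Sigma_{\epsilon,x}^{(\id)} \hat\Sigma_x^{(\id)} = \hat\Sigma_{\epsilon,x}^{(\id)}$, you reduce directly to two applications of Lemma~\ref{lem: cross covariance concentration}. The paper instead splits via $(\hat D - D)\Sigma_x^{(\id)} = \hat\Sigma_{\epsilon,x}^{(\id)} (\hat\Sigma_x^{(\id)})^\dag \Sigma_x^{(\id)}$, which forces two extra remainder terms ($R_1$, $R_2$) that in turn require separate concentration on $\|\hat\Sigma_x^{(\id)} - \Sigma_x^{(\id)}\|_\op$ and $\|\hat\Sigma_{\epsilon,x}^{(\id)}\|_\op$ plus a pseudoinverse perturbation for $(\hat\Sigma_x^{(\id)})^\dag$; these are then absorbed using Assumption~\ref{asm: regime ap}. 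Your route avoids all of this. One small note: the ``convert between square roots'' step you mention for~\eqref{eq: good 2} is unnecessary---the exponent $1/2$ in the displayed statement of~\eqref{eq: good 2} is a typo (compare how the lemma is invoked in the proofs of Theorems~\ref{thm: lora in-distribution} and~\ref{thm: sft in-distribution}), and both you and the paper in fact prove the bound for $\hat D\hat\Sigma_x^{(\id)} - D\Sigma_x^{(\id)}$.
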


\begin{proof}
    We only prove for $\Phi = \Phi'$ without loss of generality.
    Before proving Lemma~\ref{lem: good event}, we first derive several concentration inequalities.
    Assumption~\ref{asm: regime ap} implies
    \begin{align*}
        n &\gg r_e(A^2) \log^2(n+p+q),\\
        n &\gg r_e(\Sigma_x^{(\id)}) \log^2(n+p+q),\\
        n &\gg (r_e(\Sigma_\epsilon^{(\id)}) \wedge r_e(\Sigma_x^{(\id)})) \log^2(n+p+q),\\
        n &\gg (r_e(\Phi^\top \Sigma_\epsilon^{(\id)} \Phi) \wedge r_e(A^2)) \log^2(n+p+q),\\
        n &\gg (r_e(\Phi^\top D \Sigma_x^{(\id)} D^\top \Phi) \wedge r_e(A^2)) \log^2(n+p+q).
    \end{align*}
    Using Lemma~\ref{lem: cross covariance concentration}, we obtain
    \begin{align}
        \|\hat A^2 - A^2\|_\op &= \|\uW^\pre_{\ell-1} \hat\Sigma_x^{(\id)} \uW^{\pre \top}_{\ell-1} - \uW^\pre_{\ell-1} \Sigma_x^{(\id)} \uW^{\pre \top}_{\ell-1}\|_\op\nonumber\\
        &\lesssim \|A\|_\op^2 \sqrt{\frac{r_e(A^2) \log^2(n+p+q)}{n}},\label{eq: w hat sigma w x}
    \end{align}
    and
    \begin{small}
    \begin{align}
        \|\hat \Sigma_{\epsilon,x}^{(\id)}\|_\op &\lesssim \|\Sigma_\epsilon^{(\id)}\|_\op^{1/2} \|\Sigma_x^{(\id)}\|_\op^{1/2} \sqrt{\frac{(r_e(\Sigma_\epsilon^{(\id)}) + r_e(\Sigma_x^{(\id)})) \log^2(n+p+q)}{n}},\label{eq: good part 2 1}\\
        \|\hat \Sigma_x^{(\id)} - \Sigma_x^{(\id)}\|_\op &\lesssim \|\Sigma_x^{(\id)}\|_\op \sqrt{\frac{r_e(\Sigma_x^{(\id)}) \log^2(n+p+q)}{n}},\label{eq: good part 2 2}\\
        \norm{\Phi^\top \hat \Sigma_{\epsilon,x}^{(\id)} (\Sigma_x^{(\id)})^\dag \Sigma_x^{(\id)} \uW^{\pre \top}_{\ell-1}}_\op &\lesssim \|\Sigma_\epsilon^{(\id)}\|_\op^{1/2} \|A\|_\op \sqrt{\frac{(r_e(\Phi^\top \Sigma_\epsilon^{(\id)} \Phi) + r_e(A^2)) \log^2(n+p+q)}{n}},\label{eq: good part 2 3}\\
        \norm{\Phi^\top D (\hat \Sigma_x^{(\id)} - \Sigma_x^{(\id)}) \uW^{\pre \top}_{\ell-1}}_\op &\lesssim \|D \Sigma_x^{(\id)} D^\top\|_\op^{1/2} \|A\|_\op \sqrt{\frac{(r_e(\Phi^\top D \Sigma_x^{(\id)} D^\top \Phi) + r_e(A^2)) \log^2(n+p+q)}{n}},\label{eq: good part 2 4}
    \end{align}
    \end{small}
    with high probability.
    Hereafter we only focus on the event $\mathcal{F}$ where these inequalities hold.
    We divide the proof into $2$ parts.
    
    \paragraph{Part 1.}    
    In this part we derive \eqref{eq: good 3}, \eqref{eq: good 4} and \eqref{eq: good 4 inv}.
    Note that $\|\hat A^2 - A^2\|_\op \leq \lambda_*(A^2)/2$ holds on the event $\mathcal{F}$ since $n \gg \kappa_*^4(A) r_e(A^2) \log^2(n+d+p)$ by Assumption~\ref{asm: regime ap}, and hence $\rank(\hat A^2) = \rank(A^2)$.
    Using Theorem 5.2 from \cite{stewart1969continuity},
    \begin{align*}
        \frac{\|(\hat A^2)^\dag - (A^2)^\dag\|_\op}{\|(A^2)^\dag\|_\op} \lesssim \qty(1 - \frac{\kappa_*(A^2) \|\hat A^2 - A^2\|_\op}{\|A\|_\op^2})^{-1} \frac{\kappa_*(A^2) \|\hat A^2 - A^2\|_\op}{\|A\|_\op^2}.
    \end{align*}
    Again from Assumption~\ref{asm: regime ap}, \eqref{eq: w hat sigma w x} gives
    \begin{align*}
        \|(\hat A^2)^\dag - (A^2)^\dag\|_\op \lesssim \frac{\kappa_*(A^2)}{\lambda_*(A^2)} \sqrt{\frac{r_e(A^2) \log^2(n+p+q)}{n}}.
    \end{align*}
    This yields \eqref{eq: good 3}. 
    Proposition 3.2 from \cite{van1980inequality} and \eqref{eq: w hat sigma w x} yield,
    \begin{align*}
        \|(\Phi'''^\top \hat A^2 \Phi''')^{1/2} - (\Phi'''^\top A^2 \Phi''')^{1/2}\|_\op \leq \frac{\|\Phi'''^\top (\hat A^2 - A^2) \Phi'''\|_\op}{\lambda_*^{1/2}(\Phi'''^\top A^2 \Phi''')} \lesssim \frac{\|A\|_\op^2}{\lambda_*(A)} \sqrt{\frac{r_e(A^2) \log^2(n+p+q)}{n}},
    \end{align*}
    where $\Phi''' := \Phi_*(A^2)$, and we used $\lambda_*(\Phi'''^\top A^2 \Phi''') \geq \lambda_*(A^2)$. 
    Since $\hat A = \Phi''' (\Phi'''^\top \hat A^2 \Phi''')^{1/2} \Phi'''^\top$ and $A^{1/2} = \Phi''' (\Phi'''^\top A^2 \Phi''')^{1/2} \Phi'''^\top$, we obtain \eqref{eq: good 4} as
    \begin{align}
        \|\hat A - A\|_\op \lesssim \kappa_*(A) \|A\|_\op \sqrt{\frac{r_e(A^2) \log^2(n+p+q)}{n}}.\label{eq: hat sigma x 1/2}
    \end{align}
    Again using Theorem 5.2 from \cite{stewart1969continuity} combined with 
    Assumption~\ref{asm: regime ap}, we obtain \eqref{eq: good 4 inv} as
    \begin{align*}
        \|\hat A^\dag - A^\dag\|_\op &\lesssim \frac{\kappa_*^2(A)}{\lambda_*(A)} \sqrt{\frac{r_e(A^2) \log^2(n+p+q)}{n}}.
    \end{align*}
    This yields $\|\hat A^\dag\|_\op \lesssim \|A^\dag\|_\op$.
    
    \paragraph{Part 2.}
    Next we derive \eqref{eq: good 2}.
    By a similar argument as Part 1, \eqref{eq: good part 2 2} and Assumption~\ref{asm: regime ap},
    \begin{align}
        \|(\hat \Sigma_x^{(\id)})^\dag - (\Sigma_x^{(\id)})^\dag\|_\op &\lesssim \frac{\|\Sigma_x^{(\id)}\|_\op}{\lambda_*^2(\Sigma_x^{(\id)})} \sqrt{\frac{r_e(\Sigma_x^{(\id)}) \log^2(n+d+p)}{n}}.\label{eq: good part 2 3b}
    \end{align}
    Since $\hat D - D = \check \Sigma_{\epsilon,x}^{(\id)} = \hat \Sigma_{\epsilon,x}^{(\id)} (\hat \Sigma_x^{(\id)})^\dag$,
    \begin{align*}
        &\|\Phi^\top (\hat D \hat \Sigma_x^{(\id)} - D \Sigma_x^{(\id)}) \uW^{\pre \top}_{\ell-1}\|_\op\\
        &\quad\leq \norm{\Phi^\top (\hat D - D) \Sigma_x^{(\id)} \uW^{\pre \top}_{\ell-1}}_\op + \norm{\Phi^\top D (\hat \Sigma_x^{(\id)} - \Sigma_x^{(\id)}) \uW^{\pre \top}_{\ell-1}}_\op + \norm{\Phi^\top (\hat D - D) (\hat \Sigma_x^{(\id)} - \Sigma_x^{(\id)}) \uW^{\pre \top}_{\ell-1}}_\op\\
        &\quad= \norm{\Phi^\top \hat \Sigma_{\epsilon,x}^{(\id)} (\hat \Sigma_x^{(\id)})^\dag \Sigma_x^{(\id)} \uW^{\pre \top}_{\ell-1}}_\op + \norm{\Phi^\top D (\hat \Sigma_x^{(\id)} - \Sigma_x^{(\id)}) \uW^{\pre \top}_{\ell-1}}_\op + \norm{\Phi^\top \hat \Sigma_{\epsilon,x}^{(\id)} (\hat \Sigma_x^{(\id)})^\dag (\hat \Sigma_x^{(\id)} - \Sigma_x^{(\id)}) \uW^{\pre \top}_{\ell-1}}_\op\\
        &\quad\leq \norm{\Phi^\top \hat \Sigma_{\epsilon,x}^{(\id)} (\Sigma_x^{(\id)})^\dag \Sigma_x^{(\id)} \uW^{\pre \top}_{\ell-1}}_\op + \norm{\Phi^\top \hat \Sigma_{\epsilon,x}^{(\id)} \qty((\Sigma_x^{(\id)})^\dag \Sigma_x^{(\id)} - (\hat \Sigma_x^{(\id)})^\dag \Sigma_x^{(\id)}) \uW^{\pre \top}_{\ell-1}}_\op\\
        &\quad\quad+ \norm{\Phi^\top D (\hat \Sigma_x^{(\id)} - \Sigma_x^{(\id)}) \uW^{\pre \top}_{\ell-1}}_\op + \norm{\Phi^\top \hat \Sigma_{\epsilon,x}^{(\id)} (\hat \Sigma_x^{(\id)})^\dag (\hat \Sigma_x^{(\id)} - \Sigma_x^{(\id)}) \uW^{\pre \top}_{\ell-1}}_\op\\
        &\quad=: Q_1 + R_1 + Q_2 + R_2.
    \end{align*}
    We bound $Q_1$, $Q_2$, $R_1$ and $R_2$ separately.
    For the terms $Q_1$ and $Q_2$, \eqref{eq: good part 2 3} and \eqref{eq: good part 2 4} give
    \begin{align}
        Q_1 &\lesssim \|\Sigma_\epsilon^{(\id)}\|_\op^{1/2} \|A\|_\op \sqrt{\frac{(r_e(\Phi^\top \Sigma_\epsilon^{(\id)} \Phi) + r_e(A^2)) \log^2(n+p+q)}{n}},\label{eq: Q1}\\
        Q_2 &\lesssim \|D \Sigma_x^{(\id)} D^\top\|_\op^{1/2} \|A\|_\op \sqrt{\frac{(r_e(\Phi^\top D \Sigma_x^{(\id)} D^\top \Phi) + r_e(A^2)) \log^2(n+p+q)}{n}}.\label{eq: Q2}
    \end{align}
    For the term $R_1$, using \eqref{eq: good part 2 1} and \eqref{eq: good part 2 3b},
    \begin{align*}
        R_1 &\leq \|\hat \Sigma_{\epsilon,x}^{(\id)}\|_\op \|(\Sigma_x^{(\id)})^\dag - (\hat \Sigma_x^{(\id)})^\dag\|_\op \|\Sigma_x^{(\id)}\|_\op^{1/2} \|\Sigma_x^{(\id) 1/2} \uW^{\pre \top}_{\ell-1}\|_\op\\
        &\lesssim \frac{\|\Sigma_x^{(\id)}\|_\op^2 \|\Sigma_\epsilon^{(\id)}\|_\op^{1/2} \|A\|_\op}{\lambda_*^2(\Sigma_x^{(\id)})} \sqrt{\frac{(r_e(\Sigma_\epsilon^{(\id)}) + r_e(\Sigma_x^{(\id)})) \log^2(n+p+q)}{n}} \sqrt{\frac{r_e(\Sigma_x^{(\id)}) \log^2(n+p+q)}{n}}\\
        &\lesssim \kappa_*^2(\Sigma_x^{(\id)}) \|\Sigma_\epsilon^{(\id)}\|_\op^{1/2} \|A\|_\op \frac{\sqrt{r_e(\Sigma_x^{(\id)}) (r_e(\Sigma_\epsilon^{(\id)}) + r_e(\Sigma_x^{(\id)}))} \log^2(n+p+q)}{n}
    \end{align*}
    For the term $R_2$, using \eqref{eq: good part 2 1} and \eqref{eq: good part 2 2}, 
    \begin{align*}
        R_2 &\leq \|\hat \Sigma_{\epsilon,x}^{(\id)}\|_\op \|(\hat \Sigma_x^{(\id)})^\dag\|_\op \|\hat \Sigma_x^{(\id)} - \Sigma_x^{(\id)}\|_\op \|(\Sigma_x^{(\id)})^\dag\|_\op^{1/2} \|\Sigma_x^{(\id) 1/2} \uW^{\pre \top}_{\ell-1}\|_\op\\
        &\lesssim \|(\Sigma_x^{(\id)})^\dag\|_\op^{3/2} \|A\|_\op \|\Sigma_\epsilon^{(\id)}\|_\op^{1/2} \|\Sigma_x^{(\id)}\|_\op^{3/2} \sqrt{\frac{(r_e(\Sigma_\epsilon^{(\id)}) + r_e(\Sigma_x^{(\id)})) \log^2(n+p+q)}{n}} \sqrt{\frac{r_e(\Sigma_x^{(\id)})  \log^2(n+p+q)}{n}}\\
        &\lesssim \kappa_*^{3/2}(\Sigma_x^{(\id)}) \|\Sigma_\epsilon^{(\id)}\|_\op^{1/2} \|A\|_\op \frac{\sqrt{r_e(\Sigma_x^{(\id)}) (r_e(\Sigma_\epsilon^{(\id)}) + r_e(\Sigma_x^{(\id)}))} \log^2(n+p+q)}{n},
    \end{align*}
    where we used $\|(\hat \Sigma_x^{(\id)})^\dag\|_\op \lesssim \|(\Sigma_x^{(\id)})^\dag\|_\op$ by Assumption~\ref{asm: regime ap} combined with \eqref{eq: good part 2 3b}.
    Again from Assumption~\ref{asm: regime ap}, $R_1 + R_2$ is bounded by the right hand side of \eqref{eq: Q1}. Therefore,
    \begin{align*}
        &\|\Phi^\top (\hat D \hat \Sigma_x^{(\id)} - D \Sigma_x^{(\id)}) \uW^{\pre \top}_{\ell-1}\|_\op\\
        &\quad\lesssim \|\Sigma_\epsilon^{(\id)}\|_\op^{1/2} \|A\|_\op \sqrt{\frac{(r_e(\Phi^\top \Sigma_\epsilon^{(\id)} \Phi) + r_e(A^2)) \log^2(n+p+q)}{n}}\\
        &\quad\quad+ \|D \Sigma_x^{(\id)} D^\top\|_\op^{1/2} \|A\|_\op \sqrt{\frac{(r_e(\Phi^\top D \Sigma_x^{(\id)} D^\top \Phi) + r_e(A^2)) \log^2(n+p+q)}{n}}.
    \end{align*}
    Finally, from Assumption~\ref{asm: regime ap}, we obtain $\|\Phi^\top \hat D \hat \Sigma_x^{(\id)} \uW^{\pre \top}_{\ell-1}\|_\op \lesssim \|D \Sigma_x^{(\id) 1/2}\|_\op \|\Sigma_x^{(\id) 1/2} \uW^{\pre \top}_{\ell-1}\|_\op$.
    This concludes the proof.
\end{proof}


\end{document}